\let\csname eqalign\endcsname\relax
\let\csname equation*\endcsname\relax
\let\csname endequation*\endcsname\relax
\def\e{\mbox{e}}
\def\d{\text{d}}
\def\e{\text{e}}
\def\E{\mathbb{E}}
\newtheorem{theorem}{Theorem}[section]
\newtheorem{example}{Example}[section]
\newtheorem{corollary}{Corollary}[section]
\newtheorem{assumption}{Assumption}[section]
\newtheorem{definition}{Definition}[section]
\begin{document}
\title[Wasserstein-distance approach for reconstructing jump-diffusion
  processes]{An efficient Wasserstein-distance approach for reconstructing
  jump-diffusion processes using parameterized neural networks}

\author{Mingtao Xia$^{1}$\footnote{corresponding author}, Xiangting Li$^2$, Qijing Shen$^3$, Tom Chou$^4$}

\address{{\small $^1$ Courant Institute of Mathematical Sciences, New York University, New York, NY
10012, USA}\\ {\small $^2$ Department of Computational Medicine, UCLA, Los Angeles, CA
90095, USA}\\ {\small $^3$ Nuffield Department of Medicine,
    University of Oxford, Oxford OX2 6HW, UK}\\
  {\small $^4$ Department of Mathematics, UCLA, Los Angeles, CA
90095, USA}}
\ead{xiamingtao@nyu.edu, xiangting.li@ucla.edu,qijing.shen@ndm.ox.ac.uk, tomchou@ucla.edu}
\date{\today}

\renewcommand{\thefootnote}{\fnsymbol{footnote}}





\begin{abstract}
We analyze the Wasserstein distance ($W$-distance) between two
probability distributions associated with two multidimensional
jump-diffusion processes. Specifically, we analyze a temporally
decoupled squared $W_2$-distance, which provides both upper and lower
bounds associated with the discrepancies in the drift, diffusion, and
jump amplitude functions between the two jump-diffusion processes.
Then, we propose a temporally decoupled squared $W_2$-distance method
for efficiently reconstructing unknown jump-diffusion processes from
data using parameterized neural networks. We further show its
performance can be enhanced by utilizing prior information on the
drift function of the jump-diffusion process.  The effectiveness of
our proposed reconstruction method is demonstrated across several
examples and applications.


\end{abstract}

\noindent{\it Keywords}: Jump-diffusion process, Inverse problem,
Wasserstein distance, Neural Network

\section{Introduction}
Jump-diffusion processes are widely used across many disciplines such
as finance \cite{merton1976option, jang2007jump, maekawa2008jump},
biology \cite{gao2022data}, epidemiology \cite{tesfay2021dynamics},
and so on. A $d$-dimensional jump-diffusion process may be written in
the following form \cite{mehri2019stochastic}:
\begin{equation}
    \d \bm{X}(t) = \bm{f}(\bm{X}(t), t)\d t + \bm{\sigma}(\bm{X}(t),
    t)\d \bm{B}_t + \int_U\bm{\beta}(\bm{X}(t), \xi, t)\tilde{N}( \d
    t, \nu(\d\xi))
    \label{model_equation}
\end{equation}
%
Here, $\bm{X}(t)\in\mathbb{R}^d$ is a $d$-dimensional jump-diffusion
process and $\bm{B}_t\coloneqq(B_{1, t},...,B_{m, t})$ is a standard
$m$-dimensional white noise; $\tilde{N}$ is a compensated Poisson
process of intensity $\nu(\d\xi) \d t$ independent of $\bm{B}_t$:
\begin{equation}
    \tilde{N}(\d t, \nu(\d\xi)) \coloneqq N(\nu(\d\xi), \d t) - \nu(\d\xi) \d t,
\end{equation} 
where $N(\nu(\d\xi), \d t)$ is a Poisson process with intensity
$\nu(\d\xi) \d t$ and $\nu(\d\xi)$ is a measure defined on
$U\subseteq\mathbb{R}$, the measure space of the Poisson
process. $N(A, B)$ and $N(C, D)$ are independent if $(A\times B)\cap
(C\times D)=\emptyset, A,C\subseteq \mathcal{B}(U)$ and $B, D\subseteq
\mathcal{B}([0, T])$. $\mathcal{B}(U)$ and $\mathcal{B}([0, T])$
denote the $\sigma$-algebra associated with $U$ and $[0, T]$,
respectively.  The drift, diffusion, and jump functions are defined by
\begin{equation}
\begin{aligned}
\bm{f}\coloneqq & \big(f_1(\bm{X}, t),...,f_d(\bm{X}, t)\big)
  \in C(\mathbb{R}^d\times[0, T], \mathbb{R}^d), \\
  \bm{\sigma}\coloneqq & \big(\sigma_{i, j}(\bm{X}, t)\big)
  \in C(\mathbb{R}^d\times[0, T], \mathbb{R}^{d\times m}),\\
  \bm{\beta}\coloneqq & \big(\beta_{i}(\bm{X}, \xi, t)\big)
  \in C(\mathbb{R}^d\times U\times[0, T], \mathbb{R}^d),
\end{aligned}
\end{equation}
respectively. Specifically, if $U=\{1,...,n\}$, then
Eq.~\eqref{model_equation} becomes
\begin{equation}
  \d X_i(t) = f_i(\bm{X}(t), t)\d t
  + \sum_{j=1}^m \sigma_{i, j}(\bm{X}(t), t)\d (B_j)_t
  + \sum_{s=1}^n\beta_{i, s}(\bm{X}(t), s, t)\tilde{N}_{s}(\d t, \nu(s))
    \label{model_equation_discrete}
\end{equation}
for $i=1,..,d$. Here, each $\tilde{N}_s$ is a compensated Poisson
process with intensity $\nu(s)\d t$. $\tilde{N}_{s_1}$ and
$\tilde{N}_{s_2}$ are independent if $s_1\neq s_2$. When
$\bm{\beta}\equiv 0$, Eq.~\eqref{model_equation} reduces to the pure
diffusion process.

In this paper, we study the problem of reconstructing a jump-diffusion
process Eq.~\eqref{model_equation} or Eq.~\eqref{model_equation_discrete}
from observed data $\bm{X}(t)$ at different time points by using a
different jump-diffusion process

\begin{eqnarray}
\d \hat{\bm{X}}(t) = \hat{\bm{f}}(\bm{X}(t), t)\d t +
\hat{\bm{\sigma}}(\hat{\bm{X}}(t), t)\d \hat{\bm{B}}_t +
\!\int_U\hat{\bm{\beta}}(\hat{\bm{X}}(t), \xi, t)\hat{N}\big(\d t,
\nu(\d\xi)\big)
\label{approximate_equation}
\end{eqnarray}
to approximate Eq.~\eqref{model_equation}. In
Eq.~\eqref{approximate_equation}, $\hat{\bm{B}}_t$ is a
$m$-dimensional standard Brownian motion that is independent of
$\bm{B}_t$ and $\tilde{N}$ in Eq.~\eqref{model_equation}; $\hat{N}(\d
t, \nu(\d\xi))$ is a compensated Poisson process of intensity $\d
\nu(\xi) \d t$ and independent of $\bm{B}_t$, $\tilde{N}$ in
Eq.~\eqref{model_equation} as well as $\hat{\bm{B}}_t$.  Specifically,
we are interested in reconstructing the jump-diffusion process
Eq.~\eqref{model_equation} with little or no prior information on the
drift, diffusion, and jump functions $\bm{f}, \bm{\sigma}$, and
$\bm{\beta}$. To reconstruct or approximate Eq.~\eqref{model_equation}
using Eq.~\eqref{approximate_equation}, we wish to find small errors
in the drift, diffusion, and jump functions, \textit{i.e.}, to find
$\hat{\bm{f}}, \hat{\bm{\sigma}}$, and $\hat{\bm{\beta}}$ such that
$\bm{f}-\hat{\bm{f}}$, $\bm{\sigma}-\hat{\bm{\sigma}}$, and
$\bm{\beta}-\hat{\bm{\beta}}$ are small.

Thus far, most studies related to jump-diffusion processes have
focused on the forward-type problem of efficient simulation of a
jump-diffusion process given coefficients \cite{casella2011exact,
  metwally2002using}. There are also several studies on the
statistical properties of jump-diffusion processes such as their first
passage times \cite{kou2003first,zhang2009first}. While there has been
some research into the inverse problem of reconstructing a general
pure diffusion process, there has been little work on reconstructing
unknown jump-diffusion processes from sample trajectories although two
main strategies have been proposed.  First, regression methods are
applied to determine unknown parameters if the forms of drift,
diffusion, and jump functions ($\bm{f}$, $\bm{\sigma}$ and
$\bm{\beta}$ in Eq.~\eqref{model_equation}) are known. Unknown
parameters in these functions can then be determined from data
\cite{gorjao2019analysis,ramezani1998maximum}. Another strategy for
reconstructing a jump-diffusion process is to calculate the empirical
probability density function $p(\bm{X}, t)$ from observation data
$\bm{X}(t)$ and then reconstruct the integrodifferential equation
satisfied by $p(\bm{X}, t)$ \cite{gorjao2023jumpdiff}. Yet, this
method requires a large number of observations at different time
points to obtain a good empirical approximation of the density
function $p(\bm{X}, t)$.  Recently, a
Wasserstein-generative-adversarial-network(WGAN)-based method was
proposed to reconstruct the jump-diffusion process
Eq.~\eqref{model_equation} \cite{gao2022data}. However, training a
WGAN can be intricate and computationally expensive.

Recent advancements in machine learning make it possible to use
parameterized neural networks (NNs) for representing $\hat{\bm{f}},
\hat{\bm{\sigma}}$, and $\hat{\bm{\beta}}$ in
Eq.~\eqref{approximate_equation} which approximate $\bm{f},
\bm{\sigma}$, and $\bm{\beta}$ in Eq.~\eqref{model_equation}. For
example, a recent \texttt{torchsde} package in Python
\cite{li2020scalable} models pure diffusion processes (SDEs with
Brownian noise) by using parameterized neural networks.  These methods
have been used in the reconstruction of diffusion processes. For
example, in \cite{tzen2019neural}, a deep Gaussian latent model has
been applied for reconstructing a pure diffusion process;
\cite{tong2022learning} uses the neural SDE model to reconstruct a
stochastic differential equation with Brownian noise by minimizing a
KL-divergence-based loss function.  In
\cite{kidger2021neural,chen2023learning}, generative adversarial
networks were used to reconstruct general stochastic differential
equations including a Brownian motion noise term without requiring
prior knowledge of the specific forms of the drift or diffusion
functions.

Another recent work analyzes the upper bound for a smooth Wasserstein
distance between two distributions associated with two 1D
jump-diffusion processes at a given time \cite{breton2024wasserstein}.
Since the $W$-distance can effectively measure discrepancies between
probability measures over a metric space
\cite{villani2009optimal,zheng2020nonparametric}, an efficient
squared-Wasserstein-distance-based method for reconstructing pure
diffusion processes from data, without the need to specify forms for
the drift and diffusion, was recently proposed \cite{xia2023a}.
General jump-diffusion processes are distinct from pure diffusion
processes because the trajectories of jump-diffusion processes are
discontinuous.  Thus, it remains unclear whether the Wasserstein
distance can also be employed to reconstruct an unknown jump-diffusion
process from data.

\subsection{Contribution}
In this paper, we analyze the $W$-distance between two probability
distributions associated with two multidimensional jump-diffusion
processes Eqs.~\eqref{model_equation} and
~\eqref{approximate_equation}.  We then show that a temporally
decoupled squared Wasserstein distance can serve as effective
\textbf{upper and lower error bounds} on the errors in the drift,
diffusion, and jump functions $\bm{f}-\hat{\bm{f}}$,
$\bm{\sigma}-\hat{\bm{\sigma}}$, and $\bm{\beta}-\hat{\bm{\beta}}$ in
Eqs.~\eqref{model_equation} and~\eqref{approximate_equation},
respectively.  This temporally decoupled squared Wasserstein distance
can be effectively evaluated using finite-sample observations at
discrete time points. Thus, we propose using this temporally decoupled
squared $W_2$-distance to reconstruct general jump-diffusion processes
with the help of parameterized neural networks. Furthermore, we
explore how prior information on the drift function enhances the
performance of our temporally decoupled squared Wasserstein distance
method.  Our results greatly extend the results in \cite{xia2023a}
(reconstructing 1D pure diffusion processes) to allow for the
reconstruction of multidimensional jump-diffusion
processes. Specifically, we

\begin{enumerate}
\item[1.] prove that the $W$-distance is a lower bound for the errors
  $\bm{f}-\hat{\bm{f}}, \bm{\sigma}-\hat{\bm{\sigma}}$, and
  $\bm{\beta}-\hat{\bm{\beta}}$. Thus, minimizing the $W$-distance is
  necessary for reconstructing the multidimensional jump-diffusion
  process Eq.~\eqref{model_equation}.
\item[2.] analyze a temporally decoupled squared $W$-distance defined
  in \cite{xia2023a} and show that it can be efficiently evaluated by
  finite-sample empirical distributions. Thus, it is suitable to serve
  as a loss function to minimize for reconstructing
  Eq.~\eqref{model_equation} using parameterized neural networks.
\item[3.] conduct numerical experiments to demonstrate the efficacy of
  using the temporally decoupled squared Wasserstein distance to
  reconstruct jump-diffusion processes. Our temporally decoupled
  squared Wasserstein method performs better than some other benchmark
  methods. Additionally, we propose incorporating prior information on
  the drift function, which greatly improves the accuracy of
  reconstructed diffusion and jump functions.
\end{enumerate}

\subsection{Organization}
In Section~\ref{section2}, we analyze how the $W$-distance between the
probability measures associated with solutions to two jump-diffusion
processes Eqs.~\eqref{model_equation} and \eqref{approximate_equation}
can be a lower bound of the errors in the reconstructed drift,
diffusion, and jump functions $\bm{f}-\hat{\bm{f}}$,
$\bm{\sigma}-\hat{\bm{\sigma}}$, and $\bm{\beta}-\hat{\bm{\beta}}$.
In Section \ref{section3}, we analyze a temporally decoupled squared
$W_2$-distance and show how it can be more effectively evaluated than
the squared $W_2$ distance. Specifically, the temporally decoupled
squared $W_2$ distance is smaller than the $W_2$ distance analyzed in
Section~\ref{section2} while providing an upper bound of the errors in
the reconstructed drift, diffusion, and jump functions. Thus,
Sections~\ref{section2} and ~\ref{section3} together show that our
temporally decoupled squared $W_2$ distance provides both upper and
lower error bounds. In Section \ref{section4}, numerical experiments
are carried out to compare our proposed jump-diffusion process
reconstruction methods with other methods for reconstructing different
jump-diffusion processes. Additionally, we show how prior information
on the drift function of the ground truth jump-diffusion process
Eq.~\eqref{model_equation} improves the reconstruction of the
diffusion and jump functions in Eq.~\eqref{model_equation}.  In
Section~\ref{summary}, we summarize our proposed jump-diffusion
process reconstruction approach and suggest potential future
directions.

\section{The $W$-distance between the probability measures associated with
 the jump-diffusion processes in Eqs.~\eqref{model_equation} and~\eqref{approximate_equation}}
\label{section2}
In this section, we shall show how the $W$-distance between the
probability measures associated with the two jump-diffusion processes
Eqs.~\eqref{model_equation} and~\eqref{approximate_equation} can serve
as a lower bound for the errors $\bm{f}-\hat{\bm{f}}$,
$\bm{\sigma}-\hat{\bm{\sigma}}$, and $\bm{\beta}-\hat{\bm{\beta}}$.
First, we specify the assumptions on the jump-diffusion processes in
Eqs.~\eqref{model_equation} and~\eqref{approximate_equation}.

\begin{assumption}
\label{assumptions}
\rm We assume that the jump-diffusion processes defined in
Eq.~\eqref{model_equation} 
satisfy the
following conditions:

\begin{enumerate}
\item For each non-increasing sequence $A_i\subseteq U$ converging to
  the empty set $\emptyset$, $\E\big[|\tilde{N}(t,
    A)|^2\big]\rightarrow 0, \forall t\geq 0$.
\item $\tilde{N}(t, A)$ is a c$\text{\`a}$dl$\text{\`a}$g martingale
  for all $A\subseteq U, t>0$, and $\E[|\tilde{N}(t,U)|^2]<\infty$.
\item $\tilde{N}\big(\d t, \nu(\d\xi)\big)$ is an orthogonal
  martingale measure with intensity $\d t\cdot\nu(\d\xi)$,
  \textit{i.e.}, for any $A, B\subseteq U$ and $t_1<t_2, t_3\leq t_4$
  and any $\beta_1(\xi, t), \beta_2(\xi, t)\in L^2([0, T]\times U)$
  (the measure on $U$ is $\nu$), we have
  \begin{eqnarray}
    \hspace{-1cm}\E\bigg[\!\medint\int_{t_1}^{t_2}\!\!\!
      \medint\int_A \beta_1(\xi, t)
        \tilde{N}\big(\d t, \nu(\d\xi)\big)
        \medint\int_{t_3}^{t_4}\!\!\!\medint\int_B
        \beta_2(\xi, t) \tilde{N}\big(\d t,
        \nu(\d\xi)\big)\bigg] = \!\!\!\!\!\medint\int\limits_{[t_1, t_2]\cap [t_3,
          t_4]}\!\!\!\!\medint\int_{A\cap B}\beta_1(\xi, t)\beta_2(
      \xi, t)\nu(\text{d}\xi)\d{t}.
  \end{eqnarray}
\item Trajectories generated from both jump-diffusion processes,
  Eqs.~\eqref{model_equation} and \eqref{approximate_equation}, reside
  in the space $L^2([0, T], \mathbb{R}^d)$.
\item The drift, diffusion, and jump functions are all uniformly
  Lipschtiz continuous, $\textit{i.e.}$, there exists three positive
  constants $\overline{f}, \overline{\sigma}, \overline{\beta}<\infty$
  such that $\forall \bm{X}^1=(X^1_1,...,X^1_d),\forall
  \bm{X}^2=(X^2_1,...,X^2_d)\in\mathbb{R}^d$,

\begin{equation}
  \begin{aligned}
  \big|f_i(\bm{X}^1, t) - f_i(\bm{X}^2, t)\big| &
  \leq \frac{\overline{f}}{d}\sum_{i=1}^d \big|X_i^1-X_i^2\big|, \,\,\,\forall i=1,...,d \\
  \big|\sigma_{i, j}(\bm{X}^1, t) - \sigma_{i, j}(\bm{X}^2, t)\big| & \leq
  \frac{\overline{\sigma}}{d}\sum_{i=1}^d \big|X_i^1-X_i^2\big|,
  \,\,\forall i=1,...,d, \,\,\forall j=1,...,m, \\
  \big|\beta_i(\bm{X}^1, \xi, t) - \beta_i(\bm{X}^2, \xi, t)\big| & \leq
  \frac{\overline{\beta}}{d}\sum_{i=1}^d \big|X_i^1-X_i^2\big|, \,\,\,\forall i=1,...,d.
  \end{aligned}
\label{lipschtiz}
\end{equation}
\end{enumerate}
Furthermore, we assume that conditions (i)-(iv) also hold for the
compensated Poisson process $\hat{N}$ in
Eq.~\eqref{approximate_equation}, and that condition (v) holds for the
drift, diffusion, and jump functions in
Eq.~\eqref{approximate_equation}.
\end{assumption}

Now consider the $W$-distance between the distributions associated
with solutions generated from the target jump-diffusion process
Eq.~\eqref{model_equation} and the approximate jump-diffusion process
Eq.~\eqref{approximate_equation}, as defined below.

\begin{definition} 
\rm 
\label{def:W2}
For two $d$-dimensional jump-diffusion processes

\begin{equation}
\bm{X}(t)=(X_1(t),...,X_d(t)), \,\,\,\hat{\bm{X}}(t)
= (\hat{{X}}_1(t),...,\hat{X}_d(t)),\,\, t\in[0, T],
\label{sde_dimension}
\end{equation}
in the separable space $\big(L^2([0, T]; \mathbb{R}^d),
\|\cdot\|\big)$ with two associated probability distributions $\mu,
\hat{\mu}$, respectively, the $W_p$-distance $W_p(\mu, \hat{\mu})$ for
$1\leq p\leq 2$ is defined as
\begin{equation}
W_p(\mu, \hat{\mu}) \coloneqq \inf_{\pi(\mu, \hat \mu)}
\E_{(\bm{X}, \hat{\bm{X}})\sim \pi(\mu, \hat \mu)}\big[\|{\bm{X}}
  - \hat{{\bm{X}}}\|^{p}\big]^{\frac{1}{p}}.
\label{pidef}
\end{equation}
In Eq.~\eqref{pidef}, the norm $\|\cdot\|$ is defined as
$\|\bm{X}\|\coloneqq \Big(\int_0^T \sum_{i=1}^d |X_i(t)|^2\d
t\Big)^{\frac{1}{2}}$ and $\pi(\mu, \hat \mu)$ iterates over all
\textit{coupled} distributions of $\bm{X}(t), \hat{\bm{X}}(t)$,
defined by the condition

\begin{equation}
\begin{cases}
  {\bm{P}}_{\pi(\mu, \hat \mu)}\left(A \times L^2([0, T];
  \mathbb{R}^d)\right) ={\bm{P}}_{\mu}(A),\\
         {\bm{P}}_{\pi(\mu, \hat \mu)}\left(L^2([0, T]; \mathbb{R}^d)\times A\right)
         = {\bm{P}}_{\hat \mu}(A), 
\end{cases}\forall A\in \mathcal{B}\Big(L^2([0, T]; \mathbb{R}^d)\Big),
\label{pi_def}
\end{equation}
where $\mathcal{B}\Big(L^2([0, T]; \mathbb{R}^d)\Big)$ denotes the
Borel $\sigma$-algebra associated with the space of $d$-dimensional
functions in $L^2([0, T]; \mathbb{R}^d)$.
\end{definition}

To prove that $W_p(\mu, \hat{\mu})$ defined in Eq.~\eqref{pidef} is a
lower bound for the errors in the drift, diffusion, and jump functions
$\bm{f}-\hat{\bm{f}}$, $\bm{\sigma}-\hat{\bm{\sigma}}$, and
$\bm{\beta}-\hat{\bm{\beta}}$, we first prove the following theorem.

\begin{theorem}
\rm 
\label{theorem1}
     Suppose $\bm{X}(t)$ and $\hat{\bm{X}}(t)$ are two $d$-dimensional
     jump-diffusion processes that are determined by
     Eq.~\eqref{model_equation} and Eq.~\eqref{approximate_equation}. We
     denote
\begin{equation}
\d \tilde{\bm{X}}(t) = \hat{\bm{f}}\big(\tilde{\bm{X}}(t), t\big)
+ \hat{\bm{\sigma}}\big(\tilde{\bm{X}}(t), t\big)\d \bm{B}_t
+\! \int_U\hat{\bm{\beta}}
\big(\tilde{\bm{X}}(t), \xi, t\big)\tilde{N}\big(\d t, \nu(\d\xi)\big)
\label{tilde_equation}
\end{equation}
and assume that

\begin{equation}
\int_0^t \big( X_i(s^-) - \tilde{X}_i(s^-)\big)
\big(\sigma_{i, j} - \hat{\sigma}_{i, j}\big)\d B_{j, t},\,\,\,\int_0^t
\!\int_U \big(X_i(s^-) - \tilde{X}_i(s^-)\big)\big(\beta_{i} -
\hat{\beta}_{i}\big)\d \tilde{N}\big(\d t, \nu(\d\xi)\big),
\end{equation}
are martingales for all $i, j$. 
Then, the following inequality holds:

\begin{equation}
            \E\Big[\big|\bm{X}(t) -
              \tilde{\bm{X}}(t)\big|_2^{2}\Big]\leq
            \E\big[H(T)|\bm{X}(0)\big]
            \exp\Big(\big(2\overline{f}+1+(2\overline{\sigma} +1)m
            +(2\overline{\beta}+1)\nu(U)\big)T\Big),
            \label{pleq2}
\end{equation}
where $|\cdot|_2$ denotes the $2$-norm of a $d$-dimensional vector,
$\bm{X}(0)$ is the initial condition, and $H(t)$ is defined as
\begin{equation}
\begin{aligned}
  H(t) \coloneqq & \E\bigg[\sum_{i=1}^d\int_0^t\big(f_{i}(\bm{X}(s^-), s^-)
    - \hat{f}_{i}(\bm{X}(s^-), s^-)\big)^2\d s\bigg]\\
\: & \quad + \E\bigg[\sum_{i=1}^d\int_0^t\sum_{j=1}^m
  \big(\sigma_{i, j}(\bm{X}(s^-), s^-)
  - \hat{\sigma}_{i, j}(\bm{X}(s^-), s^-)\big)^2\d s\bigg]\\
\: & \qquad + \E\bigg[\sum_{i=1}^d\int_0^t\int_U\big(\beta_{i}(\bm{X}(s^-), \xi, s^-)
  - \hat{\beta}_{i}(\bm{X}(s^-), \xi, s^-)\big)^2\nu(\d\xi)\d s\bigg]
\end{aligned}
\label{h_define}
\end{equation}
\end{theorem}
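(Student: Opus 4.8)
The plan is to derive a Gr\"onwall-type estimate for the error process $\bm{e}(t)\coloneqq \bm{X}(t)-\tilde{\bm{X}}(t)$, crucially exploiting that $\tilde{\bm{X}}$ in Eq.~\eqref{tilde_equation} is driven by the \emph{same} Brownian motion $\bm{B}_t$ and the \emph{same} compensated Poisson measure $\tilde{N}$ as $\bm{X}$ in Eq.~\eqref{model_equation}, and is started from the same initial condition so that $\bm{e}(0)=0$. Subtracting the two equations, the $i$-th component solves
\begin{equation}
\d e_i(t) = \big(f_i(\bm{X},t)-\hat{f}_i(\tilde{\bm{X}},t)\big)\d t + \sum_{j=1}^m\big(\sigma_{i,j}(\bm{X},t)-\hat{\sigma}_{i,j}(\tilde{\bm{X}},t)\big)\d B_{j,t} + \int_U\big(\beta_i(\bm{X},\xi,t)-\hat{\beta}_i(\tilde{\bm{X}},\xi,t)\big)\tilde{N}(\d t,\nu(\d\xi)).
\end{equation}

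I would then apply the It\^o formula for semimartingales with jumps to $|\bm{e}(t)|_2^2=\sum_{i=1}^d e_i(t)^2$. This produces four kinds of terms: (a) a drift term $2\sum_i e_i(s^-)\big(f_i(\bm{X},s^-)-\hat{f}_i(\tilde{\bm{X}},s^-)\big)$; (b) the continuous quadratic-variation term $\sum_{i,j}\big(\sigma_{i,j}(\bm{X},s^-)-\hat{\sigma}_{i,j}(\tilde{\bm{X}},s^-)\big)^2$; (c) stochastic integrals against $\d B_{j,t}$ and against $\tilde{N}$, which are martingales by hypothesis and hence vanish in expectation; and (d) the pure-jump sum $\sum_{s\le t}(\Delta e_i(s))^2$, which equals $\int_0^t\!\int_U\big(\beta_i(\bm{X},\xi,s^-)-\hat{\beta}_i(\tilde{\bm{X}},\xi,s^-)\big)^2 N(\d s,\nu(\d\xi))$. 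Taking expectations and using the orthogonal-martingale-measure property of Assumption~\ref{assumptions}(iii) (equivalently, that $N$ has compensator $\nu(\d\xi)\,\d s$), term (d) becomes $\E\int_0^t\!\int_U\big(\beta_i(\bm{X})-\hat{\beta}_i(\tilde{\bm{X}})\big)^2\nu(\d\xi)\,\d s$. Since the trajectories have at most countably many jumps a.s., $\bm{X}(s^-)=\bm{X}(s)$ for a.e.\ $s$, so the ${}^-$ may be dropped inside all these integrals.

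The next step is to bound the three surviving expectations by a multiple of $\E\big[|\bm{e}(s)|_2^2\big]$ plus the integrand of $H(t)$. For each difference I insert the intermediate value evaluated at the \emph{true} trajectory, e.g.\ $f_i(\bm{X})-\hat{f}_i(\tilde{\bm{X}})=\big(f_i(\bm{X})-\hat{f}_i(\bm{X})\big)+\big(\hat{f}_i(\bm{X})-\hat{f}_i(\tilde{\bm{X}})\big)$, and likewise for $\sigma_{i,j}$ and $\beta_i$: the first piece is precisely the one that appears in $H(t)$, while the second is controlled by the uniform Lipschitz bounds of Assumption~\ref{assumptions}(v), e.g.\ $\big|\hat{f}_i(\bm{X})-\hat{f}_i(\tilde{\bm{X}})\big|\le \tfrac{\overline{f}}{d}\sum_k|e_k(s^-)|$. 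Using Young's inequality $2ab\le a^2+b^2$ on the cross terms, Cauchy--Schwarz to pass from the $1$-norm in the Lipschitz estimate to $|\bm{e}|_2^2$, and then summing over $i$ (and over $j=1,\dots,m$ for the diffusion, resp.\ integrating $\nu(\d\xi)$ over $U$ for the jump), one obtains
\begin{equation}
\E\big[|\bm{e}(t)|_2^2\big] \le H(t) + C\int_0^t \E\big[|\bm{e}(s)|_2^2\big]\,\d s,\qquad C\coloneqq 2\overline{f}+1+(2\overline{\sigma}+1)m+(2\overline{\beta}+1)\nu(U),
\end{equation}
where all expectations are finite thanks to Assumption~\ref{assumptions}(ii),(iv). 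Since $H$ is nondecreasing in $t$, Gr\"onwall's inequality then yields $\E\big[|\bm{e}(t)|_2^2\big]\le H(t)e^{Ct}\le H(T)e^{CT}$ for $t\le T$; performing the same computation conditionally on $\bm{X}(0)$ gives exactly Eq.~\eqref{pleq2}.

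I expect the principal difficulty to lie in the rigorous use of the jump-It\^o formula --- in particular, correctly isolating the pure-jump term $\sum_{s\le t}(\Delta e_i(s))^2$ and converting it, via the compensator / orthogonal-martingale-measure identity of Assumption~\ref{assumptions}(iii), into the $\nu(\d\xi)\,\d s$ integral forming the third term of $H(t)$, and in verifying that the $\d B$- and $\tilde{N}$-integrals are genuine martingales rather than merely local martingales (which is why the theorem hypothesizes this rather than deriving it, with Assumption~\ref{assumptions}(i)--(iv) supplying the integrability that also makes Gr\"onwall applicable). By contrast, the Young-inequality and Lipschitz bookkeeping that pins down the precise constant $C$ is routine once the above decomposition is in place.
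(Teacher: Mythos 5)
Your proposal is correct and follows essentially the same route as the paper's proof in \ref{proof_theorem1}: It\^o's formula applied to $|\bm{X}(t)-\tilde{\bm{X}}(t)|_2^2$ with the same-noise coupling, the identical intermediate-value decomposition $f_i(\bm{X})-\hat{f}_i(\tilde{\bm{X}})=(f_i(\bm{X})-\hat{f}_i(\bm{X}))+(\hat{f}_i(\bm{X})-\hat{f}_i(\tilde{\bm{X}}))$ (and likewise for $\sigma$, $\beta$), Lipschitz plus Young/Cauchy bounds yielding the same constant, vanishing of the martingale terms in expectation, and Gr\"onwall with $H$ nondecreasing. Your slightly more explicit handling of the pure-jump quadratic-variation term via the compensator is just a careful spelling-out of the same step the paper writes directly.
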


The proof to Theorem~\ref{theorem1} is similar to the proof of the
stochastic Gronwall lemma (Theorem 2.2 in \cite{mehri2019stochastic})
and is given in~\ref{proof_theorem1}.  Theorem~\ref{theorem1} greatly
generalizes the results of Theorem~1 in \cite{xia2023a}, which was
developed for analyzing the $W_2$-distance between two one-dimensional
pure diffusion processes. Now, with Theorem~\ref{theorem1}, we can
analyze the $W$-distance between two multi-dimensional jump-diffusion
processes.

The following corollary establishes the upper bound of the
$W$-distance $W_p(\mu, \hat{\mu}), 1\leq p\leq 2$ between $\mu$ and
$\hat{\mu}$, the two probability distributions associated with
jump-diffusion processes Eqs.~\eqref{model_equation} and
~\eqref{approximate_equation}.

\begin{corollary}
\rm
\label{corollary1}
    (Upper error bound for the $W$-distance) The following bound holds
for $W_{p}(\mu, \hat{\mu})$, where $\mu$ and $\hat{\mu}$ are the two
probability distributions associated with jump-diffusion processes
Eqs.~\eqref{model_equation} and ~\eqref{approximate_equation}
\begin{equation}
W_{p}(\mu, \hat{\mu})\leq
\sqrt{\,T \E\big[H(T)\big|\bm{X}(0)\big]}\times
\exp\Big(\big(\overline{f}+\tfrac{1}{2}+(\overline{\sigma}+\tfrac{1}{2})
m + (\overline{\beta}+\tfrac{1}{2})\nu(U)\big)T\Big),
    \label{col2def}
\end{equation}
where $H(T)$ is defined in Eq.~\eqref{h_define}.
    
\end{corollary}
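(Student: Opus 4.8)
The plan is to manufacture an explicit coupling of $\mu$ and $\hat{\mu}$ out of the auxiliary process $\tilde{\bm{X}}(t)$ in Eq.~\eqref{tilde_equation} and then feed the estimate of Theorem~\ref{theorem1} into the definition of $W_p$. First I would observe that $\tilde{\bm{X}}(t)$ is driven by the \emph{same} Brownian motion $\bm{B}_t$ and the \emph{same} compensated Poisson measure $\tilde{N}$ that appear in Eq.~\eqref{model_equation} for $\bm{X}(t)$, but carries the approximate coefficients $\hat{\bm{f}},\hat{\bm{\sigma}},\hat{\bm{\beta}}$ from Eq.~\eqref{approximate_equation}. Since $\hat{\bm{B}}_t$ in Eq.~\eqref{approximate_equation} is also a standard $m$-dimensional Brownian motion and $\hat{N}$ has the same intensity $\nu(\d\xi)\d t$ as $\tilde{N}$, uniqueness in law for SDEs with Lipschitz coefficients (Assumption~\ref{assumptions}(v), together with the moment conditions (i)--(iv)) implies that $\tilde{\bm{X}}$ and $\hat{\bm{X}}$ share the law $\hat{\mu}$, while $\bm{X}$ has law $\mu$ by construction. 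Hence the joint law $\pi$ of $(\bm{X},\tilde{\bm{X}})$ obeys the marginal constraints in Eq.~\eqref{pi_def}, i.e.\ it is an admissible coupling of $\mu$ and $\hat{\mu}$.

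Second, applying the definition~\eqref{pidef} of the $W_p$-distance to this particular coupling $\pi$ gives
\begin{equation}
W_p(\mu,\hat{\mu}) \;\le\; \E\big[\|\bm{X}-\tilde{\bm{X}}\|^{p}\big]^{1/p}.
\end{equation}
Because $1\le p\le 2$ and the expectation is over a probability measure, Jensen's inequality (the map $x\mapsto x^{2/p}$ is convex) yields $\E[\|\bm{X}-\tilde{\bm{X}}\|^{p}]^{1/p}\le\E[\|\bm{X}-\tilde{\bm{X}}\|^{2}]^{1/2}$, so it suffices to control the $L^2$ quantity. Unfolding the definition $\|\bm{X}\|^2=\int_0^T\sum_{i=1}^d|X_i(t)|^2\d t$ and applying Fubini's theorem,
\begin{equation}
\E\big[\|\bm{X}-\tilde{\bm{X}}\|^{2}\big] \;=\; \int_0^T \E\Big[\big|\bm{X}(t)-\tilde{\bm{X}}(t)\big|_2^{2}\Big]\,\d t.
\end{equation}

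Finally I would insert the bound of Theorem~\ref{theorem1} into the integrand. Since the right-hand side of Eq.~\eqref{pleq2} does not depend on $t$, integrating over $[0,T]$ merely multiplies it by $T$, yielding
\begin{equation}
\E\big[\|\bm{X}-\tilde{\bm{X}}\|^{2}\big] \;\le\; T\,\E\big[H(T)\big|\bm{X}(0)\big]\exp\Big(\big(2\overline{f}+1+(2\overline{\sigma}+1)m+(2\overline{\beta}+1)\nu(U)\big)T\Big),
\end{equation}
and taking square roots, while pulling the factor $\tfrac12$ into the exponential, gives exactly Eq.~\eqref{col2def}. The step that genuinely requires care is the first one: checking that $\tilde{\bm{X}}$ truly has marginal law $\hat{\mu}$ (so that $\pi$ is admissible) and that the stochastic integrals demanded by the hypotheses of Theorem~\ref{theorem1} are martingales for this coupling; both are consequences of the uniform Lipschitz bounds and moment conditions in Assumption~\ref{assumptions}, and they carry the analytic content, the remaining steps being Jensen's inequality, Fubini, and an elementary integration.
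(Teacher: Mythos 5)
Your proposal is correct and follows essentially the same route as the paper: couple $\mu$ and $\hat{\mu}$ through the auxiliary process $\tilde{\bm{X}}$ of Eq.~\eqref{tilde_equation} (which shares the law $\hat{\mu}$ of $\hat{\bm{X}}$), bound $W_p$ by the $L^2$ cost of that coupling via Jensen/H\"older for $1\le p\le 2$, and then integrate the pointwise-in-$t$ estimate of Theorem~\ref{theorem1} over $[0,T]$ to pick up the factor $T$ and the halved exponent. Your additional remark that the equality in law of $\tilde{\bm{X}}$ and $\hat{\bm{X}}$ rests on uniqueness in law under the Lipschitz and moment assumptions is a point the paper asserts without elaboration, but it does not change the argument.
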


\begin{proof}
The proof of Corollary~\ref{corollary1} is a direct application of
Theorem~\ref{theorem1}.  We denote $\tilde{\mu}$ to be the
distribution of $\tilde{\bm{X}}$ defined in
Eq.~\eqref{tilde_equation}.  Since $\tilde{\bm{X}}$ has the same
distribution as $\hat{\bm{X}}$, we have, by the H\"older's inequality
\begin{equation}
  W_p^p(\mu, \hat{\mu})=W_p^p(\mu, \tilde{\mu})\leq \E\bigg[\!\int_0^T
    \!\big|\bm{X}(s) - \tilde{\bm{X}}(s)\big|_2^{2}\d
    s\,\Big|\,\bm{X}(0)\bigg]^{\frac{p}{2}}, \,\,\, 1\leq p\leq 2.
\end{equation}
Using Eq.~\eqref{pleq2} and the fact that $H(t)$ is non-decreasing
w.r.t. $t$, we have
\begin{equation}
 W_{p}(\mu, \hat{\mu})\leq
 \sqrt{\,T\E\big[H(T)\big|\bm{X}(0)\big]}\times
 \exp\Big(\big((\overline{f}+\tfrac{1}{2}) +
 (\overline{\sigma}+\tfrac{1}{2}) m +
 (\overline{\beta}+\tfrac{1}{2})\nu(E) \big)T\Big), \,\,\, 1\leq p\leq
 2.
\end{equation}
which proves Eq.~\eqref{col2def}.

\end{proof}

From Corollary~\ref{corollary1}, it is necessary to have a small
$W_p(\mu, \hat{\mu})$ in Eq.~\eqref{col2def} such that the errors in
the drift, diffusion, and jump functions $\bm{f}-\hat{\bm{f}}$,
$\bm{\sigma}-\hat{\bm{\sigma}}$, and $\bm{\beta}-\hat{\bm{\beta}}$ can
be small. Note that Corollary~\ref{corollary1} analyzes the classic
$W_p$-distance $W_p(\mu, \hat{\mu})$, which is different from the
smooth Wasserstein distance in \cite{breton2024wasserstein} (the
classical Wasserstein distance is an upper bound for the smooth
Wasserstein distance used in \cite{arras2019stein}).

$W_p(\mu, \hat{\mu}), 1\leq p\leq 2$ cannot be directly used as a loss
function to minimize as we cannot directly evaluate
$\|\bm{X}-\hat{\bm{X}}\|^{p}$ in Eq.~\eqref{pidef} since this term
requires evaluation of the integral $\int_0^T \sum_{i=1}^d
|X_i(t)-\hat{X}_i(t)|^2\d t$.
However, when $p=2$ ($W_2(\mu, \hat{\mu})$), we shall show that we can
efficiently estimate $W_2(\mu, \hat{\mu})$ by using finite-time-point
observations of the two jump-diffusion processes $\bm{X}(t)$ and
$\hat{\bm{X}}(t)$.

Let $0=t_0<t_1<...<t_N=T$ to be a mesh grid in the time interval
$[0, T]$, and we define the following projection operator $I_N$
\begin{equation}
\bm{X}_N(t) \coloneqq I_N \bm{X}(t) =
\begin{cases}
&\bm{X}(t_i), t\in[t_i, t_{i+1}),\quad  i<N-1,\\
&\bm{X}(t_i), t\in[t_i, t_{i+1}],\quad  i=N-1.
\end{cases}
\label{X_N_def}
\end{equation}
The projected $\bm{X}_N(t)$ in Eq.~\eqref{X_N_def} is piecewise
constant and is thus in the space $L^2([0, T])$. We denote the
distributions of $\boldsymbol{X}_N(t)$ and
$\hat{\boldsymbol{X}}_N(t)\coloneqq I_N\hat{\bm{X}}_N$ in
Eq. \ref{X_N_def} by $\mu_N$ and $\hat{\mu}_N$, respectively. We will
prove the following theorem for bounding the error $|W_2(\mu,
\hat{\mu})-W_2\left(\mu_N, \hat{\mu}_N\right)|$.

\begin{theorem}
\rm
\label{theorem3}
(Finite-time-point approximation for $W_2$ distance) The following
triangular inequality for $W_2(\mu, \hat{\mu})$ holds:
\begin{eqnarray}
\hspace{-0.9cm} W_2(\mu_N, \hat{\mu}_N) - W_2(\mu, \mu_N)
- W_2(\hat{\mu}, \hat{\mu}_N)
\leq W_2(\mu, \hat{\mu})
\leq W_2(\mu_N, \hat{\mu}_N) + W_2(\mu, \mu_N) + W_2(\hat{\mu}, \hat{\mu}_N).
\label{triangular}
\end{eqnarray}
In Eq.~\eqref{triangular}, $\mu_N, \hat{\mu}_N$ are the probability
distributions associated with $\bm{X}_N$ and $\hat{\bm{X}}_N$ defined
in Eq.~\eqref{X_N_def}, respectively.  Furthermore, we assume that
\begin{equation}
  \begin{aligned}
    F\coloneqq & \E\Big[\medint\int_0^T
      {\small \sum}_{i=1}^d f_i^2(\bm{X}(t^-),t^-)\d t\Big]<\infty,
    && \hat{F}\coloneqq \E\Big[\medint\int_0^T \sum_{i=1}^d
\hat{f}_i^2(\hat{\bm{X}}(t^-),t^-)\d t\Big]<\infty \\
\Sigma\coloneqq & \E\Big[\medint\int_0^T \sum_{\ell=1}^d
  \sum_{j=1}^m\sigma_{i, j}^2(\bm{X}(t^-),t^-)\d t\Big]<\infty, && \hat{\Sigma}
\coloneqq \E\Big[\medint\int_0^T
  \sum_{\ell=1}^d\sum_{j=1}^m
  \hat{\sigma}_{i, j}^2(\hat{\bm{X}}(t^-),t^-)\d t\Big]<\infty,\\
B\coloneqq & \E\Big[\medint\int_0^T \sum_{\ell=1}^d
  \medint\int_U \beta_{i}^2(\bm{X}(t^-),\xi, t^-)\nu(\d\xi)\d t\Big]<\infty, && \hat{B}
\coloneqq \E\Big[\medint\int_0^T
  \sum_{\ell=1}^d\medint\int_U \hat{\beta}_{i}^2(\hat{\bm{X}}(t^-),\xi, t^-)
  \nu(\d \xi)\d t\Big]<\infty,
\end{aligned}
\label{F_Sigma}
\end{equation}
where $\bm{X}(t)$ and $\hat{\bm{X}(t)}$ solve
Eqs.~\eqref{model_equation} and \eqref{approximate_equation},
respectively.
Then, we have the following bound
\begin{equation}
  \big|W_2(\mu_N, \hat{\mu}_N)-W_2(\mu,\hat{\mu})\big| \leq \sqrt{\Delta t}
  \left(\sqrt{F\Delta t+\Sigma+B}
+\sqrt{\hat{F}\Delta t+\hat{\Sigma} +\hat{B}}\right).
\label{dtbound}
\end{equation}
where $\Delta t\coloneqq \max_{i=0,...,N-1} |t_{i+1}-t_i|$.
\end{theorem}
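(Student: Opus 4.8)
The plan is to establish the two ingredients separately: the triangular inequality \eqref{triangular}, and the quantitative bound \eqref{dtbound} on $W_2(\mu,\mu_N)$ and $W_2(\hat\mu,\hat\mu_N)$; combining them yields the stated estimate. The triangular inequality is immediate once we recall that $W_2$ is a genuine metric on the space of probability measures on the separable Hilbert space $L^2([0,T];\mathbb{R}^d)$ with finite second moment (a standard fact, guaranteed by Assumption~\ref{assumptions}(iv) together with the finiteness conditions in \eqref{F_Sigma}). Indeed, $W_2(\mu,\hat\mu)\le W_2(\mu,\mu_N)+W_2(\mu_N,\hat\mu_N)+W_2(\hat\mu_N,\hat\mu)$ and, rearranging, $W_2(\mu_N,\hat\mu_N)-W_2(\mu,\mu_N)-W_2(\hat\mu,\hat\mu_N)\le W_2(\mu,\hat\mu)$, which is exactly \eqref{triangular}. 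So the real content is bounding $W_2(\mu,\mu_N)$, the distance between the law of a trajectory and the law of its piecewise-constant time-discretization.

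For that bound I would exploit the fact that $\mu$ and $\mu_N$ are already \emph{coupled} in the obvious way: the map $\bm X\mapsto I_N\bm X$ pushes $\mu$ forward to $\mu_N$, so the joint law of $(\bm X, I_N\bm X)$ is an admissible transport plan. Hence
\begin{equation*}
W_2^2(\mu,\mu_N)\le \E\big[\|\bm X - I_N\bm X\|^2\big]
= \E\bigg[\sum_{i=1}^d\int_0^T \big|X_i(t)-X_i(t_{k(t)})\big|^2\,\d t\bigg],
\end{equation*}
where $k(t)$ is the index with $t\in[t_{k(t)},t_{k(t)+1})$. On each subinterval $[t_k,t_{k+1}]$ one writes $X_i(t)-X_i(t_k)$ using \eqref{model_equation} as a sum of a drift integral, an It\^o integral against $\bm B$, and a compensated-Poisson integral against $\tilde N$. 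Taking $\E|\cdot|^2$, the cross terms vanish by the martingale/orthogonality properties (Assumption~\ref{assumptions}(iii)), so by It\^o isometry and Cauchy--Schwarz the drift term contributes $\le (t-t_k)\,\E\int_{t_k}^{t}\sum_i f_i^2\,\d s$, the diffusion term contributes $\E\int_{t_k}^t\sum_{i,j}\sigma_{i,j}^2\,\d s$, and the jump term contributes $\E\int_{t_k}^t\int_U\sum_i\beta_i^2\,\nu(\d\xi)\,\d s$. Integrating over $t\in[t_k,t_{k+1}]$, bounding $t-t_k\le\Delta t$, and summing over $k$ telescopes the time integrals back to $[0,T]$, giving $W_2^2(\mu,\mu_N)\le \Delta t\,(F\Delta t+\Sigma+B)$; the same argument for $\hat{\bm X}$ gives $W_2^2(\hat\mu,\hat\mu_N)\le\Delta t\,(\hat F\Delta t+\hat\Sigma+\hat B)$. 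Taking square roots and inserting into \eqref{triangular} yields \eqref{dtbound}.

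The main obstacle is the careful handling of the stochastic increments over a subinterval: one must justify that the It\^o and compensated-Poisson integrals started at $t_k$ are martingales (so their second moments are controlled by the isometries and no cross term with the drift survives), and that Fubini is legitimate when integrating the pointwise-in-$t$ estimate over $t\in[t_k,t_{k+1}]$ and then taking expectations — this is where the finiteness hypotheses $F,\Sigma,B,\hat F,\hat\Sigma,\hat B<\infty$ in \eqref{F_Sigma} are used. A minor subtlety worth a remark is the use of $s^-$ versus $s$ in the integrands, which is immaterial for the Lebesgue-type integrals appearing in $F,\Sigma,B$ since the set of jump times has measure zero in $t$. Everything else is routine once the per-subinterval estimate is in place.
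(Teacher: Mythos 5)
Your proposal is correct and follows essentially the same route as the paper's proof: the triangle inequality for the $W_2$ metric on $L^2([0,T];\mathbb{R}^d)$ gives \eqref{triangular}, and the canonical coupling $(\bm{X}, I_N\bm{X})$ combined with the It\^o isometry, the orthogonality of the compensated Poisson measure, and Cauchy--Schwarz on the drift increment yields $W_2^2(\mu,\mu_N)\le \Delta t\,(F\Delta t+\Sigma+B)$ exactly as in the paper. No substantive differences.
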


The proof to Theorem~\ref{theorem3}, given in ~\ref{theorem3proof},
uses the It\^o isometry as well as the orthogonal assumption of the
compensated Poisson process in Assumption~\ref{assumptions}.
Theorem~\ref{theorem3} indicates that $W_2(\mu, \hat{\mu})$ can be
approximated by the finite-time-point projections $\bm{X}_N$ and
$\hat{\bm{X}}_N$.  Specifically, Theorem~\ref{theorem3} is a
generalization to Theorem~2 in \cite{xia2023a}, developed for the pure
diffusion. Note that Eq.~\eqref{dtbound} holds for $|W_2(\mu,
\hat{\mu}) - W_2(\mu_N, \hat{\mu}_N)|$ but Eq.~\eqref{dtbound} might
not hold for $|W_p(\mu, \hat{\mu}) - W_p(\mu_N, \hat{\mu}_N)|,\, 1\leq
p<2$ as we cannot directly apply the It\^o isometry to the compensated
Poisson process for $1\leq p<2$.

It has been shown in \cite{xia2023a} that for reconstructing pure
diffusion processes, minimizing a temporally decoupled squared $W_2$
distance can yield more accurate reconstructions of the drift and
diffusion functions than direct minimization of the squared $W_2$
distance $W_2^2(\mu, \hat{\mu})$ defined in
Eq.~\eqref{pidef}. Additionally, the squared $W_2$ distance $W_2(\mu,
\hat{\mu})$ is an upper bound of the temporally decoupled squared
$W_2$ distance that will be discussed in Section~\ref{section3}. Thus,
Corollary~\ref{corollary1} also applies to the temporally decoupled
squared $W_2$ distance.

\section{A temporally decoupled squared $W_2$ distance}
\label{section3}
In this section, we propose and analyze a temporally decoupled squared
$W_2$ distance, which could help effectively approximate the
jump-diffusion process Eq.~\eqref{model_equation} by the reconstructed
jump-diffusion process Eq.~\eqref{approximate_equation}. Specifically,
we show why this temporally decoupled squared $W_2$ distance can be
more effectively evaluated using empirical distributions, making it a
more appealing choice than the squared $W_2$ distance $W_2^2(\mu,
\hat{\mu})$ discussed in Section~\ref{section2}. The
\textbf{temporally decoupled squared} $\bm{W_2}$ \textbf{distance} is
defined as
\begin{equation}
  \tilde{W}_2^2(\mu, \hat{\mu})\coloneqq
  \int_0^T\! W_2^2\big(\mu(t), \hat{\mu}(t)\big)\d t,
    \label{decoupled_def}
\end{equation}
where $\mu(t), \hat{\mu}(t)$ are the distributions of $d$-dimensional
random variables $\bm{X}(t)$ and $\bm{X}(t)$ at time $t$,
respectively:
\begin{equation}
W_2^2(\mu(t), \hat{\mu}(t)) \coloneqq \!\inf_{\pi(\mu(t), \hat \mu(t))}
\!\E_{(\bm{X}(t), \hat{\bm{X}(t)})\sim \pi(\mu, \hat
  \mu)}\big[|{\bm{X}}(t) - \hat{{\bm{X}}}(t)|_2^{2}\big],
\end{equation}
where the joint distribution $\pi(\mu(t), \hat \mu(t))$ satisfies

\begin{equation}
    \pi(\mu(t), \hat \mu(t))(A,
    \mathbb{R}^d) = \mu(t)(A) ,\,\,\,\,\pi(\mu(t), \hat \mu(t))(\mathbb{R}^d, B
    ) = \hat{\mu}(t)(B),\,\,\,\, A, B\in\mathcal{B}(\mathbb{R}^d).
\end{equation}
    
%
The integration on the RHS of Eq.~\eqref{decoupled_def} is defined as
the limit

\begin{equation}
  \int_0^T\! W_2^2(\mu(t), \hat{\mu}(t))\d t
  =\lim\limits_{\max_i (t_{i+1}-t_i)\rightarrow 0}
  \sum_{i=0}^{N-1}W_2^2(\mu(t_i), \hat{\mu}(t_{i}))\Delta t_i,
    \label{tilde_def}
\end{equation}
where $0=t_0<t_1<...<t_N=T$ is a grid mesh on the time interval $[0,
  T]$ and $\Delta t_{i} \coloneqq t_{i+1}-t_{i}$ in the following. Here, we shall prove
that the temporally decoupled squared $W_2$ distance
$\tilde{W}_2^2(\mu, \hat{\mu})$ is well defined and can be a more
effective loss function when seeking to reconstruct multidimensional
jump-diffusion processes than the original squared $W_2^2(\mu_N,
\hat{\mu}_N)$.  Two features make this so: i) numerically evaluating
the temporally decoupled squared $W_2$ distance
Eq.~\eqref{decoupled_def} using finite-sample empirical distributions
can be more accurate than evaluating the original squared $W_2$
distance $W_2^2(\mu, \hat{\mu})$ when the number of training samples
becomes larger, and ii) the temporally decoupled squared $W_2$
distance Eq.~\eqref{decoupled_def} provides upper error bounds for
$\bm{f}-\hat{\bm{f}}, \bm{\sigma}-\hat{\bm{\sigma}}$, and $\bm{\beta}
- \hat{\bm{\beta}}$ when reconstructing jump-diffusion processes.

We denote $\bm{\mu}_i$ and $\hat{\bm{\mu}}_i$ to be the distributions
for $\bm{X}(t), t\in[t_i, t_{i+1})$ and $\hat{\bm{X}}(t), t\in[t_i,
    t_{i+1})$, respectively. We can prove the following theorem that
    shows the limit on the RHS of Eq.~\eqref{tilde_def} exists and
    thus the temporally decoupled squared $W_2$ in
    Eq.~\eqref{decoupled_def} distance is well defined.

\begin{theorem}
\rm
\label{theorem4}
(The temporally decoupled squared $W_2$ distance is well-defined) We
assume the conditions in Theorem~\ref{theorem3} hold. Furthermore, we
assume that for any $0<t<t'<T$, as $t'-t\rightarrow0$, the following
conditions are satisfied
\begin{equation}
 \begin{aligned}
\: & \E \bigg[\medint\int_{t}^{t'}\!\sum_{i=1}^d f_i^2(\bm{X}(t),t)\text{d}t\bigg]
  \rightarrow 0, && \E \bigg[\medint\int_{t}^{t'} \sum_{i=1}^d
    \hat{f}_i^2(\hat{\bm{X}}(s^-),s^-)\text{d}s\bigg]\rightarrow 0,\\
\: & \E\bigg[\medint\int_{t}^{t'} \!\sum_{\ell=1}^d
  \sum_{j=1}^m\sigma_{i, j}^2(\bm{X}(s^-),s^-)\d s\bigg]\rightarrow 0, &&
\E\bigg[\medint\int_{t}^{t'}\!\sum_{\ell=1}^d
\sum_{j=1}^m\hat{\sigma}_{i, j}^2(\hat{\bm{X}}(s^-),s^-)\d s^-\bigg]\rightarrow 0, \\
\: & \E\bigg[\medint\int_{t}^{t'}\!\sum_{\ell=1}^d\!
  \medint\int_U\beta_{\ell}^2(\bm{X}(s^-),\xi, s^-)\nu(\d\xi)\d s\bigg]\rightarrow 0, &&
\E\bigg[\medint\int_{t}^{t'}\!\sum_{\ell=1}^d\!
  \medint\int_U \hat{\beta}_{\ell}^2(\hat{\bm{X}}(s^-),\xi, s^-)\nu(\d\xi)\d s\bigg]
\rightarrow 0.
\end{aligned}
\label{condition_dt}
\end{equation}
Additionally, we assume that there is a uniform upper bound

\begin{equation}
M\coloneqq \max_{t\in[0, T]} W_2\big(\mu(t), \hat{\mu}(t)\big)\leq \infty.
\label{M_condition}
\end{equation}
Suppose $\Delta t \coloneqq \max_{0\leq i\leq N-1}\Delta t_i$, then
%
\begin{equation}
\lim\limits_{\Delta
  t\rightarrow 0}\Big(\sum_{i=0}^{N-1}W_2^2\big(\mu(t_i),
\hat{\mu}(t_i)\big)\Delta t_{i} -
\sum_{i=0}^{N-1}W_2(\boldsymbol{\mu}_i, \hat{\boldsymbol{\mu}}_i)
\Big)= 0.
\end{equation}
Furthermore, the limit
\begin{equation}
  \lim\limits_{\Delta t\rightarrow 0}\sum_{i=0}^{N-1}W_2^2\big(\mu(t_i),
  \hat{\mu}(t_i)\big)\Delta t_{i} =
  \lim\limits_{N\rightarrow \infty}\sum_{i=0}^{N-1}W_2^2\Big(\mu(t_i),
  \hat{\mu}(t_i)\Big)\Delta t_{i}
  \label{lim_condition}
\end{equation}
is simply $\tilde{W}_2^2(\mu, \hat{\mu})$ defined in
Eq.~\eqref{decoupled_def}. Denoting $\pi_i$ to be the coupling
probability distribution of $(\bm{X}(t_i), \hat{\bm{X}}(t_i))$, whose
marginal distributions coincide with $\mu(t_i)$ and $\hat{\mu}(t_i)$,
we have the following bound:

\begin{equation}
  \Big|\sum_{i=0}^{N-1}\inf_{\pi_i}\E_{\pi_i}\big[|\bm{X}(t_i)
  - \hat{\bm{X}}(t_i)|_2^2\big]\Delta t_{i}
- \tilde{W}_2^2(\mu, \hat{\mu})\Big| \leq 2M T\max_i\Big(\sqrt{F_i\Delta t+\Sigma_i+B_i}
  +\!\sqrt{\hat{F}_i\Delta t + \hat{\Sigma}_i + \hat{B}_i}\Big),
     \label{convergence_result}
\end{equation}
where
\begin{equation}
  \begin{aligned}
    F_i\coloneqq & \E\bigg[\medint\int_{t_i}^{t_{i+1}}
      \sum_{i=1}^d f_i^2(\bm{X}(t^-),t^-)\d t\bigg], &&
    \hat{F}_i\coloneqq\E\bigg[\medint\int_{t_i}^{t_{i+1}} \sum_{i=1}^d
\hat{f}_i^2(\hat{\bm{X}}(t^-),t^-)\d t\bigg], \\
\Sigma_i\coloneqq & \E\bigg[\medint\int_{t_i}^{t_{i+1}} \sum_{\ell=1}^d
  \sum_{j=1}^m\sigma_{\ell, j}^2(\bm{X}(t^-),t^-)\d t\bigg], &&
\hat{\Sigma}_i\coloneqq\E\bigg[\medint\int_{t_i}^{t_{i+1}}
\sum_{\ell=1}^d\sum_{j=1}^m\hat{\sigma}_{\ell, j}^2(\hat{\bm{X}}(t^-),t^-)\d t\bigg],\\
B_i\coloneqq & \E\bigg[\medint\int_{t_i}^{t_{i+1}} \sum_{\ell=1}^d
\medint\int_U \beta_{\ell}^2(\bm{X}(t^-),\xi, t^-)\nu(\d\xi)\d t\bigg], && 
\hat{B}_i\coloneqq\E\bigg[\medint\int_{t_i}^{t_{i+1}}
  \sum_{\ell=1}^d\medint\int_U \hat{\beta}_{\ell}^2(\hat{\bm{X}}(t^-),\xi, t^-)
  \nu(\d\xi)\d t\bigg].
  \end{aligned}
  \end{equation}

\end{theorem}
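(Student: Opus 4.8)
### Proof Proposal for Theorem~\ref{theorem4}

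The plan is to establish the convergence in three stages, each controlled by the hypotheses. First, I would compare the Riemann-sum term $\sum_{i=0}^{N-1} W_2^2(\mu(t_i),\hat\mu(t_i))\,\Delta t_i$ with the ``trajectory-coupling'' term $\sum_{i=0}^{N-1} W_2(\boldsymbol{\mu}_i,\hat{\boldsymbol{\mu}}_i)$. On the subinterval $[t_i,t_{i+1})$, the quantity $W_2(\boldsymbol\mu_i,\hat{\boldsymbol\mu}_i)$ is the squared $W_2$-distance between the laws of the \emph{path segments} $\bm{X}(t),\ t\in[t_i,t_{i+1})$ (viewed in $L^2$ on that subinterval), while $W_2^2(\mu(t_i),\hat\mu(t_i))\Delta t_i$ uses only the value at the left endpoint, frozen over the subinterval. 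The key estimate is that replacing $\bm{X}(t)$ by $\bm{X}(t_i)$ on $[t_i,t_{i+1})$ costs, in $L^2$-norm, something controlled by the increments of the process, which by It\^o's isometry and the orthogonality assumption on the compensated Poisson measure (Assumption~\ref{assumptions}) is bounded by $F_i\Delta t+\Sigma_i+B_i$ (and similarly with hats). This is exactly the kind of bound already used in Theorem~\ref{theorem3}; I would reuse that machinery verbatim on each subinterval. Combining these local bounds with the triangle inequality for $W_2$ gives the estimate \eqref{convergence_result}, with the factor $2MT$ arising from summing $\sqrt{a+b}-$type corrections against the upper bound $M$ on each $W_2(\mu(t),\hat\mu(t))$ and the total length $T=\sum\Delta t_i$.

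Second, I would use hypothesis \eqref{condition_dt} to conclude that as $\Delta t\to 0$, each of $F_i\Delta t+\Sigma_i+B_i\to 0$ uniformly in $i$: the integrands over $[t_i,t_{i+1}]$ of the drift, diffusion, and jump terms vanish in the mean as the subinterval shrinks, so $\max_i(\sqrt{F_i\Delta t+\Sigma_i+B_i}+\sqrt{\hat F_i\Delta t+\hat\Sigma_i+\hat B_i})\to 0$. Feeding this into \eqref{convergence_result} shows the difference between the two Riemann-type sums tends to $0$, establishing the first displayed limit. The equality \eqref{lim_condition} of the ``$\Delta t\to 0$'' limit and the ``$N\to\infty$'' limit is then a matter of Cauchy-sequence bookkeeping: the estimate \eqref{convergence_result} shows the sums form a Cauchy net indexed by partitions, so the limit exists and is partition-independent, and refining any sequence of partitions with $\Delta t\to 0$ yields the same value, which we name $\tilde W_2^2(\mu,\hat\mu)$.

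Third, for the final bound \eqref{convergence_result} I would make the coupling explicit: let $\pi_i$ be an optimal (or near-optimal) coupling of $(\bm{X}(t_i),\hat{\bm{X}}(t_i))$ realizing $W_2^2(\mu(t_i),\hat\mu(t_i))$, and extend it to a coupling of the path segments by evolving both processes under their respective dynamics from the coupled endpoints. Then $W_2(\boldsymbol\mu_i,\hat{\boldsymbol\mu}_i)\le \E_{\pi_i}\big[\int_{t_i}^{t_{i+1}}|\bm{X}(t)-\hat{\bm{X}}(t)|_2^2\,\d t\big]$, and expanding $\bm{X}(t)-\hat{\bm{X}}(t)=(\bm{X}(t_i)-\hat{\bm{X}}(t_i))+(\text{increments})$ and applying $\|a+b\|_{L^2}^2\le (\|a\|+\|b\|)^2$ together with It\^o's isometry for the increment terms produces the stated right-hand side after using $T=\sum_i\Delta t_i$ and the uniform bound $M$.

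The main obstacle I anticipate is the subtle bookkeeping in stage three: the ``increment'' contributions on $[t_i,t_{i+1})$ must be controlled in the $L^2([t_i,t_{i+1}];\mathbb R^d)$ path norm (an extra time integral over $[t_i,t_{i+1})$, hence an extra $\Delta t$ on the drift term, explaining why $F_i$ appears multiplied by $\Delta t$ while $\Sigma_i,B_i$ do not), and one must be careful that the coupling obtained by propagating the endpoint coupling forward is admissible and that the cross terms in the expansion vanish because the increments are martingale increments orthogonal to the $\mathcal F_{t_i}$-measurable endpoint difference — this is where the martingale and orthogonality parts of Assumption~\ref{assumptions} do the real work. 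The rest is a careful but routine repetition of the estimates behind Theorem~\ref{theorem3}.
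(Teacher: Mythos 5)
Your proposal is correct and follows essentially the same route as the paper: apply the Theorem~\ref{theorem3} machinery (It\^o isometry plus the orthogonality of the compensated Poisson measure) on each subinterval to compare the frozen-endpoint Riemann sum with the path-segment distances, let the corrections $\max_i(\sqrt{F_i\Delta t+\Sigma_i+B_i}+\sqrt{\hat F_i\Delta t+\hat\Sigma_i+\hat B_i})$ vanish via Eq.~\eqref{condition_dt}, and establish partition-independence by a refinement/Cauchy-net argument using the same increment bounds to control $|W_2(\mu(t),\hat\mu(t))-W_2(\mu(t'),\hat\mu(t'))|$. The only point to tighten is the order of logic in your second stage: the Cauchy property must be derived from the partition-versus-refinement estimate directly (as the paper does with the common refinement $\{t_k^3\}$), not from Eq.~\eqref{convergence_result} itself, since that inequality already presupposes the existence of the limit $\tilde W_2^2(\mu,\hat\mu)$.
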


Theorem~\ref{theorem4} generalizes Theorem~3 in \cite{xia2023a} from
pure diffusion processes to jump-diffusion processes. The proof to
Theorem~\ref{theorem4} is in ~\ref{theorem4proof}. Specifically, from
Eq.~\eqref{convergence_result}, if $\max_i (F_i\Delta t + \Sigma_i+B_i
+ \hat{F}_i\Delta t + \hat{\Sigma}_i + \hat{B}_i)$ is of order $\Delta
t$, then the convergence rate of
$\sum_{i=1}^{N-1}\inf_{\pi_i}\E_{\pi_i}[|\bm{X}(t_i) -
  \hat{\bm{X}}(t_i)|_2^2]\Delta t_{i}$ to $\tilde{W}_2^2(\mu,
\hat{\mu})$ is $O(\sqrt{\Delta t})$. Specifically, we have

\begin{equation}
  \big|W_2^2(\mu, \hat{\mu}) - W_2^2(\mu_N, \hat{\mu}_N)\big|
  = \big|W_2(\mu, \hat{\mu})
  - W_2(\mu_N, \hat{\mu}_N)\big|\cdot \big|W_2(\mu, \hat{\mu})
  + W_2(\mu_N, \hat{\mu}_N)\big|.
\end{equation}
From Eq.~\eqref{dtbound}, the error bound of $|W_2(\mu, \hat{\mu}) -
W_2(\mu_N, \hat{\mu}_N)|$ is $O\big(\sqrt{\Delta t}\big)$. Therefore,
the upper error bounds of using the finite-time distributions
$\mu_N,\hat{\mu}_N$ to approximate both $W_2^2(\mu, \hat{\mu})$ or
$\tilde{W}_2^2(\mu, \hat{\mu})$ are both of order $O(\sqrt{\Delta
  t})$.

Next, we shall show that using the finite-sample empirical
distribution to estimate

\begin{equation}
\sum_{i=0}^{N-1}\inf_{\pi_i}\E_{\pi_i}\left[|\bm{X}(t_i)
    - \hat{\bm{X}}(t_i)|_2^2\right]\Delta t_{i},
\end{equation}
where $\pi_i$ is the coupling distribution of $(\bm{X}(t_i),
\hat{\bm{X}}(t_i))$ such that its marginal distributions are
$\mu(t_i)$ and $\hat{\mu}(t_i)$, is more accurate than using the
finite-sample empirical distribution to estimate $W_2^2(\mu_N,
\hat{\mu}_N)$ (where $\mu_N$ and $\hat{\mu}_N$ are the distributions
of $\bm{X}_N(t)$ and $\hat{\bm{X}}_N(t)$ defined in
Eq.~\eqref{X_N_def}).

\begin{theorem}
\rm
\label{theorem5}
(Finite sample empirical distribution error bound) We assume that

\begin{equation}
  \E\big[|\bm{X}(t)|_6^6\big]\leq \infty, \,\,\,
  \E\big[|\hat{\bm{X}}(t)|_6^6\big]\leq \infty, \,\,\, \forall t\in[0, T],
\end{equation}
where $|\cdot|_6$ is the $l^6$ norm of a vector in $\mathbb{R}^d$.  We
denote $\mu_N^{\e}, \hat{\mu}_N^{\e}$ to be empirical distributions of
$\bm{X}_N$ and $\hat{\bm{X}}_N$, respectively; we denote
$\mu_N^{\e}(t_i), \hat{\mu}_N^{\e}(t_i)$ to be the empirical
distributions of $\bm{X}(t_i)$ and $\hat{\bm{X}}(t_i),
i=0,1,...,N-1$. Suppose $M_s$ is the number of observed trajectories
$\bm{X}_N(t_i)$ and the number of reconstructed trajectories
$\hat{\bm{X}}_N(t_i)$. We find the following error bound for
estimating $W_2^2(\mu_N, \hat{\mu}_N)$ using the empirical
distributions:
\begin{equation}
\begin{aligned}
\: & \E\big[|W_2^2(\mu_N^{\e}, \hat{\mu}_N^{\e})
    - W_2^2(\mu_N, \hat{\mu}_N)|\big] \leq E_1(M_s), \quad\mbox{where} \\      
\: & E_1(M) \coloneqq 2\sqrt{C_0} W_2(\mu_N, \hat{\mu}_N) h(M_s, Nd)
\sum_{i=0}^{N-1}\Big(\E\Big[|\bm{X}(t_i)|_6^6\Big]^{\frac{1}{6}}+
\E\Big[|\hat{\bm{X}}(t_i)|^6_6\Big]^{\frac{1}{6}}\Big)\sqrt{\Delta t_{i}} \\
\: & \hspace{2.6cm} + 2C_0h^{2}(M_s,Nd)\sum_{i=0}^{N-1}\Big(
\E\Big[\big|\bm{X}(t_i)\big|_6^6\Big]^{\frac{1}{3}} +
\E\Big[\big|\hat{\bm{X}}(t_i)\big|_6^6\Big]^{\frac{1}{3}}\Big)\Delta t_{i},
\end{aligned}
\label{coupled_error_bound}
\end{equation}
where $C_0$ is a constant and

\begin{equation}
h(M_s, n)\coloneqq\left\{
\begin{aligned}
&M_s^{-\frac{1}{4}}\log(1+M_s)^{\frac{1}{2}},\,\,\, n\leq4,\\
&M_s^{-\frac{1}{n}},\quad  n> 4
\end{aligned}
\right.
\label{t_def}
\end{equation}
We also have the empirical error bound for estimating
$\sum_{i=1}^{N-1}W_2^2(\mu_N(t_i), \mu_N(t_i))\Delta t_i$
using the empirical distributions
$\sum_{i=1}^{N-1}W_2^2(\mu_N^{\e}(t_i),
\mu_N^{\e}(t_i))\Delta t_i$:
\begin{equation}
  \begin{aligned}
    \: & \E\Big[\big|\sum_{i=0}^{N-1}W_2^2(\mu_N^{\e}(t_i),
      \mu_N^{\e}(t_i))\Delta t_i
      - W_2^2(\mu_N(t_i), \mu_N(t_i))\Delta t_i\big|\Big] \hspace{4cm} \\
    \: &\qquad\leq
    \E\Big[\sum_{i=0}^{N-1}\big|W_2^2(\mu_N^{\e}(t_i), \mu_N^{\e}(t_i))
      - W_2^2(\mu_N(t_i), \mu_N(t_i))\big|\Delta t_i\Big]
    \leq E_2(M_s),\quad \mbox{where}\\
    \: & E_2(M_s)  \coloneqq 2\sqrt{C_1}h(M_s, d)
    \sum_{i=0}^{N-1}\Big(\E\Big[\big|\bm{X}(t_i)\big|_6^6\Big]^{\frac{1}{6}}\!
 +\E\Big[\big|\hat{\bm{X}}(t_i)\big|_6^6\Big]^{\frac{1}{6}}\Big)
\Delta t_iW_2\big(\mu_N(t_i), \hat{\mu}_N(t_i)\big) \\
 \: & \qquad \hspace{3.5cm} + 2C_1 h^{2}(M_s, d)\sum_{i=0}^{N-1}
\Big(\E\Big[\big|\bm{X}(t_i)\big|_6^6\Big]^{\frac{1}{3}}\!+
 \E\Big[\big|\hat{\bm{X}}(t_i)\big|_6^6\Big]^{\frac{1}{3}}\Big)\Delta t_i,
\end{aligned}
\label{decoupled_error_bound}
\end{equation}
where $C_1$ is a constant different from $C_0$.  Furthermore, there
exists a constant $C$ such that
\begin{equation}
   E_1(M_s)\geq C E_2(M_s)\cdot \frac{h(M_s, Nd)}{h(M_s, d)} N^{-\frac{2}{3}}.
\end{equation}
\end{theorem}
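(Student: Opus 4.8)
The plan is to derive both error bounds $E_1$ and $E_2$ from a single external ingredient: the standard (Fournier--Guillin type) rate of convergence of empirical measures in the quadratic Wasserstein distance, namely that for a law $\rho$ on $\mathbb{R}^n$ with $Z\sim\rho$ of finite sixth moment, the $M$-sample empirical measure $\rho^{\e}_M$ obeys $\E\big[W_2^2(\rho^{\e}_M,\rho)\big]\le C\,h(M,n)^2\,\E[|Z|_2^6]^{1/3}$, with $h$ as in Eq.~\eqref{t_def} (for $n\le 4$ the rate $h(M,n)^2=M^{-1/2}\log(1+M)$ is merely an over-estimate of the true $W_2^2$ rate, which is harmless). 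The only genuinely new bookkeeping is that $W_2^2(\mu_N,\hat{\mu}_N)$ lives on the space of piecewise-constant paths, which after the change of variables $\bm{X}(t_i)\mapsto\sqrt{\Delta t_i}\,\bm{X}(t_i)$ is $\mathbb{R}^{Nd}$ with the standard Euclidean metric (this is why $h(M_s,Nd)$ appears), whereas every summand in the temporally decoupled quantity only involves a marginal on $\mathbb{R}^d$ (hence $h(M_s,d)$).

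For the $E_1$ bound I would first record the two deterministic triangle-type inequalities $|W_2(\mu_N^{\e},\hat{\mu}_N^{\e})-W_2(\mu_N,\hat{\mu}_N)|\le W_2(\mu_N^{\e},\mu_N)+W_2(\hat{\mu}_N^{\e},\hat{\mu}_N)$ and $W_2(\mu_N^{\e},\hat{\mu}_N^{\e})+W_2(\mu_N,\hat{\mu}_N)\le 2W_2(\mu_N,\hat{\mu}_N)+W_2(\mu_N^{\e},\mu_N)+W_2(\hat{\mu}_N^{\e},\hat{\mu}_N)$, multiply them using $|a^2-b^2|=|a-b|\cdot|a+b|$, and take expectations; this produces a cross term $2W_2(\mu_N,\hat{\mu}_N)\big(\E[W_2(\mu_N^{\e},\mu_N)]+\E[W_2(\hat{\mu}_N^{\e},\hat{\mu}_N)]\big)$ and a quadratic term bounded by $2\E[W_2^2(\mu_N^{\e},\mu_N)]+2\E[W_2^2(\hat{\mu}_N^{\e},\hat{\mu}_N)]$. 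I would then apply the empirical-measure estimate with $n=Nd$ and $\rho=\mu_N$ (resp.\ $\hat{\mu}_N$); the moment $\E[\|\bm{X}_N\|^6]^{1/3}=\E[(\sum_i\Delta t_i|\bm{X}(t_i)|_2^2)^3]^{1/3}$ is split across time points by Minkowski's inequality in $L^3$, giving $\le\sum_i\Delta t_i\,\E[|\bm{X}(t_i)|_2^6]^{1/3}$, and, using $(\sum_i c_i)^{1/2}\le\sum_i c_i^{1/2}$, $\E[W_2(\mu_N^{\e},\mu_N)]\le\E[W_2^2(\mu_N^{\e},\mu_N)]^{1/2}\le\sqrt{C_0}\,h(M_s,Nd)\sum_i\sqrt{\Delta t_i}\,\E[|\bm{X}(t_i)|_2^6]^{1/6}$. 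Passing from the $l^2$ to the $l^6$ norm on each $\mathbb{R}^d$ slice costs only a factor depending on $d$, absorbed into $C_0$. Collecting the cross and quadratic contributions reproduces the two lines of $E_1(M_s)$ in Eq.~\eqref{coupled_error_bound}.

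For $E_2$ I would run the identical argument at each fixed $t_i$, where $\mu_N(t_i)=\mu(t_i)$ and $\hat{\mu}_N(t_i)=\hat{\mu}(t_i)$ are laws on $\mathbb{R}^d$ (so $n=d$ and no $\Delta t_i$ weighting enters the moment); this yields, for each $i$, a bound on $\E[|W_2^2(\mu_N^{\e}(t_i),\hat{\mu}_N^{\e}(t_i))-W_2^2(\mu_N(t_i),\hat{\mu}_N(t_i))|]$ consisting of a cross term proportional to $\sqrt{C_1}\,h(M_s,d)\,W_2(\mu_N(t_i),\hat{\mu}_N(t_i))\,(\E[|\bm{X}(t_i)|_6^6]^{1/6}+\E[|\hat{\bm{X}}(t_i)|_6^6]^{1/6})$ and a quadratic term proportional to $C_1 h(M_s,d)^2(\E[|\bm{X}(t_i)|_6^6]^{1/3}+\E[|\hat{\bm{X}}(t_i)|_6^6]^{1/3})$; multiplying by $\Delta t_i$, summing over $i$, and using $|\sum_i(\cdot)|\le\sum_i|\cdot|$ gives Eq.~\eqref{decoupled_error_bound}.

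Finally, I would prove $E_1\ge C\,E_2\cdot \tfrac{h(M_s,Nd)}{h(M_s,d)}\,N^{-2/3}$ by comparing $E_2\cdot \tfrac{h(M_s,Nd)}{h(M_s,d)}\,N^{-2/3}$ with $E_1$ term by term. For the quadratic terms the comparison reduces to the fact that $h(M_s,d)/h(M_s,Nd)$ is bounded above by an absolute constant for $M_s\ge1$ (both are negative powers of $M_s$ up to a logarithmic correction that is itself uniformly bounded), together with $N^{-2/3}\le1$. For the cross terms the key estimate is that projecting any path coupling onto the time slices gives $W_2^2(\mu_N,\hat{\mu}_N)\ge\sum_i\Delta t_i\,W_2^2(\mu(t_i),\hat{\mu}(t_i))$; combining this with Cauchy--Schwarz, $\sum_i a_i\Delta t_i\,W_2(\mu(t_i),\hat{\mu}(t_i))\le(\sum_i a_i^2\Delta t_i)^{1/2}(\sum_i\Delta t_i\,W_2^2(\mu(t_i),\hat{\mu}(t_i)))^{1/2}\le(\sum_i a_i^2\Delta t_i)^{1/2}\,W_2(\mu_N,\hat{\mu}_N)$, and $(\sum_i a_i^2\Delta t_i)^{1/2}\le\sum_i a_i\sqrt{\Delta t_i}$ (with $a_i=\E[|\bm{X}(t_i)|_6^6]^{1/6}+\E[|\hat{\bm{X}}(t_i)|_6^6]^{1/6}$), bounds the $E_2$ cross contribution by a constant multiple of the $E_1$ cross contribution. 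I expect the main obstacle to be purely organizational --- keeping straight the two powers of $\Delta t_i$ (linear versus square-root), the two different $h$'s, and the $l^2$-versus-$l^6$ passage, while checking that every constant is universal; there is no conceptual difficulty once the empirical-Wasserstein rate is correctly invoked in ambient dimension $Nd$.
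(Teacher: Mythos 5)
Your proposal is correct and follows essentially the same route as the paper: the $|a^2-b^2|=|a-b|\,|a+b|$ splitting into cross and quadratic terms, the $W_2$ triangle inequality against the empirical marginals, the Fournier--Guillin rate applied in ambient dimension $Nd$ (via the $\sqrt{\Delta t_i}$ rescaling of the piecewise-constant paths) versus dimension $d$ for the time slices, and the projection-coupling inequality $\sum_i \Delta t_i\,W_2^2(\mu(t_i),\hat{\mu}(t_i))\le W_2^2(\mu_N,\hat{\mu}_N)$ combined with Cauchy--Schwarz for the final comparison. The only (harmless) difference is organizational: the paper derives the $N^{-2/3}$ factor by reverse-H\"older bounds on the unsplit sixth moments, whereas your term-by-term comparison of the already-split sums actually gives the inequality without that factor, from which the stated bound follows a fortiori since $N^{-2/3}\le 1$.
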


The proof of Theorem~\ref{theorem5} is given in ~\ref{proof_theorem5}
and utilizes the upper bound of the $W$-distance between the
ground truth distribution and the empirical distribution in
\cite{fournier2015rate}. Specifically, if $N\geq 5$, then $h(M_s,
Nd)=2M_s^{-\frac{1}{Nd}}$, and

\begin{equation}
  \frac{h(M_s, Nd)}{h(M_s, d)}\geq
  \min\Big\{M_s^{\frac{1}{4}-\frac{1}{Nd}}\log(1+M_s), M_s^{\frac{N-1}{Nd}}\Big\}.
\end{equation}
Therefore, Theorem~\ref{theorem5} indicates that as the number of
observed trajectories of the jump-diffusion process $M_s$ increases,
the upper bound of $\E\Big[\big|\sum_{i=0}^{N-1}W_2^2(\mu_N^{\e}(t_i),
  \mu_N^{\e}(t_i))\Delta t_i - \sum_{i=0}^{N-1}W_2^2\big(\mu_N(t_i),
  \mu_N(t_i)\big)\Delta t_i\big|\Big]$ converges faster to 0 than the
upper bound of $\E\Big[\big|W_2^2(\mu_N^{\e}, \hat{\mu}_N^{\e}) -
  W_2^2(\mu_N, \hat{\mu}_N)\big|\Big]$ does when $N\geq 5$.  Thus,
\begin{equation}
    \sum_{i=0}^{N-1}W_2^2\big(\mu_N(t_i), \mu_N(t_i)\big)\Delta t_i
    \label{temporal_calculation}
\end{equation}
can be more accurately evaluated by the finite-sample empirical
distributions than the squared $W_2$ distance $W_2^2(\mu_N,
\hat{\mu}_N)$ when $M_s$ is large.

For any coupled distribution $\pi(\bm{X}_N, \hat{\bm{X}}_N)$ such that
its marginal distributions are $\mu_N$ and $\hat{\mu_N}$, its marginal
distributions w.r.t $\bm{X}(t_i)$ and $\hat{\bm{X}}(t_i)$ are
$\mu(t_i)$ and $\hat{\mu}(t_i)$, respectively. Thus,

\begin{equation}
\sum_{i=0}^{N-1}\inf_{\pi_i} \E_{\pi_i}\Big[\big|\bm{X}(t_i) -
  \hat{\bm{X}}(t_i)\big|_2^2\Big]\Delta t_i
\leq\!\! \inf_{\pi(\bm{X}_N, \hat{\bm{X}}_N)}\!\sum_{i=0}^{N-1}
\E_{\pi(\bm{X}_N, \hat{\bm{X}}_N)}\Big[\big|\bm{X}(t_i) -
  \hat{\bm{X}}(t_i)\big|_2^2\Big]\Delta t_i = \!W_2^2(\mu_N, \hat{\mu}_N).
\label{lower_bound_w}
\end{equation}
Letting $N\rightarrow\infty$ in Eq.~\eqref{lower_bound_w}, from
Theorems~\ref{theorem3} and~\ref{theorem4}, we conclude that

\begin{equation}
  \tilde{W}_2^2(\mu, \hat{\mu})\leq 
  W_2^2(\mu, \hat{\mu}).
\end{equation}
Thus, Corollary~\ref{corollary1} also provides an upper error bound
for the temporally decoupled $\tilde{W}_2^2(\mu, \hat{\mu})$.  Next,
we show that there is a lower bound for $\tilde{W}_2^2(\mu,
\hat{\mu})$ and this lower bound depends on drift, diffusion, and jump
functions of the ground truth jump-diffusion process
Eq.~\eqref{model_equation} and the approximate jump-diffusion process
Eq.~\eqref{approximate_equation}.

\begin{theorem}
\label{theorem6}
\rm (Lower error bound for the temporally decoupled squared
$W_2$-distance) We have the following lower bound:
\begin{equation}
 \tilde{W}_2^2(\mu, \hat{\mu})
  \geq \!\medint\int_0^T\!\sum_{i=1}^d\bigg(
  \E\Big[\!\medint\int_0^t \!\big[f_i(\bm{X}(s^-), s^-)
    -\hat{f}_i(\hat{\bm{X}}(s^-), s^-)\big]\text{d}s\Big]\bigg)^2\text{d}t
  +\!\medint\int_0^T\!\!\text{Tr}\Big(\bm{S}_t+\hat{\bm{S}}_t
  - 2(\bm{S}_t\hat{\bm{S}}_t)^{\frac{1}{2}}\Big)\text{d}t,
        \label{lowererrorbound}
\end{equation}
where $\bm{S}_t, \hat{\bm{S}}_t$ are two matrices in
$\mathbb{R}^{d\times d}$ with their elements defined by

\begin{equation}
  \begin{aligned}
    (\bm{S}_t)_{i,j} \coloneqq & \E \Big[\sum_{\ell=1}^m\medint\int_0^t
      \sigma_{i, \ell}(\bm{X}(s^-), s^-)\cdot
      \sigma_{j, \ell}(\bm{X}(s^-), s^-)\text{d}s\Big] \\
    \: & \qquad\quad + \E \Big[\medint\int_0^t\!\medint\int_U
      \beta_{i}(\bm{X}(s^-),\xi, s^-)\cdot
      \beta_{j}(\bm{X}(s^-),\xi, s^-)
      \nu(\text{d}\xi)\text{d}s\Big], \\
    (\hat{\bm{S}}_t)_{i,j} \coloneqq & \E\Big[\sum_{\ell=1}^m
      \medint\int_0^t \hat{\sigma}_{i, \ell}(\hat{\bm{X}}(s^-), s^-)\cdot
      \hat{\sigma}_{j, \ell}(\hat{\bm{X}}(s^-), s^-)
      \text{d}s\Big] \\
    \: & \qquad\quad + \E \Big[\medint\int_0^t\!
      \medint\int_U \beta_{i}(\bm{X}(s^-), \xi, s^-)\cdot
      \beta_{j}(\bm{X}(s^-), \xi, s^-)\nu(\text{d}\xi)
      \text{d}s\Big].
  \end{aligned}
  \end{equation}
        The square roots $(\bm{S}_t)^{\frac{1}{2}}$ and $(\hat{\bm{S}}_t)^{\frac{1}{2}}$ are the positive roots.
\end{theorem}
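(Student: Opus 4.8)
The plan is to derive a pointwise-in-$t$ lower bound for $W_2^2(\mu(t),\hat{\mu}(t))$ and then integrate it over $[0,T]$, noting that by \eqref{decoupled_def} and Theorem~\ref{theorem4} the integrated quantity is precisely $\tilde{W}_2^2(\mu,\hat{\mu})$. The key tool is a Gelbrich-type (Bures--Wasserstein) lower bound: for any two laws on $\mathbb{R}^d$ with means $m,\hat m$ and covariances $\Sigma,\hat\Sigma$, one has $W_2^2\geq|m-\hat m|_2^2+\text{Tr}\big(\Sigma+\hat\Sigma-2(\Sigma^{1/2}\hat\Sigma\Sigma^{1/2})^{1/2}\big)$, which depends only on the first two moments of the marginals and therefore holds uniformly over all couplings. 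I would also record the spectral identity $\text{Tr}\big((\Sigma^{1/2}\hat\Sigma\Sigma^{1/2})^{1/2}\big)=\text{Tr}\big((\Sigma\hat\Sigma)^{1/2}\big)$, since $\Sigma^{1/2}\hat\Sigma\Sigma^{1/2}$ and $\Sigma\hat\Sigma$ are similar with the same nonnegative spectrum; this reconciles the two displayed forms of the jump/diffusion term in \eqref{lowererrorbound}.

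Then I would carry out two computations. For the mean gap: integrating \eqref{model_equation} and \eqref{approximate_equation}, and using Assumption~\ref{assumptions}(ii) (the compensated Poisson measures $\tilde N,\hat N$ are c\`adl\`ag martingales) together with the vanishing expectation of It\^o integrals against $\bm{B}_t,\hat{\bm{B}}_t$, the martingale parts drop and $\E[X_i(t)]=\E[X_i(0)]+\E\big[\medint\int_0^t f_i(\bm{X}(s^-),s^-)\d s\big]$, with the analogous identity for $\hat X_i(t)$; under the shared initial condition $\bm{X}(0)=\hat{\bm{X}}(0)$ this yields $\E[X_i(t)-\hat X_i(t)]=\E\big[\medint\int_0^t(f_i(\bm{X}(s^-),s^-)-\hat f_i(\hat{\bm{X}}(s^-),s^-))\d s\big]$, whose squared $2$-norm is the first summand of \eqref{lowererrorbound}. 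For the matrix term: isolate the martingale parts $\bm{M}_t\coloneqq\medint\int_0^t\bm{\sigma}\,\d\bm{B}_s+\medint\int_0^t\medint\int_U\bm{\beta}\,\tilde N(\d s,\nu(\d\xi))$ and $\hat{\bm{M}}_t$; the It\^o isometry on the Brownian piece and Assumption~\ref{assumptions}(iii) on the jump piece (orthogonal martingale measure of intensity $\d s\,\nu(\d\xi)$, so the Brownian and jump contributions are $L^2$-orthogonal and the jump contribution to $\E[(M_t)_i(M_t)_j]$ is $\E[\medint\int_0^t\medint\int_U\beta_i\beta_j\,\nu(\d\xi)\d s]$) identify $\E[\bm{M}_t\bm{M}_t^{\top}]=\bm{S}_t$ and $\E[\hat{\bm{M}}_t\hat{\bm{M}}_t^{\top}]=\hat{\bm{S}}_t$. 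Applying the Gelbrich bound with these ingredients and integrating over $[0,T]$ produces \eqref{lowererrorbound}.

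The main obstacle is that the Gelbrich bound naturally involves the full covariances $\Sigma_t=\text{Cov}(\bm{X}(t))$, $\hat\Sigma_t=\text{Cov}(\hat{\bm{X}}(t))$, which exceed $\bm{S}_t,\hat{\bm{S}}_t$ only by the covariance of the drift integral $\medint\int_0^t\bm{f}\,\d s$ and its cross-covariance with $\bm{M}_t$; since $(\Sigma,\hat\Sigma)\mapsto\text{Tr}\big(\Sigma+\hat\Sigma-2(\Sigma\hat\Sigma)^{1/2}\big)$ is \emph{not} monotone in the positive-semidefinite order, these extra contributions cannot simply be discarded. The resolution must exploit the martingale structure — that $\bm{M}_t$ is $L^2$-orthogonal to every $\mathcal{F}_{s^-}$-measurable integrand, hence to the drift-integral component, so that the second-moment matrices split cleanly and the drift fluctuations are absorbed into the (already larger) mean-gap term as in the predictable/fluctuation splitting underlying \eqref{h_define} — together with the shared deterministic initial data. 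A secondary, routine point is justifying the interchange of $\medint\int_0^T$ with the infimum over couplings $\pi_i$ and with the mesh limit, which is inherited from the hypotheses of Theorems~\ref{theorem3} and~\ref{theorem4}.
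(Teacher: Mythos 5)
Your route is the same as the paper's: apply the Dowson--Landau/Gelbrich moment lower bound to $W_2^2(\mu(t),\hat{\mu}(t))$ pointwise in $t$, split off the mean difference (which, by the martingale property of the Brownian and compensated-Poisson integrals and the common initial datum, equals the drift term in \eqref{lowererrorbound}), and integrate over $[0,T]$ using $\tilde{W}_2^2(\mu,\hat{\mu})=\int_0^T W_2^2(\mu(t),\hat{\mu}(t))\,\text{d}t$. The identification $\E[\bm{M}_t\bm{M}_t^{\top}]=\bm{S}_t$ via the It\^o isometry and the orthogonality condition in Assumption~\ref{assumptions} is also exactly what the paper intends, and your observation that $\text{Tr}\big((\Sigma^{1/2}\hat{\Sigma}\Sigma^{1/2})^{1/2}\big)=\text{Tr}\big((\Sigma\hat{\Sigma})^{1/2}\big)$ correctly reconciles the two standard forms of the Bures term.

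The gap you flag is real, but your proposed patch does not close it. The Gelbrich bound is stated in terms of the full covariances $\Sigma_t=\mathrm{Cov}(\bm{X}(t))$, and $\Sigma_t\neq\bm{S}_t$ in general: writing $\bm{X}(t)=\bm{X}(0)+\int_0^t\bm{f}\,\text{d}s+\bm{M}_t$, the cross term $\E\big[\bm{M}_t\big(\int_0^t\bm{f}(\bm{X}(s),s)\,\text{d}s\big)^{\top}\big]=\int_0^t\E\big[\bm{M}_s\,\bm{f}(\bm{X}(s),s)^{\top}\big]\,\text{d}s$ (by the tower property) is generically nonzero, because $\int_0^t\bm{f}\,\text{d}s$ is $\mathcal{F}_t$-measurable rather than $\mathcal{F}_0$-measurable; so $\bm{M}_t$ is \emph{not} $L^2$-orthogonal to the drift integral, contrary to your sketch. (Concretely, for $\text{d}X=X\,\text{d}t+\text{d}B_t$ with $X_0=0$ one has $\mathrm{Var}(X_t)=(e^{2t}-1)/2$ while $S_t=t$.) Since, as you yourself note, the functional $(\Sigma,\hat{\Sigma})\mapsto\text{Tr}\big(\Sigma+\hat{\Sigma}-2(\Sigma\hat{\Sigma})^{1/2}\big)$ is not monotone in the positive-semidefinite order, one cannot pass from the bound with $(\Sigma_t,\hat{\Sigma}_t)$ to the bound with $(\bm{S}_t,\hat{\bm{S}}_t)$ without a genuinely new argument, and ``the drift fluctuations are absorbed into the mean-gap term'' cannot be that argument: the mean-gap term is a deterministic quantity and does not see the covariance of $\int_0^t\bm{f}\,\text{d}s$. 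For what it is worth, the paper's own proof simply cites Theorem~1 of \cite{dowson1982frechet} and asserts $W_2^2(\mu_0(t),\hat{\mu}_0(t))\geq\text{Tr}\big(\bm{S}_t+\hat{\bm{S}}_t-2(\bm{S}_t\hat{\bm{S}}_t)^{1/2}\big)$ with no further justification, so it is exactly as incomplete on this point (the step is exact only when the drift is deterministic or absent); having correctly diagnosed the difficulty, your write-up still owes a valid resolution or an added hypothesis.
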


\begin{proof}
First, we denote
  
\begin{equation}
\bm{X}_0(t)\coloneqq \bm{X}(t) -
\E[\bm{X}(t)],\,\,\,\hat{\bm{X}}_0(t)\coloneqq \hat{\bm{X}}(t) -
\E[\hat{\bm{X}}(t)]
\end{equation}
and let $\mu_0(t)$ and $\hat{\mu}_0(t)$ to be the probability
distributions of $\bm{X}_0(t)$ and $\hat{\bm{X}}_0(t)$, respectively.
From Theorem 1 in \cite{dowson1982frechet}, we have
\begin{equation}
  W_2^2(\mu_0(t), \hat{\mu}_0(t)) \geq
  \text{Tr}(\bm{S}_t+\hat{\bm{S}}_t - 2(\bm{S}_t\hat{\bm{S}}_t)^{\frac{1}{2}}).
\end{equation}
Because

\begin{equation}
  W_2^2\big(\mu(t), \hat{\mu}(t)\big)=W_2^2(\mu_0(t), \hat{\mu}_0(t)) +
  \sum_{i=1}^d\left(\E\bigg[\int_0^t \Big(f_i\big(\bm{X}(s^-), s^-\big)
    -\hat{f}_i\big(\hat{\bm{X}}(s^-), s^-\big)\Big)\text{d}s\bigg]\right)^2,
\end{equation}
Eq.~\eqref{lowererrorbound} holds, proving Theorem~\ref{theorem6}.

\end{proof}

Theorem~\ref{theorem6} gives a lower bound for the temporally
decoupled $\tilde{W}(\mu, \hat{\mu})$. Specifically, if $d=1$,
Eq.~\eqref{lowererrorbound} can be further simplified to

\begin{equation}
  \begin{aligned}
    \tilde{W}_2^2(\mu, \hat{\mu})\geq & \int_0^T\bigg(\E\Big[\medint
      \int_0^t f_1(X(s^-),s^-)\d s\Big] -
    \E\Big[\medint\int_0^t\hat{f}_1(\hat{X}(s^-), s^-)\text{d}s\Big]\bigg)^2
    \text{d}s \\
    \: & \,\,  + \int_0^T \bigg(\E\Big[\medint\int_0^t
      \sigma^2(X(s^-),s^-)\text{d}s+\medint\int_U
      \beta^2\big(X(s^-), \xi,s^-\big)\nu(\d\xi)\text{d}s\bigg]^{\frac{1}{2}} \\
    \: & \qquad\qquad-\E\Big[\medint\int_0^t
      \hat{\sigma}^2\big(\hat{X}(s^-),s^-\big)\text{d}s
      + \medint\int_U \hat{\beta}^2\big(\hat{X}(s^-), \xi,s^-\big)\nu(\d\xi)
      \text{d}s\Big]^{\frac{1}{2}}\bigg)^2\text{d}t.
    \end{aligned}
    \label{one_d_lower}
\end{equation}
Thus, if the jump-diffusion process to be reconstructed is
one-dimensional (Eq.~\eqref{model_equation}), then we conclude that, as
$\tilde{W}_2^2(\mu, \hat{\mu})\rightarrow 0$,

\begin{equation}
  \begin{aligned}
  & \E\Big[\medint\int_0^t f_1(X_1(s^-),s^-)\d s\Big]
    -\E\Big[\medint\int_0^t\hat{f}_1(\hat{X}(s^-), s^-)\text{d}s\Big] \rightarrow 0,
    \,\,\,\text{a.s.} \quad \mbox{and}\\
    \: & \E\Big[\medint\int_0^t \sigma^2(X_1(s^-),s^-)\text{d}s
      + \medint\int_U \beta^2(X_1(s^-), \xi,s^-)\nu(\d\xi)\big)
      \text{d}s\Big]^{\frac{1}{2}} \\
    & \qquad\quad - \E\Big[\medint\int_0^t
      \hat{\sigma}^2\big(\hat{X}_1(s^-),s^-\big)\text{d}s
  + \medint\int_U \hat{\beta}^2\big(\hat{X}_1(s^-), \xi,s^-\big)\nu(\d\xi)\text{d}s
  \Big]^{\frac{1}{2}}\rightarrow 0, \,\,\,\text{a.s.}
\end{aligned}
    \label{one_implication}
  \end{equation}
However, Eq.~\eqref{one_implication} does not imply that either
  
\begin{equation}
  \E\Big[\medint\int_0^t \hat{\sigma}^2(\hat{X}_1(s^-),s^-)\d s\Big]
  - \E\Big[\medint\int_0^t \hat{\sigma}^2(\hat{X}_1(s^-),s^-)\d s\Big]\rightarrow 0
\end{equation}
or

\begin{equation}
  \E\Big[\medint\int_0^t\!
    \medint\int_U \hat{\beta}^2(X_1(s^-), \xi,s^-)\nu(\d\xi)\text{d}s\Big] -
  \E\Big[\medint\int_0^t\!
    \medint\int_U \hat{\beta}^2(X_1(s^-), \xi,s^-)\nu(\d\xi)\text{d}s\Big] \rightarrow 0.
\end{equation}

A lower bound for the temporally decoupled squared $W_2$ distance
between two jump-diffusion processes that depends on the expectation
of the summation of the error in the jump and the error in the
diffusion functions is worth further investigation. Such an intricate
analysis is beyond the scope of this paper but could imply that
minimizing the $W_2$ distance is necessary for a good reconstruction
of both the diffusion function and the jump function.  Moreover, when
$d>1$, it is not easy to make further simplifications to
Eq.~\eqref{lowererrorbound}. Analysis of the properties of the matrix
$\bm{S}_t+\hat{\bm{S}}_t - 2(\bm{S}_t\hat{\bm{S}}_t)^{\frac{1}{2}}$ in
Eq.~\eqref{lowererrorbound} can be quite difficult.  Nonetheless, we
shall show in our numerical examples that our temporally decoupled
squared $W_2$-distance $\tilde{W}(\mu, \hat{\mu})$ method can
accurately reconstruct both the diffusion and the jump functions in
several examples of one-dimensional and multidimensional
jump-diffusion processes especially when the drift function can be
provided as prior information.

\section{Numerical experiments}
\label{section4}
In this section, we implement our methods through numerical examples
and investigate the effectiveness of the temporally decoupled squared
$W_2$-distance method for reconstructing the jump-diffusion process
Eq.~\eqref{model_equation}. We also compare our results with those
derived from using other commonly used losses in uncertainty
quantification and methods for jump-diffusion process
reconstruction. Additionally, we explore how prior knowledge on the
ground truth jump-diffusion process Eq.~\eqref{model_equation} helps
in its reconstruction. All experiments are carried out using Python
3.11 on a desktop with a 32-core Intel® i9-13900KF CPU (when comparing
runtimes, we train each model on just one core).


In all experiments, we use three feed-forward neural networks to
parameterize the drift, diffusion, and jump functions in the
approximate jump-diffusion process Eq.~\eqref{approximate_equation},
\textit{i.e.},
\begin{equation}
    \hat{f}\coloneqq \hat{f}(X, t;\Theta_1),\,\,\, \hat{\sigma}\coloneqq
  \hat{\sigma}(X, t;\Theta_2), \,\,\,\hat{\beta}\coloneqq
  \hat{\sigma}(X, \xi, t;\Theta_3).
\end{equation}
$\Theta_1, \Theta_2, \Theta_3$ are the parameter sets in the three
parameterized neural networks, respectively.  We modified the
\texttt{torchsde} Python package in \cite{li2020scalable} to implement
the Euler-Maruyama scheme for generating trajectories of the two
jump-diffusion processes Eqs.~\eqref{model_equation} and
~\eqref{approximate_equation}. Details of the training settings and
hyperparameters for all examples are given in
~\ref{training_details}. In Examples~\ref{example1} and
~\ref{example2}, the reconstruction errors are the relative $L^2$
errors:

\begin{align}
  \text{drift error}\coloneqq & \frac{\sum_{i=0}^N\sum_{j=1}^{M_s}\big|f(x_j(t_i), t_i)
    - \hat{f}(x_j(t_i), t_i)\big|}{\sum_{i=0}^N\sum_{j=1}^{M_s}|f(x_j(t_i), t_i)|},
  \label{drift_error}\\
 \text{diffusion error}\coloneqq & 
  \frac{\sum_{i=0}^N\sum_{j=1}^{M_s}\big||\sigma(x_j(t_i), t_i)|
   - |\hat{\sigma}(x_j(t_i), t_i)|\big|}{\sum_{i=0}^N\sum_{j=1}^{M_s}\big|\sigma(x_j(t_i), t_i)\big|}
    \label{diffusion_error}\\
\text{jump error}\coloneqq & \frac{\sum_{i=0}^N\sum_{j=1}^{M_s}\int_U \big|\beta(x_j(t_i), \xi, t_i)
  - \hat{\beta}(x_j(t_i), \xi, t_i)\big|\text{d}\nu(\xi)}{\sum_{i=0}^N\sum_{j=1}^{M_s}
  \int_D\big|\beta(x_j(t_i), t_i)\big|\text{d}\nu(\xi)},
\label{jump_error}
\end{align}
where $N$ is the number of time steps and $M_s$ is the number of
training trajectories.

\begin{example}
\rm
\label{example1}
For our first example, we reconstruct the following 1D jump-diffusion
process for describing the non-defaultable zero-coupon bond pricing
\cite{jang2007jump}:
\begin{equation}
        \d X_t = (b+aX_t)\d t +\sigma_0\sqrt{|X_t|}\d B_t 
        + \d C_t, \,\,\, a,b,\sigma_0\in\mathbb{R}, t\in[0, T]
        \label{example1_model}
    \end{equation}
where $C_t=\sum_{i=1}^{N_t}Y_i$, $Y_i$ are independently identically
distributed, and $N_t$ obeys the Poisson distribution with intensity
$t$. We take $Y_i\equiv y_0$ so that Eq.~\eqref{example1_model} can be
rewritten as
\begin{equation}
 \d X_t = (b+y_0+aX_t)\d t +\sigma_0\sqrt{|X_t|}\d B_t + y_0\d
 \tilde{N}_t,\,\,\, t\in[0, T],
\label{example1_numerical}
\end{equation}
with $\tilde{N}_t$ a 1D compensated Poisson process with intensity
$t$. We define ground truth by $b=4, a=-1, \sigma_0=0.4, y_0=1$ in
Eq.~\eqref{example1_numerical} and take $T=20.2$ and initial condition
$X_0=2$.  We reconstruct Eq.~\eqref{example1_numerical} by minimizing
the temporally decoupled $W$-distance Eq.~\eqref{temporal_calculation}.

\begin{figure}
  \centering
  \hspace{1.5cm}\includegraphics[width=0.88\linewidth]{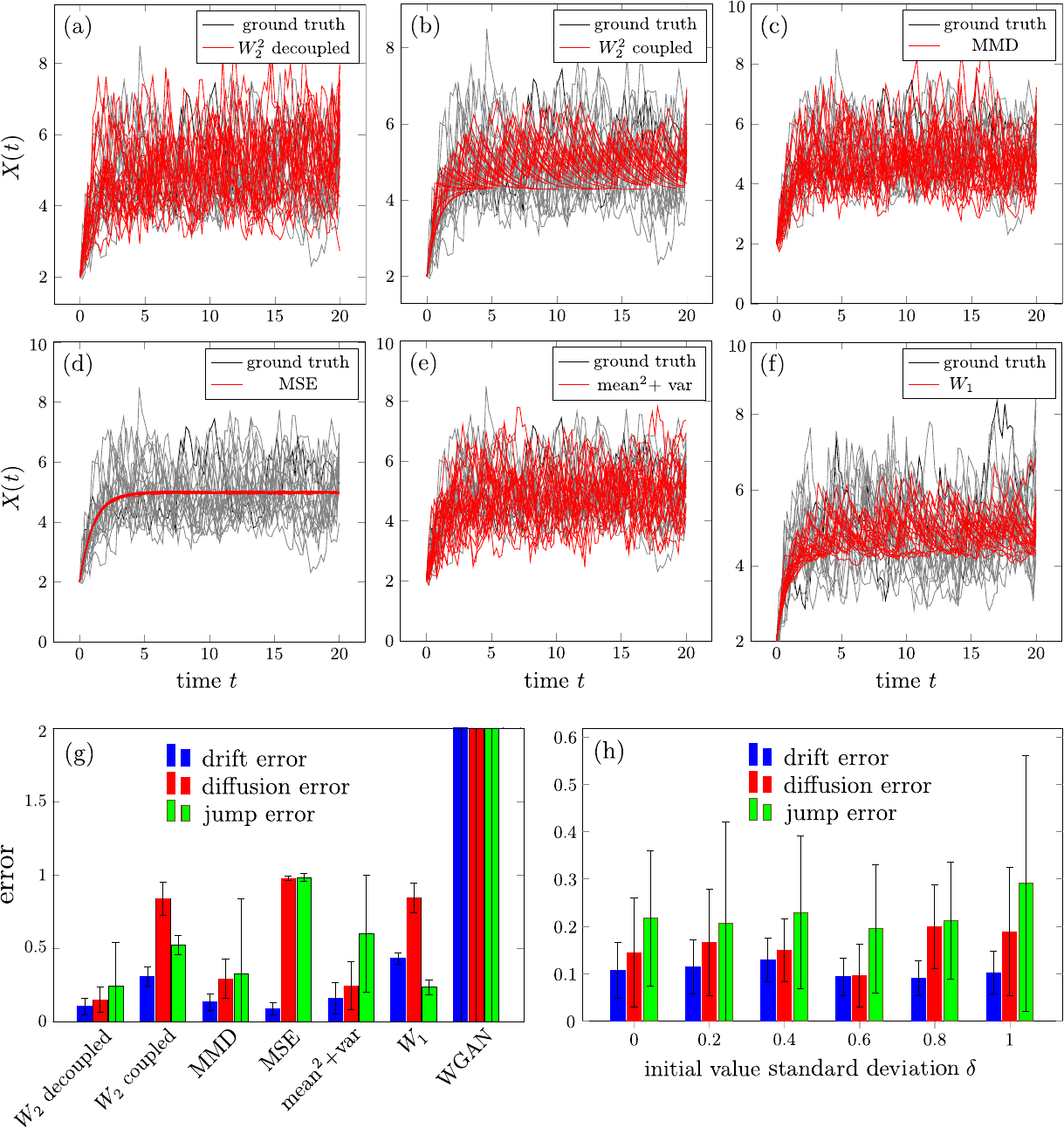}
    \caption{Reconstruction of trajectories and model functions. We
      define ground truth as $b=4, a=-1, \sigma_0=0.4, y_0=1$ in
      Eq.~\eqref{example1_numerical}, with $T=20.2$ and initial
      condition $X_0=2$.  (a-f) ground truth (black) and reconstructed
      trajectories (red) generated from the learned jump-diffusion
      process by minimizing different loss functions or using
      different methods.  (g) The reconstruction errors of the drift,
      diffusion, and jump functions defined in
      Eqs.~\eqref{drift_error}, \eqref{diffusion_error}, and
      \eqref{jump_error}. We compare errors from minimizing our
      temporally decoupled squared $W_2$-distance versus those from
      minimizing the MSE, MMD, mean$^2$+var, the $W_1$-distance
      $W_1(\mu, \hat{\mu})$, the squared $W_2$-distance $W_2^2(\mu_N,
      \hat{\mu}_N)$, and the error of results obtained using the WGAN
      method. The mean and standard deviation of the error for
      different methods are obtained by repeating the experiment 10
      times. (h) The reconstruction errors in the drift, diffusion,
      and jump functions defined in Eqs.~\eqref{drift_error},
      \eqref{diffusion_error}, and \eqref{jump_error} w.r.t. the
      standard deviation $\delta$ of the initial condition
      (Eq.~\eqref{IC_noise}).}
    \label{fig:example1}
\end{figure}

We compare our temporally decoupled squared $W_2$ distance loss
function with the WGAN method and other loss functions (MSE, MMD,
mean$^2$+var, $W_1$ distance, and the squared $W_2$ distance
$W_2^2(\mu_N, \hat{\mu}_N)$. The definitions of the other loss
functions are given in ~\ref{def_loss}). As shown in
Fig.~\ref{fig:example1}(a-f), the trajectories we obtained by
minimizing our temporally decoupled squared $W_2$-distance accurately
match the ground truth trajectories generated by
Eq.~\eqref{example1_numerical}. When using $W_1(\mu, \hat{\mu})$,
$W_2^2(\mu, \hat{\mu})$, and MSE loss functions, the reconstructed
trajectories deviate qualitatively from those of the ground truth.
The solutions of the reconstructed jump-diffusion process generated by
the WGAN method are also qualitatively incorrect and are thus not
shown here. From Fig.~\ref{fig:example1}(g), minimizing our temporally
decoupled squared $W_2$-distance gives the smallest reconstruction
errors $f-\hat{f}, \sigma-\hat{\sigma}$, and $\beta-\hat{\beta}$. The
average errors in the reconstructed drift, diffusion, and jump are
kept below 0.25. Thus, minimizing our temporally decoupled squared
$W_2$ distance is found to be more accurate in reconstructing the
jump-diffusion process Eq.~\eqref{example1_numerical} than other
benchmark methods.  We also list the average runtime per training
iteration as well as the memory usage of different methods in
Table~\ref{tab:example1_time}. The runtime of the WGAN method is
significantly longer than that of other methods. Furthermore, the
computational cost of using our temporally decoupled squared $W_2$ is
similar to the cost of using other loss functions while our temporally
decoupled squared $W_2$ method can accurately reconstruct
Eq.~\eqref{example1_numerical}.
\begin{table}[h!]
\centering
  \caption{The runtime and memory usage of different methods
    (loss functions) to reconstruct the jump-diffusion process
    Eq.~\eqref{example1_numerical}.}
  \vspace{2mm}
  \small
  \begin{tabular}{lccccccc}
\toprule  
Method (loss)  & \multirow{2}{*}{} temporally & & & & & & \\
& decoupled $W_2^2$ & $W_2^2$ & MMD  & MSE & mean$^2$+var & $W_1$ & WGAN\\ 
\midrule
Average time/iteration (s) & 44.5 & 48.2 & 59.4 & 29.8 & 48.7 & 39.2 & 346.3\\ 
Average memory use (Gb) & 2.61 &  3.52 & 2.59 & 5.00 & 2.61 & 2.60 & 3.34\\ 
\bottomrule
\end{tabular}
\label{tab:example1_time}
\end{table}

We test the numerical performance of our temporally decoupled squared
$W_2$ method when reconstructing Eq.~\eqref{example1_numerical} under
different initial conditions.  Instead of using the same initial
condition for all solutions, we sample the initial value from

\begin{equation}
    X_0 \sim \mathcal{N}(2, \delta^2),
    \label{IC_noise}
\end{equation}
where $\mathcal{N}(2, \delta^2)$ is the 1D normal distribution of mean
2 and variance $\delta^2$.  Using the same hyperparameters in the
neural networks and for training (in Table~\ref{training_details}) as
in Example~\ref{example1}, we varied the standard deviation $\delta=0,
0.2, 0.4, 0.6, 0.8, 1$ and implemented the temporally decoupled
squared $W_2$ distance as a loss function. The results shown in
Fig.~\ref{fig:example1}(h) indicate that the reconstruction of
Eq.~\eqref{example1_numerical} using the squared $W_2$ loss function
is rather insensitive to ``noise'', \textit{i.e.}, the standard
deviation $\delta$ in the distribution of the initial condition.

Finally, we also use different values of the parameters $\sigma_0$ and
$y_0$ in the diffusion and drift functions.  The reconstructed drift
functions $\hat{f}$ remain accurate when $\sigma_0$ and $y_0$ are
varied.  When $\sigma_0, y_0$ are small, the corresponding diffusion
and jump functions can also be accurately reconstructed; however, when
$\sigma_0, y_0$ are large, the reconstruction of the diffusion
function can be less accurate because the trajectories for training
are more sparsely distributed.  Details of the results are given
in~\ref{noise_strength}.

\end{example}

It was shown in \cite{xia2023a} that the accuracy of reconstructing a
pure-diffusion process ($\bm{\beta}=0$ in Eq.~\eqref{model_equation})
can deteriorate if trajectories for training are too sparsely
distributed (too few trajectories/too high noise).  However, we find
that prior information on the drift function in
Eq.~\eqref{model_equation} enables efficient reconstruction even if
the number of training trajectories is limited, when the temporally
decoupled squared $W_2$ method for reconstructing jump-diffusion
processes without prior information would otherwise fail.  In the next
example, we demonstrate enhanced reconstruction performance after
incorporating prior information on the drift function of
Eq.~\eqref{model_equation}, greatly improving the accuracy of
reconstructed diffusion and jump functions.

\begin{example}
\rm
\label{example2}
Consider the following 1D jump-diffusion process
\begin{equation}
  \d X_t = \alpha(X_t, t) \d t + \sigma(X_t, t) \d B_t
  + \beta(X_t, t)\d \tilde{N}_t,\,\,\,  t\in[0, T],
 \label{example2_model}
\end{equation}
where $\tilde{N}_t$ is a 1D compensated Poisson process with intensity
$t$. This model, if we set $S_t\equiv e^{X_t}$, can describe the
posited stock returns under a deterministic jump ratio
\cite{merton1976option}. To test the efficiency of our temporally
decoupled squared $W_2$-distance method, we set $\alpha\equiv
r_0=0.05$ (\textit{i.e.}, the drift function to be a constant
risk-free interest rate \cite{hull1993options}), the initial condition
$X_0 = 1$, and $T=5.1$ and explore different forms of the diffusion
and jump functions $\sigma(X, t)$ and $\beta(X, t)$.  We then input
the drift, diffusion, or the jump function $\alpha, \sigma$, or
$\beta$ in Eq.~\eqref{example2_model} as prior information to test how
well our method can reconstruct the other terms.

Summarizing, i) we first give no prior information and reconstruct all
three functions $\alpha, \sigma$, and $\beta$; ii) we specify the
risk-free interest rate $\alpha\equiv r_0$ and reconstruct $\sigma$,
and $\beta$; iii) we provide the diffusion function $\sigma$ and
reconstruct $\alpha$ and $\beta$; iv) we provide the jump function
$\beta$ and reconstruct $f$, and $\sigma$.  In this example, ``const''
refers to using a constant diffusion or jump function:
\begin{equation}
\sigma(X, t)\equiv\sigma_0\,\,\, \text{or}\,\,\, \beta(X, t)\equiv\beta_0,
\label{constant_2}
\end{equation}
``linear'' refers to using a linear diffusion or jump function
\begin{equation}
  \sigma(X, t)\equiv\sigma_0X\,\,\, \text{or}\,\,\,
  \beta(X, t)\equiv\beta_0X,
\label{linear_2}
\end{equation} 
and ``langevin'' refers to using a diffusion or jump function of the
following form
\begin{equation}
  \sigma(X, t)\equiv\sigma_0\sqrt{|X|}\,\,\, \text{or}\,\,\,
  \beta(X, t)\equiv\beta_0\sqrt{|X|}.
\label{langevin_2}
\end{equation}

\begin{figure}
   \centering
    \hspace{1cm}\includegraphics[width=0.88\linewidth]{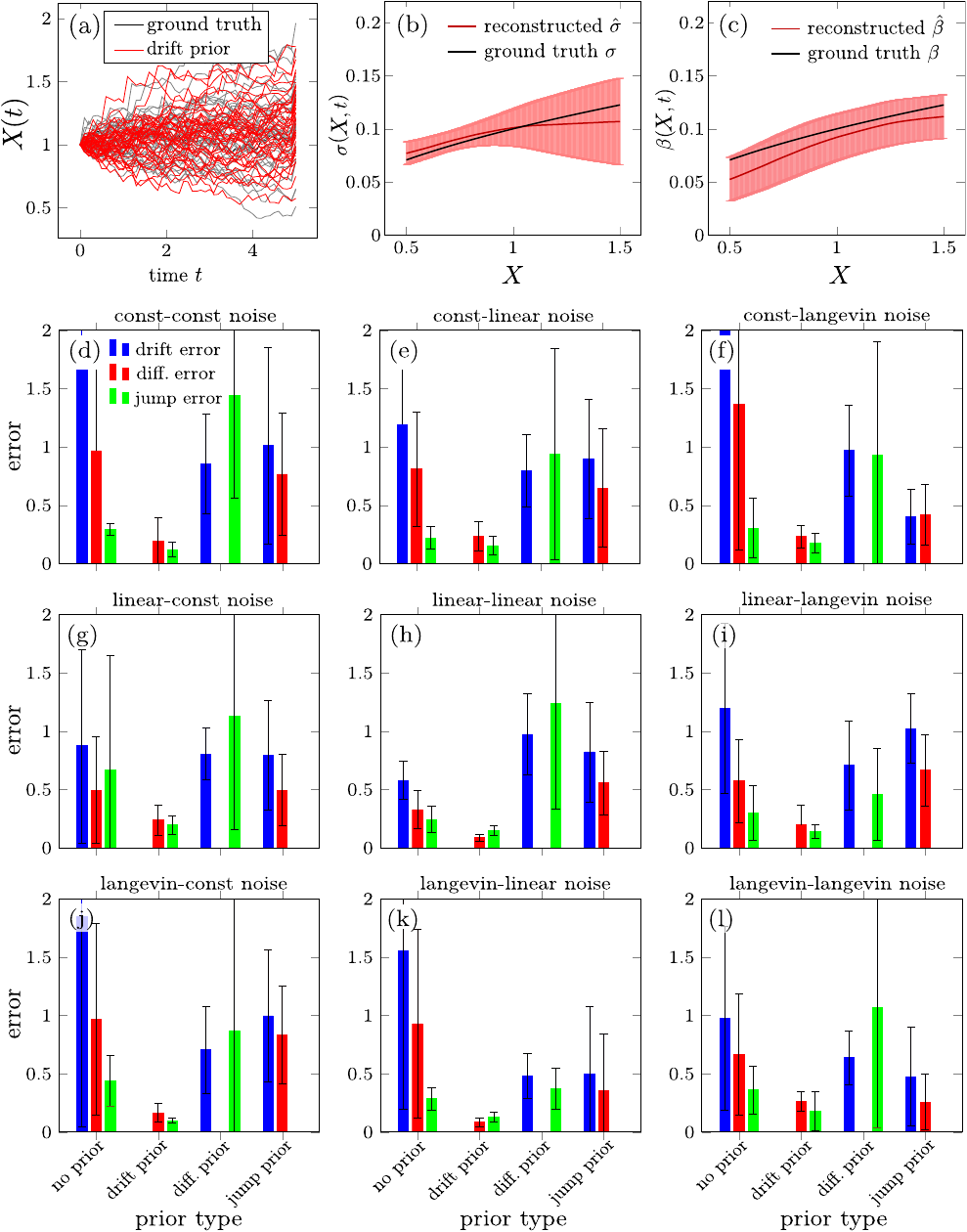}
\caption{(a) The trajectories generated by the ground truth (black)
  jump-diffusion process with $\sigma(X, t)\equiv 0.1\sqrt{|X|}~
  \text{and}~\beta(X, t)\equiv 0.1\sqrt{|X|}$ and given drift function
  in Eq.~\eqref{example2_model}, plotted against reconstructed
  trajectories (red) using the same drift function prior. (b-c) The
  ground truth diffusion and jump functions
  $\sigma(X,t)\equiv\sigma_0\sqrt{|X|}$ and $\beta(X,
  t)\equiv\beta_0\sqrt{|X|}$) shown against the reconstructed functions
  $\hat{\sigma}(X, t)$ and $\hat{\beta}(X, t)$. (with drift function
  given as prior). The red curves are the mean $\hat{\sigma}(X, t)$
  and $\hat{\beta}(X, t)$ while the shaded bands show their standard
  deviations, calculated over 5 independent experiments). (d-k) The
  reconstruction errors of the drift, diffusion, and jump functions
  without prior information on Eq.~\eqref{example2_model} or with one
  of the drift, diffusion, and jump functions given. When the drift
  function is given, errors in the reconstructed diffusion and jump
  functions are the smallest in all cases (error bars under ``drift
  prior.'')}
    \label{fig:example2}
\end{figure}

To illustrate the reconstruction, we set $\sigma_0=\beta_0=0.1$ in
Eqs.~\eqref{constant_2}, ~\eqref{linear_2}, and \eqref{langevin_2},
and plot in Fig.~\ref{fig:example2}(a) the ground truth solutions
(black) generated from Eq.~\eqref{example2_model} with a given drift
function.  Using the same drift function, trajectories of the
reconstructed jump-diffusion process are shown in red and exhibit a
distribution that matches well with that of the ground truth
solutions.  Moreover, as shown in Fig.~\ref{fig:example2}(b,c), the
differences between the learned diffusion and jump functions
$\hat{\sigma}(X, t)$ and $\hat{\beta}(X, t)$ and the ground truth
diffusion and jump functions $\sigma(X, t)=0.1\sqrt{|X|}, \beta(X,
t)=0.1\sqrt{|X|}$ are small.



If no prior information on Eq.~\eqref{example2_model} is given, the
average errors for the reconstructed drift, diffusion, and jump
functions are 1.412, 0.790, and 0.347, respectively.  This high error
might arise from training set trajectories that are too noisy or
sparsely distributed. However, if the drift function is given, the
diffusion and jump functions can be much more accurately
reconstructed, leading to relative errors below 0.2 for all three
forms of $\sigma(X_t, t)$ and $\beta(X_t, t)$ used to define the
ground truth [see Fig.~\ref{fig:example2} (d-k)]. On the other hand,
providing the diffusion or jump function does not improve the accuracy
of the reconstruction of the other unknown functions in
Eq.~\eqref{example2_model}.  The average errors of the reconstructed
diffusion and jump functions, when different prior information is
given, are listed in Table~\ref{tab:example2_errors}.

In ~\ref{num_traj}, we carry out an additional numerical experiment by
varying the number of trajectories in the training set. The errors in
the reconstructed drift, diffusion, and jump function decrease when
the number of trajectories for training increases without any prior
information. This indicates that our temporally decoupled squared
$W_2$ method has the potential to accurately reconstruct
Eq.~\eqref{example2_model} even without prior information provided
there are a sufficient number of training trajectories. On the other
hand, if the drift function is given as prior information, the errors
of the reconstructed diffusion and jump functions are around 0.2 even
when only 100 trajectories are used. Therefore, information on the
drift function can significantly boost the performance of our
temporally decoupled squared $W_2$ method, allowing accurate
reconstruction of Eq.~\eqref{example2_model} even when the number of
observed trajectories is limited.

In real physical systems, the drift function can often be obtained by
measurements over a macroscopic ensemble of trajectories, such as
mass-action kinetics if the $X(t)$ in Eq.~\eqref{model_equation} denotes
some physical quantity, \textit{e.g.}, the number density of molecules
\cite{koudriavtsev2011law,chellaboina2009modeling}. Thus, after
independently measuring the drift function and inputting it as a prior
knowledge, our temporally decoupled squared $W_2$-distance method can
be used to reconstruct the diffusion and jump functions efficiently.

\begin{table}[h!]
\centering
  \caption{Average errors in the reconstructed drift, diffusion, and
    jump functions when using the temporally decoupled squared $W_2$
    distance to reconstruct Eq.~\eqref{example2_model}. The error is
    taken over 9 possible combinations of different forms of diffusion
    and jump functions (constant, Langevin, and linear in
    Eqs.~\eqref{constant_2}, \eqref{langevin_2}, and \eqref{linear_2}) in
    Fig.~\ref{fig:example2}.}
  \vspace{2mm}
\small
\begin{tabular}{lccccc}
\toprule Prior info & error of reconstructed $\hat{f}$ & Error
of reconstructed $\hat{\sigma}$ & Error of reconstructed $\hat{\beta}$
\\ \midrule
No prior & \(1.412 (\pm 1.520) \) & \(0.790 (\pm 0.714) \) & \( 0.347
(\pm 0.356) \) \\
Given $\alpha(x, t)$ & 0 & \(0.189 (\pm 0.124) \) & \(
0.150 (\pm 0.080) \) \\ Given $\sigma(x, t)$ & \(0.771 (\pm 0.333) \)
& 0 & \( 0.939 (\pm 0.854) \) \\
Given $\beta(x, t)$ & \( 0.769 (\pm 0.520) \) & \( 0.556 (\pm 0.393)
\) & 0 \\ \bottomrule
\end{tabular}
\label{tab:example2_errors}
\end{table}

We carry out an extra numerical experiment reconstructing
Eq.~\eqref{example2_model} by varying $\sigma_0, \beta_0$ in
Eqs.~\eqref{constant_2},~\eqref{linear_2}, and~\eqref{langevin_2}. With the
drift function provided, our temporally decoupled squared
$W_2$-distance method can accurately reconstruct the diffusion and the
jump functions for different values of $\sigma_0$ and $\beta_0$ in
Eqs.~\eqref{constant_2}, \eqref{linear_2}, and \eqref{langevin_2}. The
results are shown in~\ref{varying_coef}.
\end{example}

In our last example, we test whether our temporally decoupled
squared $W_2$-distance can accurately reconstruct a 2D jump-diffusion
process with correlated Brownian-type and compensated-Poisson-type
noise across the two stochastic variables.

\begin{example}
    \rm
    \label{example3}
We reconstruct the following 2D jump-diffusion process, which is
obtained by superimposing a 2D compensated Poisson process
$\tilde{\bm{N}}_t\coloneqq (\tilde{N}_1(t), \tilde{N}_2(t))$ onto the
pure diffusion process that describes the dynamics of a synthetic data
set \cite{gao2022data,ma2021learning}:
\begin{equation}
  \d \bm{X}(t) = -\bm{g}(\bm{X}(t))\d t +
  \bm{\sigma}(\bm{X}(t))\d\bm{W}_t + \bm{\beta}(\bm{X}(t))\d
  \tilde{\bm{N}}_t, \,\,\, \bm{X}(t=0) = \bm{X}_0,\,\, t\in[0, T].
\label{example3_model}
\end{equation}
$\tilde{N}_1(t)$ and $\tilde{N}_2(t)$ are independent and both have
intensity $t$. Here, $\bm{X}(t)=(X_1(t), X_2(t))\in\mathbb{R}^2$,
$\bm{g}(\bm{X}):\mathbb{R}^2\rightarrow\mathbb{R}^2$ is the drift
function, and $\bm{\sigma},
\bm{\beta}:\mathbb{R}^2\rightarrow\mathbb{R}^{2\times2}$ are the
diffusion and jump functions, respectively. The drift function
$\bm{g}$ is given by
%
%
\begin{equation}
\begin{aligned}
  \bm{g}(\bm{X}) = & \bigg(\frac{1}{\sigma_1}\frac{N_1}{N_1+N_2}(X_1 - \mu_{11})
  +\frac{1}{\sigma_2}\frac{N_2}{N_1+N_2}(X_1 - \mu_{21}), \\
  \: &\qquad\quad    \frac{1}{\sigma_1}\frac{N_1}{N_1+N_2}(X_2 - \mu_{21})
  +\frac{1}{\sigma_2}\frac{N_2}{N_1+N_2}(X_2 - \mu_{22})\bigg)^T, \\
N_1\coloneqq & N_1(\bm{X})= \frac{1}{\sqrt{2\pi}\sigma_1}
  \exp\Big(-\tfrac{(X_1-\mu_{11})^2}{2\sigma_1^2}-
  \tfrac{(X_2-\mu_{12})^2}{2\sigma_1^2}\Big), \\
  N_2\coloneqq & N_2(\bm{X})=\frac{1}{\sqrt{2\pi}\sigma_2}
  \exp\Big(-\tfrac{(X_1-\mu_{21})^2}{2\sigma_2^2}
  -\tfrac{(X_2-\mu_{22})^2}{2\sigma_2^2}\Big).
\end{aligned}
  \label{g_definition}
\end{equation}
The parameters are set as $\sigma_1=1, \sigma_2=0.95, \mu_{11}=1.6,
\mu_{12}=1.2, \mu_{21}=1.8, \mu_{22}=1.0$. We set $T = 10.2$ and an
initial condition $\bm{X}_0=(1.7, 1.1)$.
We take the correlated diffusivity as
\begin{equation}
\bm{\sigma}=
\begin{bmatrix}
    \sigma_0\sqrt{|X_1|} & c_1\sigma_0\sqrt{|X_2|} \\
    c_1\sigma_0\sqrt{|X_1|} & \sigma_0\sqrt{|X_2|} 
\end{bmatrix},
\label{sigma2d}
\end{equation}
and the jump function of the compensated Poisson
process as
\begin{equation}
        \bm{\beta}=
\begin{bmatrix}
    \beta_0  & c_2\beta_0 \\
    c_2\beta_0 & \beta_0 \\
\end{bmatrix}.
\label{beta2d}
\end{equation}
Here, $c_1$ and $c_{2}$ determine the correlations of Brownian noise
and compensated Poisson process across the two dimensions,
respectively. Specifically, when $c_1=0$ (or $c_2=0$), the Brownian
(or compensated Poisson) noise in each variable is independent of the
other; when $c_1=1$ (or $c_2=1$), the Brownian-type (or
compensated-Poisson-type) noise across the two dimensions are linearly
dependent; when $c_1=-1$ (or $c_2=-1$), the Brownian-type (or
compensated-Poisson-type) noise across the two dimensions are
perfectly negatively correlated.
    
From Example~\ref{example2}, imposing a prior on the drift function
can greatly improve the accuracy of the reconstructed diffusion and
jump functions. Thus, we input $\bm{g}(\bm{X})$ defined in
Eq.~\eqref{g_definition} as prior information.  Since the jump-diffusion
process described by Eq.~\eqref{example3_model} is two-dimensional, we
use the following error metric to measure the errors in the
diffusion and jump functions:
\begin{equation}
\text{diffusion error} =  \frac{\sum_{i=0}^N\sum_{j=1}^{M_s}
\|\bm{\sigma}\bm{\sigma}^T(x_j(t_i), t_i) -
\hat{\bm{\sigma}}\hat{\bm{\sigma}}^T(x_j(t_i),
t_i)\|^2_F}{\sum_{i=0}^N\sum_{j=1}^{M_s}\|\hat{\bm{\sigma}}\hat{\bm{\sigma}}^T(x_{j}(t_i),
t_i)\|_F^2};
\label{sigma_error}
\end{equation}
\begin{equation}
\text{jump error}=  \frac{\sum_{i=0}^N\sum_{j=1}^{M_s}
\|\bm{\beta}\bm{\beta}^T(x_j(t_i), t_i) -
\hat{\bm{\beta}}\hat{\bm{\beta}}^T(x_j(t_i),
t_i)\|^2_F}{\sum_{i=0}^N\sum_{j=1}^{M_s}\|\hat{\bm{\beta}}\hat{\bm{\beta}}^T(x_{j}(t_i),
t_i)\|_F^2}.
\label{beta_error}
\end{equation}
Here, $\|\cdot\|_F$ denotes the Frobenius norm of a matrix. We set
$\sigma_0=0.1, \beta_0=0.1$ in Eqs.~\eqref{sigma2d} and
\eqref{beta2d}.  Different values of $c_1, c_2$ are used to tune the
correlations to explore how they affect the reconstruction of the
jump-diffusion process.
\begin{figure}
\centering
\includegraphics[width=0.98\linewidth]{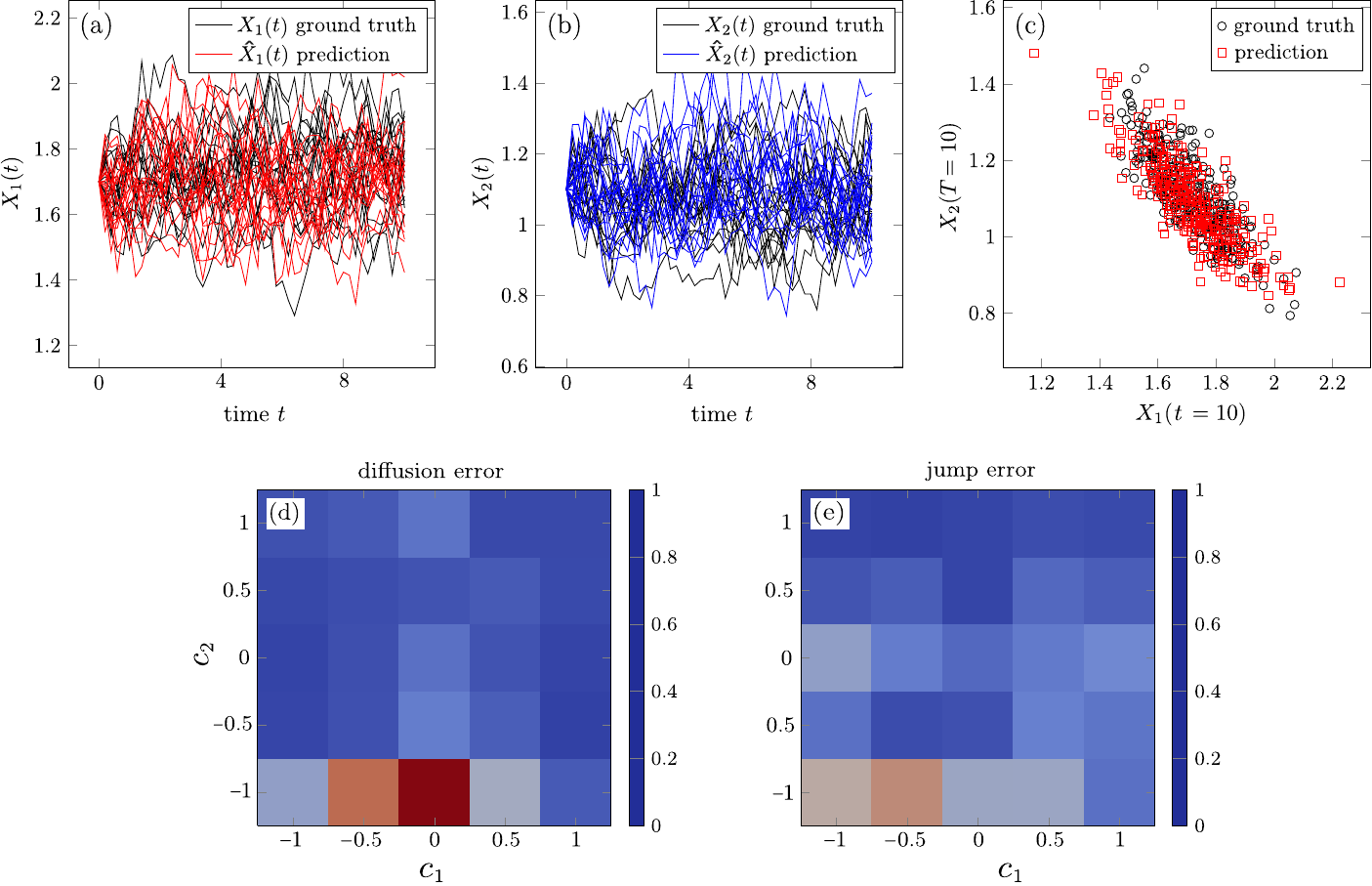}
\caption{(a-b) Solutions generated by the reconstructed jump-diffusion
  process using our temporally decoupled squared $W_2$ method versus
  solutions generated by the ground truth
  Eq.~\eqref{example3_model}. (c) The reconstructed $\hat{\bm{X}}(t=10)$
  versus the ground truth $\bm{X}(t=10)$. In (a-c), $c_1=c_2=-0.5$.
  (d) The error (Eq.~\eqref{sigma_error}) between the ground truth
  diffusion function $\bm{\sigma}$ and the reconstructed diffusion
  function $\hat{\bm{\sigma}}$. (e) The error (Eq.~\eqref{beta_error})
  between the ground truth jump function $\bm{\beta}$ and the
  reconstructed diffusion function $\hat{\bm{\beta}}$. In (d-e), the
  errors are averaged over 5 independent experiments.}
\label{fig:example3}
\end{figure}

Figs.~\ref{fig:example3}(a-c) show that solutions generated by our
reconstructed jump-diffusion process with the temporally decoupled
squared $W_2$ loss function match well with solutions generated by the
2D jump-diffusion process Eq.~\eqref{example3_model} ($c_1=c_2=-0.5$ in
Eqs.~\eqref{sigma2d} and \eqref{beta2d}).  Figs.~\eqref{fig:example3}(d-e)
indicate that when the drift function $\bm{g}(\bm{X}(t))$ is given,
our temporally decoupled squared $W_2$ method can accurately
reconstruct the diffusion and jump functions for most combinations of
$c_1, c_2$ The average errors in the diffusion and jump functions
averaged over all combinations of $c_1,c_2$ are 0.197 and 0.210,
respectively. Also, the final distribution of the reconstructed
$\hat{\bm{X}}(t)$ aligns well with the ground truth $\bm{X}(t)$.

In ~\ref{number_layers}, we implement our reconstruction method by
using different numbers of hidden layers and different numbers of
neurons in each layer for the neural-network-parameterized
approximation to the diffusion and jump functions $\bm{\sigma}$ and
$\bm{\beta}$. We find that with the drift function given as prior information,
increasing the number of neurons per layer can improve the accuracy of
the reconstructed diffusion and the jump function of the 2D
jump-diffusion process Eq.~\eqref{example3_model}. Increasing the number
of hidden layers also leads to a more accurate reconstruction of the
diffusion and jump function when the number of hidden layers is
smaller than three; however, after three hidden layers, increasing
their number leads to less accuracy of the reconstructed $\bm{\sigma}$
and $\bm{\beta}$.  Setting the number of hidden layers to three and
the number of neurons per layer to about 400 leads to excellent
reconstruction of $\bm{\sigma}$ and $\bm{\beta}$.  However, larger
numbers of hidden layers or neurons per hidden layer demand more
memory usage and lead to longer runtimes. The optimal network
architecture for reconstructing the diffusion and jump functions is
worth further investigation.

\end{example}

\section{Summary \& conclusions}
\label{summary}
In this paper, we proposed and analyzed Wasserstein-distance-based
loss functions for reconstructing jump-diffusion
processes. Specifically, we showed that a temporally decoupled squared
$W_2$-distance $\tilde{W}_2^2(\mu, \hat{\mu})$ defined in
Eq.~\eqref{decoupled_def} provides both upper and lower bounds for
errors in the drift, diffusion, and jump functions
$\bm{f}-\hat{\bm{f}}$, $\bm{\sigma}-\hat{\bm{\sigma}}$, and
$\bm{\beta}-\hat{\bm{\beta}}$ when approximating a jump-diffusion
process Eq.~\eqref{model_equation} by another jump-diffusion process
\ref{approximate_equation}. Moreover, the temporally decoupled squared
$W_2$-distance can be efficiently evaluated using finite-sample
finite-time-point observations, which yields an easy-to-calculate loss
function for reconstructing jump-diffusion processes.

Through several numerical experiments, we showed that minimizing our
proposed temporally decoupled squared $W_2$-distance loss performs
much better than other commonly used loss functions and methods
 for jump-diffusion process reconstruction using
parameterized neural networks.  Furthermore, we showed that if we imposed
prior knowledge on the drift function of the jump-diffusion process,
then the diffusion and jump functions could be more accurately
reconstructed.

Although we analyzed a 2D stochastic jump-diffusion process,
investigating how our temporally decoupled squared $W_2$ method
performs in higher dimensional jump-diffusion processes with more general noise
correlation matrices should be further explored. Such analyses would
inform how our approach can be directly applied to real-world
multi-dimensional jump-diffusion models with applications in finance,
biology, physics, and other disciplines. Similarly, how the
Wasserstein distance can be adapted to reconstruct other stochastic
processes such as L\'evy walks involving compound Poisson process
\cite{bertoin1996levy,barndorff2001levy} is a potentially fruitful
area of further investigation. Reconstructing such processes could
require inferring the intensity of the Poisson process, which is
nontrivial and would require consideration of differentiation
w.r.t. ``discrete randomness'' \cite{arya2022automatic}.


\section*{Data availability statement}
No data was used during this study. All code will be published upon
acceptance of this manuscript.


\section*{Conflict of interest}
The authors declare that they have no conflicts of interest to report
regarding the present study.


\section*{References}
\bibliographystyle{unsrt.bst}
\bibliography{bibliography}

\newpage
\appendix
\section{Proof to Theorem~\ref{theorem1}}
\label{proof_theorem1}
Here, we provide proof for Theorem~\ref{theorem1}. Our strategy is
similar to that used in the proof of the stochastic Gronwall lemma
(Theorem 2.2 in \cite{mehri2019stochastic}). First, we apply the Ito's
lemma to

\begin{equation}
  \begin{aligned}
    \big|X_i(t)-\tilde{X}_i(t)\big|^2 = & 2\int_0^t\big( X_i(s^-)
    - \tilde{X}_i(s^-)\big)\big(f_i(\bm{X}(s^-), s^-)
    - \hat{f}_i(\tilde{\bm{X}}(s^-), s^-)\big)\d s \\
    \: & +2\int_0^t\sum_{j=1}^m\big(X_i(s^-)
    - \tilde{X}_i(s^-)\big)\big(\sigma_{i, j}(\bm{X}(s^-), s^-)
    - \hat{\sigma}_{i, j}(\tilde{\bm{X}}_i(s^-), s^-)\big)\d B_{j, s}\\
    \: & +2\int_0^t\int_U \big(X_i(s^-) - \tilde{X}_i(s^-)\big)
    \big(\beta_{i}(\bm{X}(s^-), \xi, s^-)
    - \hat{\beta}_{i}(\hat{\bm{X}}(s^-), \xi, s^-)\big)\d \tilde{N}
    \big( \d s, \nu(\d\xi)\big) \\
    \: & + \int_0^t\sum_{j=1}^m \big(\sigma_{i, j}(X_i(s^-), s^-)
    -\hat{\sigma}_{i, j}(\tilde{X}_i(s^-), s^-)\big)^2\d s \\
    \: & +  \int_0^t\int_U  \big(\beta_{i}(X_i(s^-), \xi, s^-)
    -\hat{\beta}_{i, j}(\tilde{X}_i(s^-), \xi, s^-)\big)^2\nu(\d \xi)\d s.
    \end{aligned}
\label{x_representation}
\end{equation}
Note that

\begin{equation}
\begin{aligned}
 f_i(\bm{X}(s^-), s^-) - \hat{f}_i(\tilde{\bm{X}}(s^-), s^-) = &
    \big(f_i(\bm{X}(s^-), s^-)-\hat{f}_i(\bm{X}(s^-), s^-)\big) \\[-1pt]
    \: & \qquad\quad + \big(\hat{f}_i(\bm{X}(s^-), s^-)
    - \hat{f}_i(\tilde{\bm{X}}(s^-), s^-)\big), \\[3pt]
    \sigma_{i,j}(\bm{X}(s^-), s^-) - \hat{\sigma}_{i, j}(\tilde{\bm{X}}(s^-), s^-) = &
    \big(\sigma_{i,j}(\bm{X}(s^-), s^-)
    - \hat{\sigma}_{i,j}(\bm{X}(s^-), s^-)\big) \\[-1pt]
    \: & \qquad\quad + \big(\hat{\sigma}_{i,j}(\bm{X}(s^-), s^-)
    - \hat{\sigma}_{i, j}(\tilde{\bm{X}}(s^-), s^-)\big), \\[3pt]
    \beta_i(\bm{X}(s^-), \xi, s^-)-\hat{\beta}_i(\tilde{\bm{X}}(s^-), \xi, s^-)= &
    \big(\beta_i(\bm{X}(s^-), \xi, s^-) - \hat{\beta}_i(\bm{X}(s^-), \xi, s^-)\big) \\[-1pt]
    \: & \qquad\quad + \big(\hat{\beta}_i(\bm{X}(s^-), \xi, s^-)
    - \hat{\beta}_{i}(\tilde{\bm{X}}(s^-), \xi, s^-)\big).
    \end{aligned}
    \label{h_def}
\end{equation}
Using the Lipschitz conditions on the drift, diffusion, and jump
functions $\hat{\bm{f}}, \hat{\bm{\sigma}}$, and $\hat{\bm{\beta}}$ in Assumption~\ref{assumptions} as well as the Cauchy inequality, from Eq.~\eqref{x_representation}
and~\ref{h_def}, we find
\begin{equation}
  \begin{aligned}
    \hspace{-0.25cm}\big|\bm{X}(t)-\tilde{\bm{X}}(t)\big|_2^2 &  = \sum_{i=1}^d
    \big(X_i(t)-\hat{X}_i(t)\big)^2 \\
    \: &  \leq  \big(2\overline{f}+ 2\overline{\sigma} m + 2\nu(U)\overline{\beta} + 1+m + \nu(U)\big)\sum_{i=1}^d\int_0^t\big( X_i(s^-)
    - \tilde{X}_i(s^-)\big)^2\d s \\
    \: & \quad +\sum_{i=1}^d\int_0^t \big|f_i(\bm{X}(s^-),s^-)
    -\hat{f}_i\bm{X}(s^-), s^-)\big|^2\d s \\
    \: & \quad +2\sum_{i=1}^d\int_0^t\sum_{j=1}^m
    \big|\sigma_{i, j}(\bm{X}(s^-), s^-)
    - \hat{\sigma}_{i, j}(\bm{X}(s^-), s^-)\big|^2\d s \\
    \: & \quad +2\sum_{i=1}^d\int_0^t\int_U\big|\beta_{i}(\bm{X}(s^-), \xi, s^-)
    - \hat{\beta}_{i}(\bm{X}(s^-),\xi, s^-)\big|^2\nu(\d\xi)\d s \\
    \: & \quad +2\sum_{i=1}^d\int_0^t\sum_{j=1}^m\big( X_i(s^-)
    - \tilde{X}_i(s^-)\big)\big(\sigma_{i, j}(\bm{X}(s^-), s^-)
    - \hat{\sigma}_{i, j}(\tilde{\bm{X}}_i(s^-), s^-)\big)\d B_{j, s} \\
    \: & \quad +2\sum_{i=1}^d\!\int_0^t\!\int_U\big( X_i(s^-)
    - \tilde{X}_i(s^-)\big) \big(\beta_{i}(\bm{X}(s^-), \xi, s^-)
    - \hat{\beta}_{i}(\tilde{\bm{X}}(s^-), \xi, s^-)\big)
    \d \tilde{N}\big(\d s, \nu(\d\xi)\big).
    \end{aligned}
    \label{X_bound}
\end{equation}

From Assumption~\ref{assumptions} and the conditions in
Theorem~\ref{theorem1}, the second, third, and fourth terms on the RHS
of Eq.~\eqref{X_bound} are adapted and non-decreasing w.r.t. $t$; the
fifth and sixth terms on the RHS of Eq.~\eqref{X_bound} are
martingales. Thus, by taking the expectation of both sides of
Eq.~\eqref{X_bound}, we find
\begin{equation}
  \E\Big[\big|\bm{X}(t)-\tilde{\bm{X}}(t)\big|_2^2\Big]
  \leq \big(2\overline{f}+1 + (2\overline{\sigma}+1) m + (2\overline{\beta}+1)\nu(U) \big)
  \int_0^t\E\Big[\big|\bm{X}(s)-\tilde{\bm{X}}(s)\big|_2^2\Big]\d s + \E[H(t)],
\end{equation}
where $H(t)$ is defined in Eq.~\eqref{h_define}.
Applying Gronwall's lemma to $u(t) \coloneqq
\E\Big[\big|\bm{X}(t)-\tilde{\bm{X}}(t)\big|_2^2\Big]$ and noticing
that $\E[H(t)]$ is non-decreasing w.r.t. $t$, we conclude that
\begin{equation}
  u(t)\leq \E\Big[\big|\bm{X}(t) -
    \tilde{\bm{X}}(t)\big|^2\,\Big|\,\bm{X}(0)\Big]
  \leq \exp\Big( (2\overline{f}+1 + (2\overline{\sigma}+1) m + (2\overline{\beta}+1)\nu(E) )T\Big)\cdot  \E\big[H(T)\, |\, \bm{X}(0)\big],
\end{equation}
which proves Eq.~\eqref{pleq2}.

\section{Proof to Theorem~\ref{theorem3}}
\label{theorem3proof}

Here, we shall provide proof of Theorem~\ref{theorem3}, which
generalizes Theorem~2 in \cite{xia2023a} for pure diffusion processes.
Denote
\begin{equation}
  \Omega_N\coloneqq \{\bm{Y}(t)| \bm{Y}(t) = \bm{Y}(t_i)\,\,\, t\in [t_i, t_{i+1}), i<N-1;
    \,\,\bm{Y}(t) = \bm{Y}(t_i), \,\,\,t\in[t_i, t_{i+1}]\}
\end{equation}
to be the space of piecewise functions. Clearly, it is a subspace of
$L^2([0, T]; \mathbb{R}^d)$. Also, the embedding map from $\Omega_N$
to $L^2([0, T]; \mathbb{R}^d)$ preserves the $\|\cdot\|$ norm, which
enables us to define the measures on $\mathcal{B}(L^2([0, T];
\mathbb{R}^d))$ induced by the measures $\mu_N, \hat{\mu}_N$. For
simplicity, we shall still denote those induced measures by $\mu_N,
\hat{\mu}_N$.

Suppose $\bm{X}(t), \hat{\bm{X}}(t)$ are generated by two
jump-diffusion processes defined by Eq.~\eqref{model_equation} and
Eq.~\eqref{approximate_equation}. The inequality~Eq.~\eqref{triangular} is
a direct result of the triangular inequality for the Wasserstein
distance \cite{clement2008elementary} because $\bm{X}, \bm{X}_N,
\hat{\bm{X}}, \hat{\bm{X}}_N\in L^2([0, T]; \mathbb{R}^d)$.

Next, we prove Eq.~\eqref{dtbound}. Because $\bm{X}_N(t)=I_N \bm{X}(t)$
(defined in Eq.~\eqref{X_N_def}), we choose a specific
\textit{coupling measure}, i.e. the coupled distribution, $\pi$ of $\mu,
\mu_N$ that is essentially the ``original'' probability distribution.
To be more specific, for an abstract probability space $(\Omega,
\mathcal{A}, p)$ associated with ${\boldsymbol X}$, $\mu$ and
$\mu_{N}$ can be characterized by the \textit{pushforward} of $p$ via
${\boldsymbol X}$ and ${\boldsymbol X}_N$
respectively, i.e., $\mu = {\boldsymbol X}_* p$, defined by $\forall A
\in \mathcal{B}\big(\tilde{\Omega}_N\big)$, elements in the Borel
$\sigma$-algebra of $\tilde{\Omega}_N$,

\begin{equation}
\mu(A) = {\boldsymbol{X}}_*p(A) \coloneqq p\big({\boldsymbol X}^{-1}(A)\big),
\end{equation}
where $\boldsymbol X$ is interpreted as a measurable map from $\Omega$
to $\tilde{\Omega}_N$, and $\boldsymbol{X}^{-1}(A)$ is the preimage of
$A$ under $\boldsymbol{X}$.  Then, the coupling $\pi$ is defined by

\begin{equation}
\pi = ({\boldsymbol X}, {\boldsymbol X}_N)_* p,
\end{equation}
where $({\boldsymbol X}, {\boldsymbol X}_N)$ are interpreted as a
measurable map from $\Omega$ to $\tilde{\Omega}_N\times
\tilde{\Omega}_N$.  One can readily verify that the marginal
distributions of $\pi$ are $\mu$ and $\mu_{N}$ respectively.
Therefore, the squared $W_2^2(\mu, \mu_N)$ can be bounded by
\begin{equation}
  W_2^2(\mu, \mu_N)\leq \sum_{i=1}^N \int_{t_{i-1}}^{t_i}
  \!\E\Big[\big|\bm{X}(t) - \bm{X}_N(t)\big|_2^2\Big]\d t =
  \sum_{i=1}^N \int_{t_{i-1}}^{t_i} \sum_{\ell=1}^d
  \E\Big[\big(X_{\ell}(t) - X_{N, \ell}(t)\big)^2\Big]\d t
\end{equation}
For each $\ell=1,...,d$, by using the It\^o's isometry and the
orthogonality condition of the compensated Poisson process $\tilde{N}$
(in Assumption~\ref{assumptions}), we have
\begin{equation}
  \begin{aligned}
   \sum_{i=1}^N \int_{t_{i-1}}^{t_i}  \!
    \E\big[\big(X_{\ell}(t)-X_{N, \ell}(t)\big)^2\big]\d t  \leq &
 \sum_{i=1}^N\int_{t_{i-1}}^{t_i}
 \E\bigg[\Big(\medint\int_{t_i}^t f_{\ell}(\bm{X}(r^-), r^-)
   \d r\Big)^2\bigg]\text{d}t \\
\: & +  \sum_{i=1}^N\int_{t_{i-1}}^{t_i} \E\bigg[\Big(\medint\int_{t_i}^t
  \sum_{j=1}^m\sigma_{\ell, j}(\hat{X}(r^-), r^-)\d B_{j, r}\Big)^2\bigg]\text{d}t \\
\: & + \sum_{i=1}^N\int_{t_{i-1}}^{t_i} \E\bigg[\Big(\medint\int_{t_i}^t
  \medint\int_U \beta_{\ell}(\bm{X}(r^-), \xi, r^-)\tilde{N}(\d r, \nu(\d\xi)) \Big)^2\bigg]
\text{d}t \\
\leq & \sum_{i=1}^N \big(\Delta t_{i-1}\big)^2\E\Big[
  \medint\int_{t_{i-1}}^{t_i}\!f_{\ell}^2\d t\Big]
+ \sum_{i=1}^N \big(\Delta t_{i-1}\big)^{2}\sum_{j=1}^m\E
\Big[\medint\int_{t_{i-1}}^{t_i}\!\sigma_{\ell, j}^2\d t\Big] \\
\: & + \sum_{i=1}^N\Delta t_{i-1}\E \Big[\medint\int_{t_{i-1}}^{t_i}
  \medint\int_U\beta_{\ell}^2\nu(\d\xi)\d t\Big],
\end{aligned}
\end{equation}
where $\Delta t_{i-1} \coloneqq t_{i}-t_{i-1}$. Summing over $\ell$,
we have
\begin{equation}
  \sqrt{\sum_{i=1}^N \medint\int_{t_{i-1}}^{t_i} \E\Big[\big|\bm{X}(t)
    - \bm{X}_N(t)\big|_2^2\Big]\d t } \leq
 \sqrt{F\Delta t^2 + \Sigma\Delta t + B\Delta t},
\label{X_piecebound}
\end{equation}
where $\Delta t\coloneqq\max_{0\leq i\leq N-1}(t_{i+1}-t_i)$.
Similarly, we can show that 

\begin{equation}
  W_2(\hat{\mu}, \hat{\mu}_N)\leq \sqrt{\hat{F}\Delta t^2
  + {\hat{\Sigma}}\Delta t+ {\hat{B}}\Delta t}.
\label{Xhat_piecebound}
\end{equation}
Plugging~Eq.~\eqref{X_piecebound} and~Eq.~\eqref{Xhat_piecebound}
into~Eq.~\eqref{triangular}, we have proved~Eq.~\eqref{dtbound}.

\section{Proof to Theorem~\ref{theorem4}}
\label{theorem4proof}
Here, we provide proof to Theorem~\ref{theorem4}. The proof builds
upon and generalizes the proof of Theorem~3 in \cite{xia2023a} for
pure diffusion processes to jump-diffusion processes.  First, notice
that

\begin{equation}
  \E\Big[\big|\bm{X}(t) - \hat{\bm{X}}(t)\big|_2^2\Big]\leq
  2(FT+\hat{F}T+\Sigma+\hat{\Sigma} + B + \hat{B})<\infty, \,\,\,\forall t\in[0, T]
\label{M_condition}
\end{equation}
where $F, \hat{F}, \Sigma, \hat{\Sigma}, B, \hat{B}$ are defined in
Eq.~\eqref{F_Sigma}.  By applying Theorem~\ref{theorem3}, for any $t_i,
i=1,2,...,N$, denoting $\Delta t_i\coloneqq t_i-t_{i-1}$ , we have

\begin{equation}
  \begin{aligned}
    \inf_{\pi_i}\sqrt{\E_{\pi_i}\big[|\bm{X}(t_i) -
        \hat{\bm{X}}(t_i)|_2^2\big]\Delta t_i} &
    - \sqrt{F_i(\Delta t_i)^2+ \Sigma_i\Delta t_i+B_i\Delta t_i}
    - \sqrt{\hat{F}_i(\Delta t_i)^2 + \hat{\Sigma}_i\Delta t_i + \hat{B}_i\Delta t_i} \\
    \: & \leq  W_2\big(\boldsymbol{\mu}_i, \hat{\boldsymbol{\mu}}_i\big) \\
    \: & \leq \inf_{\pi_i}\sqrt{\E_{\pi_i}\big[|\bm{X}(t_i)
      - \hat{\bm{X}}(t_i)|_2^2\big]\Delta t_i}
    +\!\sqrt{F_i(\Delta t_i)^2 + \Sigma_i\Delta t_i + B_i\Delta t_i} \\[-3pt]
    \: & \hspace{4.2cm}  +\!\sqrt{\hat{F}_i(\Delta t_i)^2 + \hat{\Sigma}_i\Delta t_i + \hat{B}_i\Delta t_i},
  \end{aligned}
  \label{approx_i}
\end{equation}
where $\bm{\mu}_i, \hat{\bm{\mu}}_i$ are the distributions for
$\bm{X}(t), t\in[t_{i}, t_{i+1})$ and $\hat{\bm{X}}(t), t\in[t_i,
    t_{i+1})$, respectively. Additionally, from
    Eq.~\eqref{convergence_result}, we have
    
    \begin{equation}
      \begin{aligned}
        \sum_{i=0}^{N-1}F_i = & F<\infty,\,\,\,\sum_{i=0}^{N-1}\Sigma_i
        = \Sigma<\infty,\,\,\sum_{i=0}^{N-1}B_i = B<\infty \\
        \sum_{i=0}^{N-1}\hat{F}_i = & \hat{F}<\infty,  \,\,\,
        \sum_{i=0}^{N-1}\hat{\Sigma}_i = \hat{\Sigma}<\infty,\,\,\,
        \sum_{i=0}^{N-1}\hat{B}_i = \hat{B}<\infty.
        \end{aligned}
    \end{equation}
From the inequality~\ref{approx_i}, we have
\begin{equation}
\begin{aligned}
 W_2^2(\boldsymbol{\mu}_i, \hat{\boldsymbol{\mu}}_i)\leq &
 \inf_{\pi_i}\E_{\pi_i}\big[|\bm{X}(t_i) -
   \hat{\bm{X}}(t_i)|_2^2\big] \Delta t_i \\[-4pt]
  \: & \,\, + 2\inf_{\pi_i}\sqrt{\E_{\pi_i}\big[|\bm{X}(t_i)
    - \hat{\bm{X}}(t_i)|_2^2\big]}\,
  \Delta t_i \Big[\sqrt{F_i\Delta t_i + \Sigma_i + B_i}
    +\!\sqrt{\hat{F}\Delta t_i + \hat{\Sigma}_i + \hat{B}_i}\Big] \\
  \: & \qquad  +2 \Delta t_i \Big(F_i \Delta t_i
  + \Sigma_i + B_i + \hat{F}_i \Delta t_i +\hat{\Sigma}_i  + \hat{B}_i\Big)\\[8pt]
  W_2^2(\boldsymbol{\mu}_i, \hat{\boldsymbol{\mu}}_i)\geq &
  \inf_{\pi_i}\E_{\pi_i}\big[|\bm{X}(t_i) -
    \hat{\bm{X}}(t_i)|_2^2\big]\Delta t_i \\[-3pt]
  \: & \quad - 2W_2(\boldsymbol{\mu}_i, \hat{\boldsymbol{\mu}}_i)
  \Delta t_i\Big[
  \sqrt{F_i\Delta t_i + \Sigma_i + B_i} +\!\sqrt{\hat{F}\Delta t_i
    + \hat{\Sigma}_i + \hat{B}_i} \Big] \\
\: & \qquad\, -  2\Delta t_i \Big(F_i \Delta t_i + \Sigma_i + B_i +
\hat{F}_i \Delta t_i +\hat{\Sigma}_i  + \hat{B}_i\Big)
\end{aligned}
\label{both_side}
\end{equation}
Specifically, from the assumption given in Eq.~\eqref{M_condition} and
Eq.~\eqref{approx_i}, we conclude that

\begin{equation}
  W_2(\boldsymbol{\mu}_i, \hat{\boldsymbol{\mu}}_i)\leq \sqrt{\Delta t_i}
  \Big(M+\sqrt{F\Delta t_i+\Sigma_i+B_i} +\sqrt{\hat{F}\Delta t_i
    + \hat{\Sigma}_i+\hat{B}_i}\Big) \coloneqq \tilde{M}\sqrt{\Delta t_i},
  \,\,\,\tilde{M}<\infty.
\end{equation}
Summing over $i=1,...,N-1$ for both inequalities in
Eq.~\eqref{both_side} and noting that $\Delta t=\max_i |t_{i+1}-t_i|$,
we conclude

\begin{equation}
\begin{aligned}
\sum_{i=0}^{N-1}W_2^2(\boldsymbol{\mu}_i, \hat{\boldsymbol{\mu}}_i)\leq &
\sum_{i=0}^{N-1}\inf_{\pi_i}\E_{\pi_i}\Big[\big|\bm{X}(t_i)
  - \hat{\bm{X}}(t_i)\big|_2^2\Big]\Delta t_i + 2\Delta t
\big(F\Delta t + \Sigma + \hat{F}\Delta t + \hat{\Sigma}
+ B + \hat{B}\big) \\[-6pt]
\: & \qquad\quad + 2M\sum_{i=1}^{N-1}\Delta t_i
\Big(\sqrt{F_i\Delta t_i + \Sigma_i+B_i}
+\!\sqrt{\hat{F}_i\Delta t_i + \hat{\Sigma}_i+\hat{B}_i} \Big), \\
\leq & \sum_{i=0}^{N-1}\inf_{\pi_i}\E_{\pi_i}\Big[\big|\bm{X}(t_i)
  - \hat{\bm{X}}(t_i)\big|_2^2\Big]\Delta t_i + 2\Delta t
\Big(F\Delta t + \Sigma + \hat{F}\Delta t
+ \hat{\Sigma}+ B + \hat{B}\Big) \\[-4pt]
\: & \qquad\quad + M\sqrt{\Delta t}\,\Big(\big(F+\hat{F}\big)\Delta t
  + \Sigma+\hat{\Sigma} + B+\hat{B} + 2T\Big)
\end{aligned}
\label{approx_i1}
\end{equation}
and

\begin{equation}
  \begin{aligned}
    \sum_{i=0}^{N-1}W_2^2(\boldsymbol{\mu}_i, \hat{\boldsymbol{\mu}}_i)\geq &
    \sum_{i=0}^{N-1}\inf_{\pi_i}\E_{\pi_i}\Big[\big|\bm{X}(t_i)
      - \hat{\bm{X}}(t_i)\big|_2^2\Big]
    \Delta t_i \\[-6pt]
    \: & \quad -2\tilde{M}\sum_{i=0}^{N-1}\Delta t_i
    \Big(\sqrt{F_i\Delta t_i + \Sigma_i+B_i} + \sqrt{\hat{F}_i\Delta t_i
      + \hat{\Sigma}_i+\hat{B}_i} \Big) \\
    \: & \qquad -2\Delta t
    (F\Delta t + \Sigma + B+\hat{F}\Delta t + \hat{\Sigma}+\hat{B}), \\[4pt]
    \geq & \sum_{i=0}^{N-1}\inf_{\pi_i}\E_{\pi_i}\Big[\big|\bm{X}(t_i) - \hat{\bm{X}}(t_i)\big|_2^2\Big]
    \Delta t_i - 2\Delta t \big(F\Delta t + \Sigma +B+ \hat{F}\Delta t
    + \hat{\Sigma}+\hat{B}\big)\\[-3pt]
    \: & \qquad - \tilde{M}\sqrt{\Delta t}\,\Big((F+\hat{F})\Delta t
    + \Sigma+\hat{\Sigma} + B + \hat{B} + 2T\Big)
    \end{aligned}
\label{approx_i2}
\end{equation}
Eqs.~\eqref{approx_i1} and \eqref{approx_i2} indicate that as
$N\rightarrow\infty$,

\begin{equation}
  \sum_{i=0}^{N-1}\inf_{\pi_i}\E_{\pi_i}\Big[\big|\bm{X}(t_i)
    - \hat{\bm{X}}(t_i)\big|_2^2\Big]\Delta t_i
  -  \sum_{i=0}^{N-1}W_2^2(\boldsymbol{\mu}_i, \hat{\boldsymbol{\mu}}_i)
  \rightarrow 0,
\end{equation}
which proves Eq.~\eqref{lim_condition}.

Now, suppose $0=t_0^1<t_1^1<...<t_{N_1}^1=T$;
$0=t_0^2<t_1^2<...<t_{N_2}^2=T$ to be two sets of grids on $[0,
  T]$. We define a third set of grids $0=t_0^3<...<t_{N_3}^3=T$ such
that $\{t_0^1,...,t_{N_1}^1\}\cup \{t_0^2,...,t_{N_2}^2\} =
\{t_0^3,...,t_{N_3}^3\}$. Let $\delta
t\coloneqq\max\{\max_i(t_{i+1}^1-t_i^1), \max_j(t_{j+1}^2-t_j^2),
\max_k(t_{k+1}^3-t_k^3)\}$. We denote $\mu_i^s(t_i^1)$ and
$\hat{\mu}_i^s (t_i^1)$ to be the probability distribution of
$\bm{X}(t_i^s)$ and $\hat{\bm{X}}(t_i^s)$, $s=1,2,3$, respectively.
We now prove that

\begin{equation}
\bigg|\sum_{i=0}^{N_1-1}W_2^2\big(\mu(t_i^1),
\hat{\mu}(t_i^1)\big)(t_{i+1}^1-t_i^1) -
\sum_{i=0}^{N_3-1}W_2^2\big(\mu(t_i^3),
\hat{\mu}(t_i^3)\big)(t_{i+1}^3-t_i^3)\bigg| \rightarrow 0,
\label{limit_exist}
\end{equation}
as $\Delta t\rightarrow 0$.

First, suppose in the interval $(t_i^1, t_{i+1}^1)$, we have
$t_i^1=t_{\ell}^3<t_{\ell+1}^3<...<t_{\ell+s}^3=t_{i+1}^1, s\geq 1$,
then for $s>1$, since
$t_{i+1}^1-t_i^1=\sum_{k=\ell}^{\ell+s-1}(t_{k+1}^3-t_k^3)$, we have

\begin{equation}
  \begin{aligned}
    \bigg|W_2^2\big(\mu(t_i^1), \hat{\mu}(t_i^1)\big)& (t_{i+1}^1-t_i^1)  -
    \sum_{k=\ell}^{\ell+s-1} W_2^2\big(\mu(t_k^3)),
    \hat{\mu}(t_i^3)\big)(t_{k+1}^3-t_k^3)\bigg| \\
    \: & \leq\sum_{k=\ell+1}^{\ell+s-1}\Big(W_2\big(\mu(t_i^1),
    \hat{\mu}(t_i^1)\big) + W_2\big(\hat{\mu}(t_i^3), \hat{\mu}(t_k^3)\big)\Big)  \\
    \: & \qquad\quad \times \Big(W_2\big(\mu(t_i^1), \hat{\mu}(t_i^1)\big)
    - W_2\big(\mu(t_k^3), \hat{\mu}(t_k^3)\big)\Big) (t_{k+1}^3 - t_k^3).
    \end{aligned}
\label{triang}
\end{equation}
On the other hand,  because we can take a specific coupling $\pi^*$ to be the
joint distribution of $(\bm{X}(t_i^1), \bm{X}(t_k^3))$,

\begin{equation}
  \begin{aligned}
    W_2(\mu(t_i^1), \mu(t_k^3)) \leq &
    \sqrt{\E\big[|\bm{X}(t_k^3) - \bm{X}(t_i^1)|_2^2\big]} \\
    \leq & \E\bigg[\medint\int_{t_i^1}^{t_{i+1}^1}\!
      \sum_{i=1}^d f_i^2(\bm{X}(t^-),t^-) \d t+ \medint\int_{t_i^1}^{t_{i+1}^1}\sum_{\ell=1}^d
  \sum_{j=1}^m\sigma_{\ell, j}^2(\bm{X}(t^-),t^-)\d t \\
\: & \hspace{4cm} +\!
\medint\int_{t_i^1}^{t_{i+1}^1}\sum_{\ell=1}^d\int_U \beta_{\ell}^2(\bm{X}(t^-), \xi,t^-)
\nu(\d\xi)\d t \bigg]^{\frac{1}{2}}.
\end{aligned}
\label{bound1}
\end{equation}
Similarly, we have

\begin{equation}
  \begin{aligned}
    W_2\big(\hat{\mu}(t_i^1), \hat{\mu}(t_k^3)\big) \leq &
    \,\E\bigg[\medint \int_{t_i}^{t_{i+1}} \sum_{\ell=1}^d
      \hat{f}_{\ell}^2(\bm{X}(t^-),t^-) \d t + \medint\int_{t_i^1}^{t_{i+1}^1}\sum_{\ell=1}^d
\sum_{j=1}^m\hat{\sigma}_{\ell, j}^2(\bm{X}(t^-),t^-)\d t  \\
\: & \hspace{4cm} + \medint\int_{t_i^1}^{t_{i+1}^1}\sum_{\ell=1}^d\medint\int_U 
\hat{\beta}_{\ell}^2(\hat{\bm{X}}(t^-), \xi,t^-)\nu(\d\xi)\d t \bigg]^{\frac{1}{2}}
\end{aligned}
    \label{bound2}
  \end{equation}
Using the triangular inequality of the Wasserstein distance as well as the Cauchy inequality, we have
\begin{equation}
\begin{aligned}
    \Big|W_2\big(\mu(t_i^1), \hat{\mu}(t_i^1)\big)- W_2\big(\mu(t_k^3),
\hat{\mu}(t_k^3)\big)\Big|
&\leq \Big|W_2\big(\mu(t_i^1), \hat{\mu}(t_i^1)\big)- W_2\big(\mu(t_k^3),
\hat{\mu}(t_k^1)\big)\Big| \\
&\quad\quad+ \Big|W_2\big(\mu(t_i^3), \hat{\mu}(t_i^1)\big)- W_2\big(\mu(t_k^3),
\hat{\mu}(t_k^3)\big)\Big|\\
&\leq W_2\big(\mu(t_i^1), \mu(t_k^3)\big) +
W_2\big(\hat{\mu}(t_i^1), \hat{\mu}(t_k^3)\big).
\end{aligned}
\label{traing_property}
\end{equation}
Substituting Eqs.~\eqref{bound1}, \eqref{bound2}, \eqref{M_condition}, and
\eqref{traing_property} into Eq.~\eqref{triang}, we conclude that

\begin{equation}
  \begin{aligned}
    \bigg|W_2^2\big(\mu(t_i^1), \hat{\mu}(t_i^1)\big)& (t_{i+1}^1-t_i^1) 
    - \sum_{k=\ell}^{\ell+s-1} W_2^2\big((\mu(t_k^3),
    \hat{\mu}(t_k^3)\big)(t_{k+1}^3-t_k^3)\bigg| \\
    \: & \quad \leq 2M(t_{i+1}^1-t_i^1)\big(\sqrt{F_i\Delta t+\Sigma_i+B_i}
    + \sqrt{\hat{F}_i \Delta t+ \hat{\Sigma}_i + \hat{B}_i}\big).
    \end{aligned}
\label{intermediate}
\end{equation}
Using Eq.~\eqref{intermediate} in Eq.~\eqref{limit_exist}, when the
conditions in Eq.~\eqref{condition_dt} hold true, we have

\begin{equation}
  \begin{aligned}
    \lim_{\delta t\rightarrow0}\bigg|\sum_{i=0}^{N_1-1}W_2^2\big(\mu(t_i^1),
    \hat{\mu}(t_i^1)\big) & (t_{i+1}^1-t_i^1) -
    \sum_{i=0}^{N_3-1}W_2^2\big(\mu(t_i^3),
    \hat{\mu}(t_i^3)\big)(t_{i+1}^3-t_i^3)\bigg| \\
    \leq & 2M T\max_i\big(\sqrt{F_i\Delta t+\Sigma_i+B_i}
    + \sqrt{\hat{F}_i\Delta t + \hat{\Sigma}_i + \hat{B}_i}\big)\rightarrow0.
  \end{aligned}
  \end{equation}
Similarly,

\begin{equation}
  \begin{aligned}
    \lim_{\delta t\rightarrow0}\bigg|\sum_{i=0}^{N_2-1}W_2^2\big(\mu(t_i^2),
    \hat{\mu}(t_i^2)\big)& (t_{i+1}^2-t_i^2) -
    \sum_{i=0}^{N_3-1}W_2^2\big(\mu(t_i^3),
    \hat{\mu}(t_i^3)\big)(t_{i+1}^3-t_i^3)\bigg| \\
    \leq & 2M T\max_i\Big(\sqrt{F_i\Delta t+\Sigma_i+B_i}
    +\!\sqrt{\hat{F}_i\Delta t + \hat{\Sigma}_i + \hat{B}_i}\,\Big)\rightarrow 0.
  \end{aligned}
  \end{equation}
Thus, as $\Delta t\rightarrow 0$,

\begin{equation}
  \bigg|\sum_{i=0}^{N_1-1}W_2^2\big(\mu(t_i^1), \hat{\mu}(t_i^1)\big)(t_{i+1}^1-t_i^1)
  - \sum_{i=0}^{N_2-1}W_2^2\big(\mu(t_i^2),
  \hat{\mu}(t_i^2)\big)(t_{i+1}^2-t_i^2)\bigg|\rightarrow 0,
\label{convergence}
\end{equation}
which implies the limit

\begin{equation}
  \lim\limits_{N\rightarrow \infty} \sum_{i=0}^{N-1}\inf_{\pi_i}\E_{\pi_i}\Big[\big|\bm{X}(t_i^1)
    - \hat{\bm{X}}(t_i^1)\big|_2^2\Big](t_i^1-t_{i-1}^1)=
  \lim\limits_{N\rightarrow \infty}\sum_{i=0}^{N-1}W_2^2\big(\mu(t_i^1),
  \hat{\mu}(t_i^1)\big)(t_i^1-t_{i-1}^1)
\end{equation}
exists.  From Eq.~\eqref{tilde_def}, the limit

\begin{equation}
    \lim\limits_{N\rightarrow \infty}
    \sum_{i=1}^{N-1}\inf_{\pi_i}\E_{\pi_i}\Big[\big|\bm{X}(t_i^1) -
      \hat{\bm{X}}(t_i^1)\big|_2^2\Big](t_i^1-t_{i-1}^1)=\tilde{W}_2^2(\mu,
    \hat{\mu})
\end{equation}
Specifically, by letting
$\max_{i=0}^{n_2-1}(t_{i+1}^2-t_i^2)\rightarrow0$ in
Eq.~\eqref{convergence}, we have

\begin{equation}
  \begin{aligned}
    \bigg|\sum_{i=0}^{N_1-1}W_2^2\big(\mu(t_i^1),
    \hat{\mu}(t_i^1)\big)& (t_{i+1}^1-t_i^1)
    - \tilde{W}_2^2(\mu,\hat{\mu})\bigg| \\[-3pt]
   \leq & 2M T\max_i\Big(\sqrt{F_i\Delta t+\Sigma_i+B_i}
    + \sqrt{\hat{F}_i\Delta t + \hat{\Sigma}_i + \hat{B}_i}\,\Big).
  \end{aligned}
\end{equation}
This completes the proof of Theorem~\ref{theorem4}.

\section{Proof of Theorem~\ref{theorem5}}
\label{proof_theorem5}
Below, we provide proof for Theorem~\ref{theorem5}. First, note that

\begin{eqnarray}
  &\hspace{-2cm}\E\Big[\big|W_2^2(\mu_N^{\e}, \hat{\mu}_N^{\e})
    - W_2^2(\mu_N, \hat{\mu}_N)\big|\Big]\leq
        \E\Big[\big(W_2(\mu_N^{\e}, \hat{\mu}_N^{\e}) - W_2(\mu_N, \hat{\mu}_N)\big)^2\Big]
\nonumber\\&\hspace{5cm}\quad\quad+2\E\Big[\big|W_2(\mu_N^{\e}, \hat{\mu}_N^{\e}) - W_2(\mu_N, \hat{\mu}_N)\big|\Big] W_2(\mu_N,\hat{\mu}_N).
\label{difference}
    \end{eqnarray}
Using the triangular inequality for the Wasserstein distance
\cite{clement2008elementary}, we have

\begin{equation}
  \begin{aligned}
    \E\Big[\big|W_2(\mu_N^{\e}, \hat{\mu}_N^{\e})
      - W_2(\mu_N, \hat{\mu}_N)\big|\Big]\leq &
    \E\big[W_2(\mu_N^{\e}, \mu_N)\big]
    + \E\big[W_2(\hat{\mu}_N^{\e}, \hat{\mu}_N)\big], \\
    \E\Big[\big(W_2^2(\mu_N^{\e}, \hat{\mu}_N^{\e})
      - W_2^2(\mu_N, \hat{\mu}_N)\big)^2\Big]\leq &
    2\E\big[W_2^2(\mu_N^{\e}, \mu_N) + W_2^2(\hat{\mu}_N^{\e}, \hat{\mu}_N)\big].
\end{aligned}
\label{triag_bound}
\end{equation}
From Theorem 1 in \cite{fournier2015rate}, there exists a constant
$C_0$ depending on the dimensionality $Nd$

\begin{equation}
  \begin{aligned}
    \E\big[W_2^2(\mu_N^{\e}, \mu_N)\big] \leq &
    C_0 h^{2}(M_s, {Nd})\E\Big[\sum_{i=0}^{N-1}|\bm{X}(t_i)|_6^6
      \Delta t_i^3\Big]^{\frac{1}{3}},\\
    \E\big[W_2^2(\hat{\mu}_N^{\e}, \hat{\mu}_N)\big] \leq &
    C_0 h^{2}(M_s, {Nd})\E\Big[\sum_{i=0}^{N-1}|\hat{\bm{X}}(t_i)|_6^6
      \Delta t_i^3\Big]^{\frac{1}{3}},
\end{aligned}
\label{error_bound_M}
\end{equation}
where the function $h$ is defined in Eq.~\eqref{t_def} and $\Delta
t_i\coloneqq (t_{i+1}-t_i), i=0,...,N-1$. Substituting
Eqs.~\eqref{error_bound_M} and \eqref{triag_bound} into
Eq.~\eqref{difference}, we conclude that

\begin{equation}
\begin{aligned}
 \: &  \E\Big[\big(W_2^2 (\mu_N^{\e}, \mu_N) - W_2^2(\hat{\mu}_N^{\e},
    \hat{\mu}_N^{\e})\big)^2\Big]\\
 \: & \hspace{0.8cm} \leq 2C_0 h^{2}(M_s,{Nd}) \bigg(\E\Big[\sum_{i=0}^{N-1}\big|\bm{X}(t_i)\big|_6^6
    \Delta t_i^3\Big]^{\frac{1}{3}} +
  \E\Big[\sum_{i=0}^{N-1}\big|\hat{\bm{X}}(t_i)\big|_6^6
    \Delta t_i^3\Big]^{\frac{1}{3}}\bigg)\\
  \: & \hspace{1.5cm} +2\sqrt{C_0} W_2(\mu_N, \hat{\mu}_N)
  h(M_s,{Nd})\bigg(\E\Big[\sum_{i=0}^{N-1}\big|\bm{X}(t_i)\big|_6^6
    \Delta t_i^3\Big]^{\frac{1}{6}} + \E\Big[\sum_{i=0}^{N-1}\big|\hat{\bm{X}}(t_i)\big|_6^6
    \Delta t_i^3\Big]^{\frac{1}{6}}\bigg)
\end{aligned}
\label{ineqco}
\end{equation}
which proves the inequality~\ref{coupled_error_bound}.  Similarly, for
each $i=0,1,...,N-1$, there exists a constant $C_1$ depending on the
dimensionality $d$ such that

\begin{equation}
  \begin{aligned}
    \E\Big[\big|W_2^2(\mu^{\e}_N(t_i), &\, \hat{\mu}_N^{\e}(t_i))
      - W_2^2(\mu_N(t_i), \hat{\mu}_N(t_i))\big|\Big]\Delta t_i\\
\: & \leq 
2\sqrt{C_1}h(M_s, d)\Big(\E\Big[\big|\hat{\bm{X}}(t_i)\big|_6^6\Big]^{\frac{1}{6}}
+\E\Big[\big|\bm{X}(t_i)\big|_6^6\Big]^{\frac{1}{6}}\Big) W_2(\mu(t_i),
\hat{\mu}(t_i))\Delta t_i \\
\: & \qquad \qquad +2C_1  h^{2}(M_s, d) \Big(\E\Big[\big|\hat{\bm{X}}(t_i)\big|_6^6\Big]^{\frac{1}{3}}+
\E\Big[\big|\bm{X}(t_i)\big|_6^6\Big]^{\frac{1}{3}}\Big) \Delta t_i.
\end{aligned}
 \label{ti_errorbound0}
\end{equation}
Summing over $i$ in the inequalities~\ref{ti_errorbound0}, we find

\begin{equation}
  \begin{aligned}
   & \E\bigg[\Big|\sum_{i=0}^{N-1} \Big(W_2^2(\mu_N^{\e}(t_i), \hat{\mu}_N^{\e}(t_i)) \Delta t_i
      - W_2^2(\mu_N(t_i), \hat{\mu}_N(t_i)) \Delta t_i\Big)\Big|\bigg] \\
    &\qquad \leq\sum_{i=0}^{N-1} \E\Big[\big|W_2^2(\mu_N^{\e}(t_i), \hat{\mu}_N^{\e}(t_i))
      - W_2^2(\mu_N(t_i), \hat{\mu}_N(t_i))\big| \Delta t_i\Big] \\
\: & \qquad \leq 2\sqrt{C_1} \sum_{i=0}^{N-1}
\Bigg(\Big(\E\Big[\big|\bm{X}(t_i)\big|_6^6\Big]^{\frac{1}{6}}
+\E\Big[\big|\hat{\bm{X}}(t_i)\big|_6^6\Big]^{\frac{1}{6}}\Big)
W_2(\mu_N(t_i), \hat{\mu}_N(t_i)) \Delta t_i h(M_s, d)\\[-3pt]
\: & \hspace{4.5cm} +\sqrt{C_1}\Big(\E\Big[\big|\bm{X}(t_i)\big|_6^6\Big]^{\frac{1}{3}}
+\E\Big[\big|\hat{\bm{X}}(t_i)\big|_6^6\Big]^{\frac{1}{3}}\Big) \Delta t_i h^{2}(M_s, d)\Bigg),
\end{aligned}
\label{ineqde}
\end{equation}
which proves the inequality~\ref{decoupled_error_bound}.  Furthermore,
using the H\"older's inequality, we have

\begin{equation}
  \E\Big[\sum_{i=0}^{N-1}\big|\bm{X}(t_i)\big|_6^6
    \Delta t_i^3\Big]^{\frac{1}{3}}\cdot \E\Big[\sum_{i=0}^{N-1}1\Big]^{\frac{2}{3}}
  \geq \sum_{i=0}^{N-1}\E\Big[\big|\bm{X}(t_i)\big|_6^6\Big]^{\frac{1}{3}} \Delta t_i
\label{1stbound}
\end{equation}
and
\begin{equation}
  \E\Big[\sum_{i=0}^{N-1}\big|\hat{\bm{X}}(t_i)\big|_6^6
    \Delta t_i^3\Big]^{\frac{1}{3}} \cdot\E\Big[\sum_{i=0}^{N-1}1\Big]^{\frac{2}{3}}
  \geq \sum_{i=0}^{N-1}\E\Big[\big|\hat{\bm{X}}(t_i)\big|_6^6\Big]^{\frac{1}{3}} \Delta t_i
\label{2ndbound}
\end{equation}
Furthermore, for any coupled distribution $\pi(\bm{X}_N,
\hat{\bm{X}}_N)$ whose marginal distributions are $\mu_N$ and
$\hat{\mu}_N$, we have, by using the Cauchy inequality,

\begin{equation}
  \begin{aligned}
    2\sum_{i=0}^{N-1} & 
    \Big(\E\Big[\big|\hat{\bm{X}}(t_i)\big|_6^6\Delta t_i^3\Big]^{\frac{1}{6}}
    +\E\Big[\big|\hat{\bm{X}}(t_i)\big|_6^6\Delta t_i^3\Big]^{\frac{1}{6}}\Big)\cdot
    \E_{\pi(\bm{X}_N, \hat{\bm{X}}_N)}\Big[\big|\hat{\bm{X}}(t_i)
      - \bm{X}(t_i)|_2^2\Delta t_i\Big]^{\frac{1}{2}} \\
    \: \quad & \leq 2\E\Big[\sum_{i=0}^{N-1}1\Big]^{\frac{1}{3}}\cdot
    \bigg(\E\Big[\sum_{i=0}^{N-1}\big|\hat{\bm{X}}(t_i)\big|_6^6
      \Delta t_i^{3} \Big]^{\frac{1}{6}}  +\E\Big[\sum_{i=0}^{N-1}\big|\hat{\bm{X}}(t_i)\big|_6^6
      \Delta t_i^{3}\Big]^{\frac{1}{6}}\bigg)\\[-2pt]
    \: & \hspace{4.5cm}\times
    \E_{\pi(\bm{X}_N, \hat{\bm{X}}_N)}\Big[\sum_{i=0}^{N-1}\big|\hat{\bm{X}}(t_i)
      - \bm{X}(t_i)\big|_2^2\Delta t_i\Big]^{\frac{1}{2}}
  \end{aligned}
\end{equation}
Therefore, by taking the infimum over all coupling distributions
$\pi(\bm{X}_N, \hat{\bm{X}}_N)$, we conclude that

\begin{equation}
  \begin{aligned}
    2\sum_{i=0}^{N-1}& \Big(\E\Big[\big|\hat{\bm{X}}(t_i)\big|_6^6\Delta t_i^3\Big]^{\frac{1}{6}}
   + \E\Big[\big|\hat{\bm{X}}(t_i)\big|_{6}^6\Delta t_i^3\Big]^{\frac{1}{6}}\Big) W_2(\mu(t_i),
    \hat{\mu}(t_i)) \sqrt{\Delta t_i} \\
\: & \leq 2\sum_{i=0}^{N-1} \Big(\E\Big[\big|\hat{\bm{X}}(t_i)\big|_6^6\Delta t_i^3\Big]^{\frac{1}{6}}
+\E\Big[\big|\hat{\bm{X}}(t_i)\big|^6_6\Delta t_i^3\Big]^{\frac{1}{6}}\Big)
\inf_{\pi(\bm{X}_N, \hat{\bm{X}}_N)}\!\!\E_{\pi(\mu_N, \hat{\mu}_N)}
 \Big[\big|\hat{\bm{X}}(t_i)
   - \bm{X}(t_i)\big|_2^2\Delta t_i\Big]^{\frac{1}{2}}\\
 \: & \quad \leq \bigg(\E\Big[\sum_{i=0}^{N-1}\big|\hat{\bm{X}}(t_i)\big|_6^6
   \Delta t_i^{3}\Big]^{\frac{1}{6}}
 +\E\Big[\sum_{i=0}^{N-1}\big|\hat{\bm{X}}(t_i)\big|_6^6
   \Delta t_i^{3}\Big]^{\frac{1}{6}}\bigg)W_2(\mu_N, \hat{\mu}_N) N^{\frac{1}{3}}.
 \end{aligned}
  \label{generation_bound}
\end{equation}
After combining the five inequalities Eqs.~\eqref{1stbound},
\eqref{2ndbound}, ~\eqref{ineqco}, \eqref{ineqde}, and
\eqref{generation_bound}, we conclude that

\begin{equation}
     E_1(M_s)\geq C E_2(M_s) \frac{h(M_s, Nd)}{h(M_s, d)}N^{-\frac{2}{3}},
\end{equation}
where $C\coloneqq\min\Big\{\sqrt{\frac{C_0}{C_1}},
\frac{C_0}{C_1}\Big\}\cdot \max_{x\geq 1}\frac{h(x, 4)}{h(x, 5)}\leq
20\min\Big\{\sqrt{\frac{C_0}{C_1}}, \frac{C_0}{C_1}\Big\}$.

\section{Default training settings}
\label{training_details}
We list the training hyperparameters and gradient descent
methods for each example in Table~\ref{tab:setting}.
\begin{table}[h!]
\centering
\caption{Training settings for each example.} 
\begin{tabular}{lllll}
\toprule {Loss} & Example \ref{example1} & Example \ref{example2} &
Example \ref{example3} \\
\midrule
Gradient descent method & AdamW & AdamW & AdamW \\
Learning rate & 0.002 & 0.003 & 0.002 \\
Weight decay & 0.005 & 0.02 & 0.005 \\
No. of epochs & 1000 & 500 & 400 \\
No. of training trajectories $M_s$ & 100 & 400 & 300 \\
Hidden layers in $\Theta_1$ & 2 & 2 & $\backslash$ \\
Hidden layers in $\Theta_2$ & 2 & 2 & 3 \\
Hidden layers in $\Theta_3$ & 2 & 2 & 3 \\
Activation function & ReLu & ReLu & ReLu \\
Neurons in each layer in $\Theta_1$ & 150 & 150 &
$\backslash$ \\
Neurons in each layer in $\Theta_2$ & 150 & 150 & 400 \\
Neurons in each layer in $\Theta_3$ & 150 & 150 & 400 \\
$\Delta t$ & 0.2 & 0.1 & 0.2 \\
Number of timesteps $N$ &101 & 51 & 51\\
Initialization & \texttt{torch.nn} default & \texttt{torch.nn} default
& \multirow{2}{*}{} 0 for biases  \\ & & & $\mathcal{N}(0, 10^{-4})$
  for weights\\
repeat times & 10 & 5 & 5\\
\bottomrule
\end{tabular}
\label{tab:setting}
\end{table}

\section{Definitions of different loss metrics}
\label{def_loss}
Here, we provide definitions of loss functions used in our numerical
examples (the definitions of the MSE, mean$^2$+var, and the MMD loss
functions are the same as Appendix~E in \cite{xia2023a}). Since we are
using a uniform mesh grid ($t_{i+1}-t_i=\Delta t, \forall
i=0,...,N-1$), for simplicity, we shall omit $\Delta t$ in the
calculation of our loss functions:

\begin{compactenum}
\item The squared Wasserstein-2 distance
$$W_2^2(\mu_N, \hat{\mu}_N)\approx W_2^2(\mu_N^{\text{e}},
  \hat{\mu}_N^{\text{e}}),$$ where $\mu_N^{\text{e}}$ and
  $\hat{\mu}_N^{\text{e}}$ are the empirical distributions of the
  vector $(\bm{X}(t_0),...\bm{X}(t_{N-1}))$ and
  $(\hat{\bm{X}}(t_0),...,\hat{\bm{X}}(t_{N-1}))$, respectively. In
  numerical examples, we use the following scaled squared
  Wasserstein-2 distance:
\begin{equation}
\frac{1}{\Delta t}W_2^2(\mu_N^{\text{e}},
\hat{\mu}_N^{\text{e}})\approx\texttt{ot.emd2}\bigg(\frac{1}{M_s}\bm{I}_{M_s},
\frac{1}{M_s}\bm{I}_{M_s}, \bm{C}\bigg),
\label{time_coupling}
\end{equation}
where $\texttt{ot.emd2}$ is the function for solving the earth movers
distance problem in the $\texttt{ot}$ package of Python
\cite{flamary2021pot}, $M_s$ is the number of ground truth and
predicted trajectories, $\bm{I}_{\ell}$ is an $M_s$-dimensional vector
whose elements are all 1, and $\bm{C}\in\mathbb{R}^{M_s\times M_s}$ is
a matrix with entries $(\bm{C})_{ij} =
|\bm{X}_N^i-\hat{\bm{X}}^j_N|_2^2$. $|\cdot|_2$ is the 2-norm of a vector. $\bm{X}^i
_N$ is the vector of the values of the $i^{\text{th}}$ ground truth
trajectory at time points $t_0,...,t_{N-1}$, and $\hat{\bm{X}}^j _N$ is the
vector of the values of the $j^{\text{th}}$ predicted trajectory at
time points $t_0,...,t_{N-1}$.
\item The temporally decoupled squared Wasserstein-2 distance
  (Eq.~\eqref{temporal_calculation}). In numerical examples, we use the
  following scaled temporally decoupled squared Wasserstein-2
  distance:
  $$\frac{1}{\Delta t}\tilde{W}^2_2(\mu_N, \hat{\mu}_N) \approx \sum_{i=1}^{N-1}
  W^2_2(\mu^{\text{e}}(t_i),
  \hat{\mu}^{\text{e}}(t_i)),$$
where $\Delta t$ is the time step and $W_2$ is the Wasserstein-2
distance between two empirical distributions of $\bm{X}(t_i)$ and $\hat{\bm{X}}(t_i)$, denoted by $\mu^{\text{e}}(t_i),
\hat{\mu}^{\text{e}}(t_i)$, respectively.  These distributions are calculated by
the samples of the trajectories of $\bm{X}(t), \hat{\bm{X}}(t)$ at a given time
step $t=t_i$, respectively. $W^2_2(\mu_{N}^{\text{e}}(t_i),
\hat{\mu}^{\text{e}}_{N}(t_i))$ is calculated using the
$\texttt{ot.emd2}$ function, \textit{i.e.},

\begin{equation}
W_2^2(\mu^{\text{e}}(t_i),
\hat{\mu}^{\text{e}}(t_i))\approx\texttt{ot.emd2}\Big(\frac{1}{M_s}\bm{I}_{M_s},
\frac{1}{M_s}\bm{I}_{M_s}, \bm{C}(t_i)\Big),
\label{time_coupling}
\end{equation}
where $\bm{I}_{\ell}$ is an $M$-dimensional vector whose elements are
all 1, and $\bm{C}\in\mathbb{R}^{M_s\times M_s}$ is a matrix with
entries $(\bm{C})_{sj} = |\bm{X}^s(t_i)-\hat{\bm{X}}^j(t_i)|_2^2$. $\bm{X}^s(t_i)$ is the vector of values of the $s^{\text{th}}$ ground truth
trajectory at time $t_i$ and $\hat{\bm{X}}^s(t_i)$ is the vector of values of the $s^{\text{th}}$ 
trajectory generated by the reconstructed jump-diffusion process at time $t_i$.
\item The Wasserstein-1 distance
$$W_1(\mu_N, \hat{\mu}_N)\approx W_1(\mu_N^{\text{e}},
  \hat{\mu}_N^{\text{e}}),$$ where $\mu_N^{\text{e}}$ and
  $\hat{\mu}_N^{\text{e}}$ are the empirical distributions of the
  vector $(\bm{X}(t_0),...\bm{X}(t_{N-1}))$ and
  $(\hat{\bm{X}}(t_0),...,\hat{\bm{X}}(t_{N-1}))$, respectively. In
  numerical examples, we use the following scaled $W_1$ distance:
\begin{equation}
\frac{1}{\Delta t}W_1(\mu_N^{\text{e}},
\hat{\mu}_N^{\text{e}})\approx\texttt{ot.emd2}\bigg(\frac{1}{M_s}\bm{I}_{M_s},
\frac{1}{M_s}\bm{I}_{M_s}, \bm{C}\bigg),
\label{time_coupling}
\end{equation}
where $\texttt{ot.emd2}$ is the function for solving the earth movers
distance problem in the $\texttt{ot}$ package of Python, $M_s$ is the
number of ground truth and predicted trajectories, $\bm{I}_{\ell}$ is
an $M_s$-dimensional vector whose elements are all 1, and
$\bm{C}\in\mathbb{R}^{M_s\times M_s}$ is a matrix with entries
$(\bm{C})_{ij} = |\bm{X}_N^i-\hat{\bm{X}}^j_N|_2$. $\bm{X}^i_N$ is the vector of the
values of the $i^{\text{th}}$ ground truth trajectory at time points
$t_0,...,t_{N-1}$, and $\hat{\bm{X}}^j _N$ is the vector of the values of
the $j^{\text{th}}$ predicted trajectory at time points
$t_0,...,t_{N-1}$.
\item Mean squared error (MSE) between the trajectories, where $M_s$
  is the total number of the ground truth and predicted
  trajectories. $\bm{X}_{i, j}$ and $\hat{\bm{X}}_{i, j}$ are the values of the
  $j^{\text{th}}$ ground truth and prediction trajectories at time
  $t_i$, respectively:
$$\operatorname{MSE}(\bm{X}, \widehat{\bm{X}}) =
  \frac{1}{M_SN}\sum_{i=0}^{N-1} \sum_{j=1}^{M_s} (\bm{X}_{i, j}-\hat{\bm{X}}_{i,
    j})^2.$$
\item The summation of squared distance between mean trajectories and
  absolute values of the discrepancies in variances of trajectories,
  which is a common practice for estimating the parameters of an SDE.
  We shall denote this loss function by
  \begin{eqnarray*}
    (\operatorname{mean}^2+\operatorname{var})({ \bm{X}}, \hat{\bm{X}}) =
  \sum_{i=0}^{N-1} \left[\bigg(\frac{1}{M_s}\sum_{j=1}^{M_s}
    \big(\bm{X}_{i, j} - \hat{\bm{X}}_{i, j}\big)\bigg)^{2} + \left|
    \operatorname{var}(\bm{X}_i) - \operatorname{var}(\hat{\bm{X}}_i)
    \right|\right].
\end{eqnarray*}
Here $M_s$ and $\bm{X}_{i, j}$ and $\hat{\bm{X}}_{i, j}$ have the same meaning
as in the MSE definition. $\operatorname{var}(\bm{X}_i)$ and
$\operatorname{var} (\hat{\bm{X}}_i)$ are the variances of the empirical
distributions of $\bm{X}(t_i), \hat{\bm{X}}(t_i)$, respectively.
\item MMD (maximum mean discrepancy) In our numerical examples, we use
  the following MMD loss function \cite{li2015generative}:
$$
\text{MMD}(X, \hat{X}) = \sum_{i=1}^{N-1}\Big(\E\big[K(\bm{X}_i, \bm{X}_i)\big]
- 2\E\big[K(\bm{X}_i, \hat{\bm{X}}_i)\big] + \E\big[K(\hat{\bm{X}}_i, \hat{\bm{X}}_i)\big]\Big),
$$
where $K$ is the standard radial basis function (or Gaussian kernel)
with multiplier $2$ and number of kernels $5$. $\bm{X}_i$ and $\hat{\bm{X}}_i$
are the values of the ground truth and predicted trajectories at time
$t_i$, respectively.

\end{compactenum}

\section{Varying the coefficients that determine diffusion and jump functions}
\label{noise_strength}
Here, we consider changing the two parameters $\sigma_0, y_0$ in
Eq.~\eqref{example1_numerical} of Example~\ref{example1}. With larger
$\sigma_0, y_0$, the trajectories generated by
Eq.~\eqref{example1_numerical} will be subject to greater
fluctuations. We use the temporally squared $W_2$ distance as the loss
function. We vary $\sigma_0$ to range from 0.2 to 0.4 and vary $y_0$
from 0.5 to 1. We repeat our experiments 10 times, and we plot the
temporally squared $W_2$ distance as well as the errors of the
reconstructed $\hat{f}, \hat{\sigma}, \hat{\beta}$.
\begin{figure}[h!]
\centering
\includegraphics[width=\linewidth]{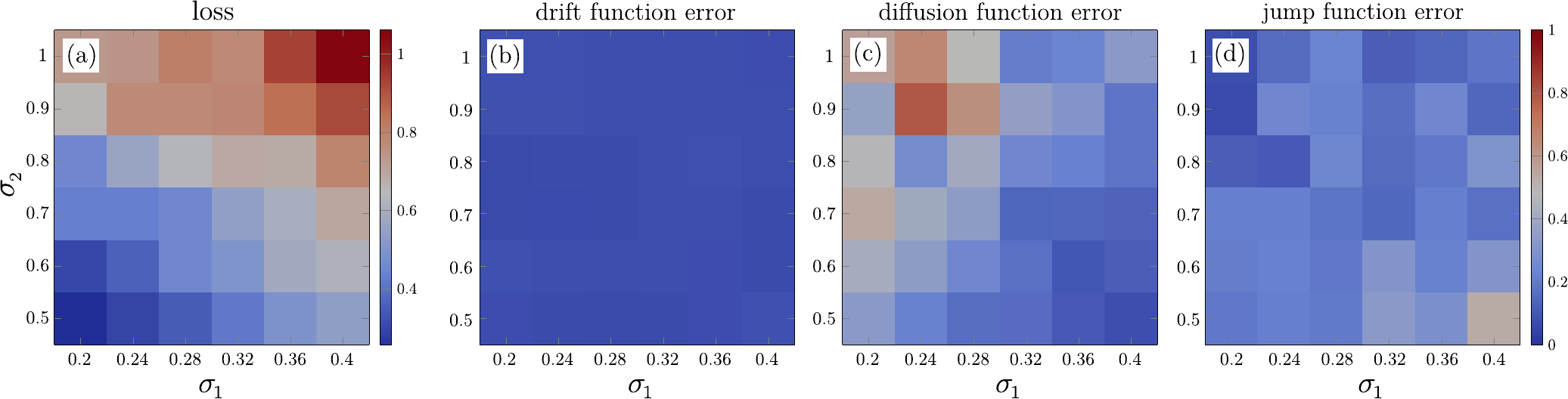}
\caption{(a) The temporally decoupled squared Wasserstein distance
$\tilde{W}_2^2(\mu_N, \hat{\mu}_N)$. (b) the average relative
errors in the reconstructed drift function $\hat{f}(x)$; (c) the
average relative errors in the reconstructed diffusion function
$\hat{\sigma}(x)$; (d) the average relative errors in the
reconstructed jump functions $\hat{\beta}(x)$.}
\label{fig:example1:noise_strength}
\end{figure}
From Fig.~\ref{fig:example1:noise_strength}(a), larger $\sigma_0, y_0$
lead to larger $\tilde{W}_2^2(\mu_N, \hat{\mu}_N)$. This could be
because larger $\sigma_0, y_0$ lead to ground truth trajectories with
larger fluctuations, rendering the underlying dynamics harder to
reconstruct. Fig.~\ref{fig:example1:noise_strength}(b) implies that
the drift function can be accurately reconstructed and is insensitive
to different $\sigma_0, y_0$.  As seen in
Fig.~\ref{fig:example1:noise_strength}(c), if $\sigma_0$ is small, the
relative error in the reconstructed diffusion function can be well
controlled around $0.1$; when $\sigma_0$ is larger, it is harder to
reconstruct the diffusion function and the relative error in the
reconstructed diffusion function $\hat{\sigma}$ will be
larger. Fig.~\ref{fig:example1:noise_strength}(d) shows that the
reconstruction of the jump function $\hat{\beta}$ is not very
sensitive to different values of $\sigma_0$ and $y_0$.

\section{Reconstructing Eq.~\eqref{example2_model}
in Example~\ref{example2} with different numbers of trajectories in
the training set}
\label{num_traj}

Here, we carry out an additional numerical experiment of
reconstructing Eq.~\eqref{example2_model} by changing the number of
trajectories in the training set. We define the ground truth
jump-diffusion process by the drift function $\alpha(X, t)\coloneqq
r$, and the diffusion function and jump functions $\sigma(X,
t)=\beta(X, t)=0.1\sqrt{|X|}$.  We consider four scenarios: i)
provide no prior information and reconstruct drift, diffusion, and
jump functions, ii) provide the drift function $\alpha(X, t)$ as
prior information and reconstruct the diffusion and jump functions,
iii) provide the diffusion function $\sigma(X, t)$ as prior
information and reconstruct the drift and jump functions, and iv)
provide the jump function $\beta(X, t)$ as prior information and
reconstruct the drift and diffusion functions.

\begin{figure}[h!]
   \centering
    \includegraphics[width=0.92\linewidth]{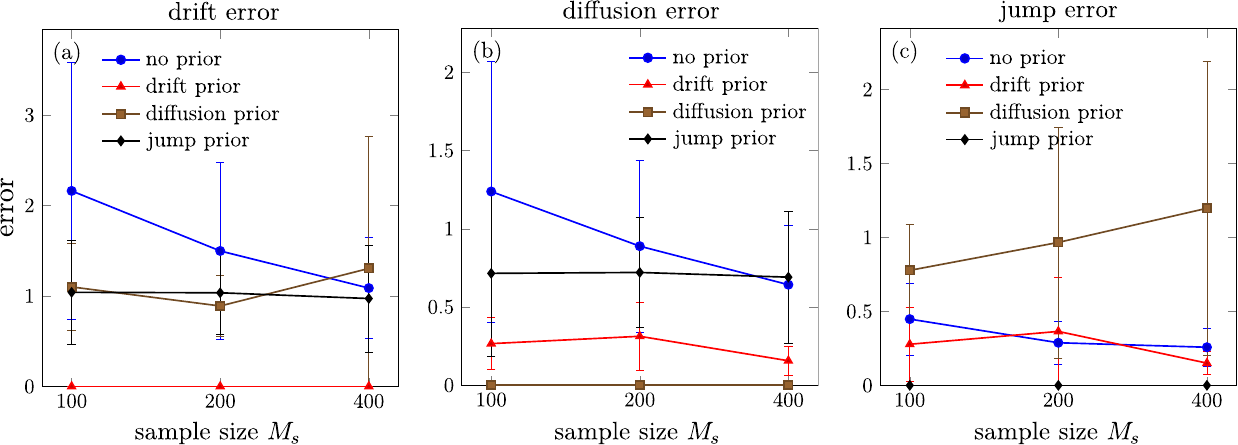}
    \caption{The reconstruction errors in the drift, diffusion, and
      jump functions defined in Eqs. \ref{diffusion_error} and
      \ref{jump_error} as a function of the number of trajectories
      $M_s$ when different prior information is provided.  The results
      are averaged over 5 independent experiments. Training
      hyperparameters are the same as those used in
      Example~\ref{example2} listed in Table~\ref{tab:setting}.}
    \label{fig:example2_appendix3}
\end{figure}

As seen in Figs.~\ref{fig:example2_appendix3}(b-c), providing the
drift function as prior information greatly boosts the efficiency of
our temporally decoupled squared $W_2$ method allowing it to
accurately reconstructing the unknown diffusion and jump functions
even with as few as 100 trajectories for training.  Also, even with no
prior information, the errors in the reconstructed drift, diffusion,
or jump function decrease when the number of trajectories in the
training set increases (Figs.~\ref{fig:example2_appendix3}(a-c)). This
indicates that even without prior information, our temporally
decoupled squared $W_2$ method can accurately reconstruct
Eq.~\eqref{example2_model} when provided a sufficient number of training
trajectories.

When the diffusion or jump function is given as prior information, the
errors of the reconstructed unknown functions do not decrease much as
the number of trajectories for training $M_s$ increases.  Even with the
correct diffusion or jump function, different realizations of the
Brownian motion or the compensated Poisson process yield very different
trajectories so that providing the diffusion or jump function may
provide little information in discriminating trajectories.


\section{Reconstructing Eq.~\eqref{example2_model}
  in Example~\ref{example2} with different parameters in the diffusion
  and jump functions when providing the drift function}
\label{varying_coef}
Here, given the drift function $\alpha(X, t)\coloneqq r$, we carry
out an additional numerical experiment of reconstructing
Eq.~\eqref{example2_model} by varying the parameters $\sigma_0, \beta_0$
that determine the strength of the Brownian-type and
compensated-Poisson-type noise in Eqs.~\eqref{constant_2},
~\eqref{linear_2}, and ~\eqref{langevin_2}.

\begin{figure}[h!]
    \centering
    \includegraphics[width=\linewidth]{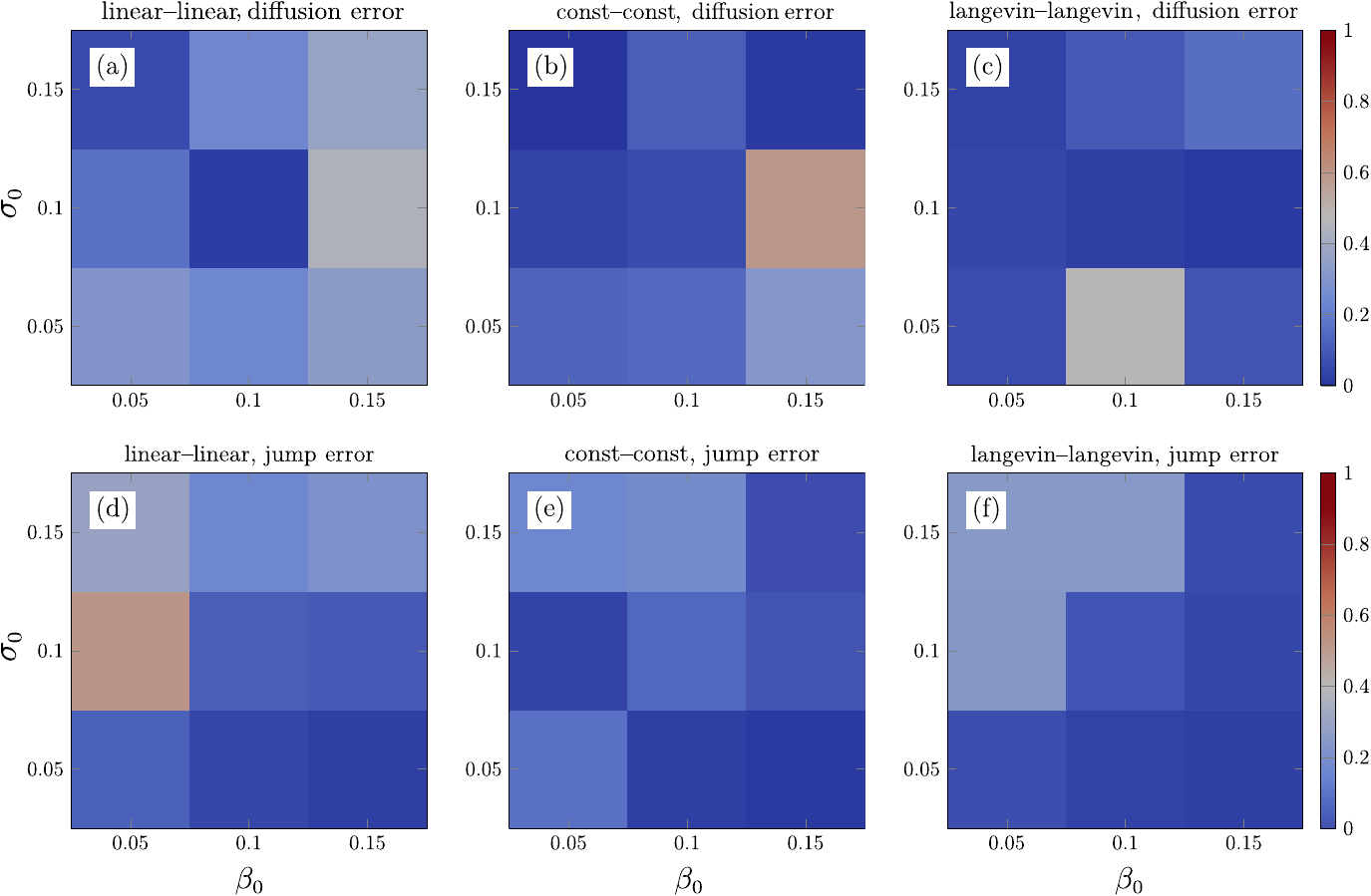}
    \caption{The reconstruction errors in the diffusion, and jump
      functions defined in Eqs. \ref{diffusion_error} and
      \ref{jump_error} w.r.t. the two parameters that determine the
      strength of noise $\sigma_0$ and $\beta_0$ in
      Eqs.~\eqref{constant_2}, ~\eqref{linear_2}, and
      ~\eqref{langevin_2}. Here, const-const indicates we are using
      Eq.~\eqref{constant_2} for both the diffusion and jump functions;
      linear-linear indicates we are using Eq.~\eqref{linear_2} for both
      the diffusion and jump functions; langevin-langevin indicates we
      are using Eq.~\eqref{langevin_2} for both the diffusion and jump
      functions. The results are averaged over 5 independent
      experiments. Training hyperparameters are the same as
      Example~\ref{example2} in Table~\ref{tab:setting}.}
    \label{fig:example2_appendix2}
\end{figure}

Fig.~\ref{fig:example2_appendix2} shows our temporally decoupled
squared $W_2$-distance loss function can be used to accurately
reconstruct the diffusion function and the jump function $\sigma(X,
t), \beta(X, t)$ in Eq.~\eqref{example2_model},  even when different
forms of $\sigma(X, t), \beta(X, t)$ in Eqs.~\eqref{constant_2},
~\eqref{linear_2}, and ~\eqref{langevin_2} and different noise strengths
$\sigma_0, \beta_0$ are given. The average errors (averaged over all
choices of $\sigma_0, \beta_0$) in the reconstructed diffusion function
$\hat{\sigma}$ are 0.171 (const-const), 0.217 (linear-linear), and
0.176 (langevin-langevin).  The average errors (averaged over all
choices of $\sigma_0, \beta_0$) in the reconstructed jump function
$\hat{\sigma}$ are 0.173 (const-const), 0.188 (linear-linear), and
0.184 (langevin-langevin).



\section{Neural Network Architecture}
\label{number_layers}

Here, we investigate how the neural network architecture,
\textit{i.e.}, the number of hidden layers and the number of neurons
in each layer, influence the accuracy of reconstructing the 2D
jump-diffusion process (Eq.~\eqref{example3_model}).  We vary only the
number of hidden layers and the number of neurons per layer for the
parameterized neural networks that we use to approximate the diffusion
and jump functions $\bm{\sigma}$ and $\bm{\beta}$ in
Eq.~\eqref{example3_model}. We set the parameters to be $c_1=-0.5,
c_2=-1$ and $\sigma_0=\beta_0=0.1$ in Eqs.~\eqref{sigma2d}
and~\eqref{beta2d}, and consider 200 trajectories.

\begin{table}[h!]
\centering
  \caption{Reconstructing the jump-diffusion process
    Eq.~\eqref{example3_model} when using neural networks with different
    numbers of hidden layers and neurons per layer to parameterize
    $\hat{f}, \hat{\sigma}$. Other training hyperparameters are the
    same as those used in Table~\ref{tab:setting} of
    Example~\ref{example3}.}
\begin{tabular}{rccllc}
\toprule  Width & Layer & Relative Errors in $\hat{\bm{\sigma}}$ & Relative
Errors in $\hat{\bm{\beta}}$ & $N_{\rm repeats}$ \\ 
\midrule 
  25 & 3 & \(0.6836(\pm 0.5177) \) & \( 0.5554 (\pm 0.4024) \) & 5 \\ 
  50 & 3 & \( 0.8051 (\pm 0.4756) \) & \( 0.7413 (\pm 0.3515) \) & 5 \\ 
100 & 3 & \( 0.6376 (\pm 0.3261) \) & \( 0.5085 (\pm 0.2841) \) & 5 \\ 
200 & 3 & \( 0.6101 (\pm 0.2435) \) & \( 0.5280 (\pm 0.2038) \) & 5\\ 
400 & 3 & \( 0.2619 (\pm 0.1859) \) & \( 0.2837 (\pm 0.1961) \) & 5\\ 
 200 & 1 & \( 0.7143 (\pm 0.8451) \) & \( 0.6178 (\pm 0.2925) \)
& 5 \\ 
200 & 2 & \( 0.6984 (\pm 0.4989) \) & \( 0.6326 (\pm
0.4445) \) & 5 \\ 
 200 & 4 & \( 0.7605\pm 0.3837) \) & \( 0.6750\pm 0.2761) \)  & 5\\
\bottomrule
\end{tabular}
\label{tab:example3}
\end{table}

From Table~\ref{tab:example3}, we see that increasing the number of
hidden layers and increasing the number of neurons per hidden layer
can both increase the accuracy of the reconstructed
$\hat{\bm{\sigma}}$ and $\hat{\bm{\beta}}$. However, with a fixed
number of neurons per hidden layer (200), when the number of hidden
layers in the feed-forward neural network is greater than 3, the
errors in the reconstructed $\bm{\sigma}$ and $\bm{\beta}$
increase. This behavior may be due to vanishing gradients during
training of deep neural networks \cite{glorot2010understanding}; in
this case, the ResNet technique \cite{he2016deep} can be considered if
deep neural networks are used.  On the other hand, using a deeper or
wider network requires more memory usage and longer run times. For
reconstructing Eq.~\eqref{example3_model}, we find an optimal neural
network architecture consisting of about three hidden layers
containing $\sim 400$ neurons each. How optimal architectures evolve
when reconstructing different multidimensional jump-diffusion
processes requires further exploration.

\end{document}